\documentclass[11pt, letterpaper]{article}  

\usepackage[left=1in, right=1in, top=1in, bottom=1in]{geometry}
\usepackage[utf8]{inputenc}
\usepackage[T1]{fontenc}
\usepackage{bm}
\usepackage{type1cm}
\usepackage{lettrine}
\usepackage{moreverb}
\usepackage{mathtools}
\usepackage{amsmath,amssymb,amsfonts}%
\usepackage{amsthm}%
\usepackage{algorithm}
\usepackage{algorithmicx}
\usepackage{algpseudocode}
\usepackage{graphics}
\usepackage{graphicx}
\usepackage{subfigure}
\usepackage{caption}
\usepackage{extarrows}
\usepackage{color}
\usepackage{framed}
\usepackage{wrapfig}
\usepackage{bm}
\usepackage{mathrsfs}
\usepackage{mathabx}
\usepackage{multirow}
\usepackage{longtable}
\usepackage{hyperref}
\usepackage{paralist}
\usepackage{indentfirst}
\usepackage{relsize}
\usepackage{extarrows}
\usepackage{upgreek}
\usepackage{bm}
\usepackage{mwe}
\usepackage[dvipsnames,table]{xcolor}
\usepackage{booktabs}
\usepackage{authblk}
\usepackage{lettrine}
\usepackage{type1cm}
\usepackage{threeparttable}
\usepackage[sort&compress,numbers]{natbib}
\usepackage[figurename=Figure]{caption}
\usepackage[normalem]{ulem}

\usepackage{amsthm}

\theoremstyle{plain}
\newtheorem{theorem}{Theorem}

\theoremstyle{definition}

\theoremstyle{remark}

\graphicspath{ {./Figure/} }
\usepackage[font=footnotesize,labelfont=bf]{caption}

\providecommand{\keywords}[1]{\textbf{\textit{Keywords: }} #1}

\hypersetup{
bookmarks=true,
bookmarksopen=true,
bookmarksnumbered=true,
unicode=false,
pdftoolbar=true,
pdfmenubar=true,
pdffitwindow=false,
pdfstartview={FitH},
pdftitle={My title},
pdfauthor={Author},
pdfsubject={Subject},
pdfcreator={Creator},
pdfproducer={Producer},
pdfkeywords={keywords},
pdfnewwindow=true,
colorlinks=true,
linkcolor=blue,
citecolor=blue,
filecolor=blue,
urlcolor=blue
}

\begin{document}

\title{\textbf{Optimization and Generation in Aerodynamics Inverse Design}}

\author[1$\dagger$]{Huaguan Chen}
\author[1$\dagger$]{Ning Lin}
\author[1$\dagger$]{Luxi Chen}
\author[]{Jiacheng Cen}
\author[]{Rui Zhang}
\author[]{Wenbing Huang}
\author[]{Chongxuan Li}
\author[1$*$]{Hao Sun\vspace{6pt}}

\affil[]{\small Gaoling School of Artificial Intelligence, Renmin University of China, Beijing, China\vspace{18pt}}
\affil[$^\dagger$]{Equal contribution}
\affil[*]{Corresponding authors}

\date{}

\maketitle

\normalsize

\vspace{-12pt}

\begin{abstract}
	\small 
Aerodynamic inverse design is important for improving the efficiency of vehicles and aircraft, but practical design rarely seeks performance alone. In vehicle development, aerodynamic refinement should reduce drag while preserving visual features, because these features are closely tied to design language, brand recognition and user perception. Traditional CFD-driven optimization can improve aerodynamic objectives, but repeated flow solves, mesh updates and geometry correction make it too slow for broad design exploration. Deep learning offers a faster route, yet two issues remain unresolved: existing methods are still mainly driven by aerodynamic performance and lack an explicit mechanism for preserving visual features during optimization; and learning-based inverse design still lacks a coherent theoretical target linking optimization, generation and visual consistency. We address these limitations by formulating visual preservation and aerodynamic improvement as one probability target. Designs visually consistent with a reference shape or view define a learned visual design distribution, and aerodynamic cost reweights this distribution towards higher-performance regions. Under this cost-biased distribution, optimization performs gradient-based refinement from an initial geometry, whereas guided generation samples lower-cost 3D candidates from the same input view. 
OpenFOAM evaluation shows that visual-feature-preserving optimization reduces vehicle drag by 5.8\% relative to the initial vehicle and reduces the best valid aircraft drag-to-lift objective by 28.8\% relative to the initial aircraft, while preserving the input visual features. For view-based guided generation, the framework reduces vehicle drag by 3.0\% and the aircraft drag-to-lift objective by 68.6\% relative to direct generation from the same view, while maintaining visual consistency with the input concept. Wind-tunnel tests with 3D-printed vehicle prototypes provide an independent wake-level check, and controlled analyses explain the distributional mechanisms behind these improvements. We hope this work provides a theoretical foundation and practical route for visual-feature-preserving aerodynamic refinement and early-stage 3D design exploration.
\end{abstract}

\keywords{Aerodynamic inverse design, Guided generation, Optimization}

\vspace{12pt} 

\section*{Introduction}

Aerodynamic performance is a central factor in the efficiency, range and usability of modern transportation systems. For road vehicles, aerodynamic drag becomes a major source of resistance at highway speeds and directly affects energy consumption, driving range and wind noise \cite{hucho2013aerodynamics,national2015cost}. For aircraft, lift and drag are among the primary quantities that determine cruise efficiency, payload capability and operational range \cite{anderson2011ebook,raymer2012aircraft}. Because vehicles and aircraft are deployed at large scale and operated over long lifetimes, even modest aerodynamic improvements can accumulate into substantial engineering, economic and environmental benefits \cite{national2015cost}. Aerodynamic design is therefore not only a classical problem in fluid mechanics, but also a practical route towards more energy-efficient transportation.

Finding aerodynamically optimal shapes is already a challenging high-dimensional problem \cite{martins2022aerodynamic}. Yet in practical vehicle design, the objective is rarely to pursue the lowest possible drag at the cost of altering the vehicle's visual character. Preserving the exterior appearance is often an equally, if not more, important design goal, because vehicle appearance is closely tied to brand identity, design lineage, market positioning, and users' visual preferences \cite{ranscombe2011characterizing}. In many real-world design processes, manufacturers may even accept a certain aerodynamic penalty in order to maintain a recognizable styling language. Therefore, after the overall design language has been established, aerodynamic refinement should not freely replace the original shape with a completely different optimum. A useful inverse-design method should instead improve aerodynamic performance while preserving the visual features and stylistic consistency of the original design.

This setting leads to two complementary tasks (Figure \ref{fig:overall}\textbf{a}). The first is visual-feature-preserving aerodynamic optimization. Given an existing 3D geometry, the goal is to introduce local and plausible modifications that improve an aerodynamic objective, such as drag coefficient or drag-to-lift ratio, while keeping the optimized shape visually close to the original design. This corresponds to late-stage aerodynamic refinement, where performance improvements should not disrupt the established design language. The second task is view-based aerodynamic generation. At the early design stage, vehicle concepts are often explored through renderings or 2D views before a complete three-dimensional model is constructed. A useful generative system should therefore take such a view as input and produce multiple plausible three-dimensional candidates that remain visually consistent with the concept while being biased towards improved aerodynamic performance.

Traditional aerodynamic shape optimization has provided powerful tools for improving engineering designs, but it is not well suited to broad and rapid design exploration \cite{martins2022aerodynamic,li2022machine}. Although adjoint methods and surrogate models can reduce the number of required evaluations \cite{martins2022aerodynamic,papadimitriou2008aerodynamic,queipo2005surrogate}, practical optimization still often involves repeated flow solutions, mesh updates and geometry correction. A single CFD-driven optimization campaign can require on the order of \(10^{2}\)--\(10^{5}\) CPU-hours \cite{elrefaie2024drivaernet,tran2024aerodynamics,aultman2021effects}, making it difficult to assess many candidate shapes within early-stage design cycles \cite{he2018aerodynamic,li2022machine}. In addition, many established workflows are local by construction: they improve a design within a prescribed parameterization around an existing geometry, but do not naturally produce diverse, visually consistent alternatives across a broader design space.

Deep learning provides a highly promising route for aerodynamic inverse design. Compared with traditional CFD-driven optimization, it can reduce the design cost by orders of magnitude, requiring less than \(0.1\) GPU-hours per candidate design in our implementation and enabling hundreds-fold acceleration (Figure~\ref{fig:overall}\textbf{b}).
However, current progress still leaves two key issues unresolved. First, visual-feature-preserving aerodynamic optimization remains largely unsolved. The challenge lies not only in how to quantify visual consistency, but also in how to introduce visual gradients into the optimization process. Existing methods are still mainly driven by aerodynamic performance \cite{tran2024aerodynamics,vatani2025tripoptimizer,hao3did,you2025physgen}, and therefore lack an explicit mechanism for preserving the input visual features. Second, despite the promise of deep learning, its use in aerodynamic inverse design is still at an early methodological stage. Existing formulations are often based on empirical design choices, while a coherent theoretical framework and a corresponding theory-guided implementation paradigm have not yet been established.

Here we address these two limitations from a probabilistic perspective. We treat the set of designs that are visually consistent with a reference shape or view as a learned visual design distribution, and then bias this distribution by aerodynamic cost. The result is a cost-biased distribution: designs have high probability only when they remain tied to the input visual features and have lower aerodynamic cost. Under this view, visual-feature-preserving optimization and view-based generation are not two separate problems. Optimization seeks a high-probability, low-cost point in this distribution, whereas generation samples multiple low-cost candidates from the same distribution. This gives learning-based aerodynamic inverse design a single theoretical target, while making visual preservation part of the target rather than a post-processing check.

This probability target is useful because it turns the design requirement into something the model can act on. A cost predictor alone can push a shape towards lower predicted drag or drag-to-lift ratio, but it has no reason to preserve the input visual features. The learned visual distribution supplies the missing counter-direction: it favours designs that stay close to the reference visual features, while the aerodynamic cost favours improved performance.
The framework therefore gives a principled way to balance the two forces that practical design needs. It also links late-stage refinement and early-stage view-to-3D exploration in one theory, instead of building separate empirical rules for optimization and generation.

We validate the framework on the two design tasks introduced above. For visual-feature-preserving optimization, OpenFOAM \cite{jasak2007openfoam} evaluation shows that the method reduces vehicle drag by 5.8\% while retaining the recognizable vehicle appearance, and reduces the best valid aircraft drag-to-lift objective by 28.8\% while avoiding the negative-lift failures produced by cost-only optimization. For view-based guided generation, OpenFOAM evaluation shows that the proposed guidance strategy reduces vehicle drag by 3.0\% and the aircraft drag-to-lift objective by 68.6\% relative to direct generation from the same input view, showing that the same probabilistic target supports both late-stage refinement and early-stage view-to-3D exploration. Wind-tunnel tests with 3D-printed vehicle prototypes provide an independent physical check at the wake-visualization level. Controlled two-dimensional experiments and parameter sweeps further analyse the mechanism and robustness of the framework. Finally, \textcolor{blue}{Supplementary Note~\ref{sec:si_offline_rl}} extends the same guidance principle to offline reinforcement learning, where it improves performance, stability and runtime over competing guidance schemes.

\section*{Unified distributional view of aerodynamic inverse design}

\begin{figure}[t!]
  \centering
   \includegraphics[width=0.99\linewidth]{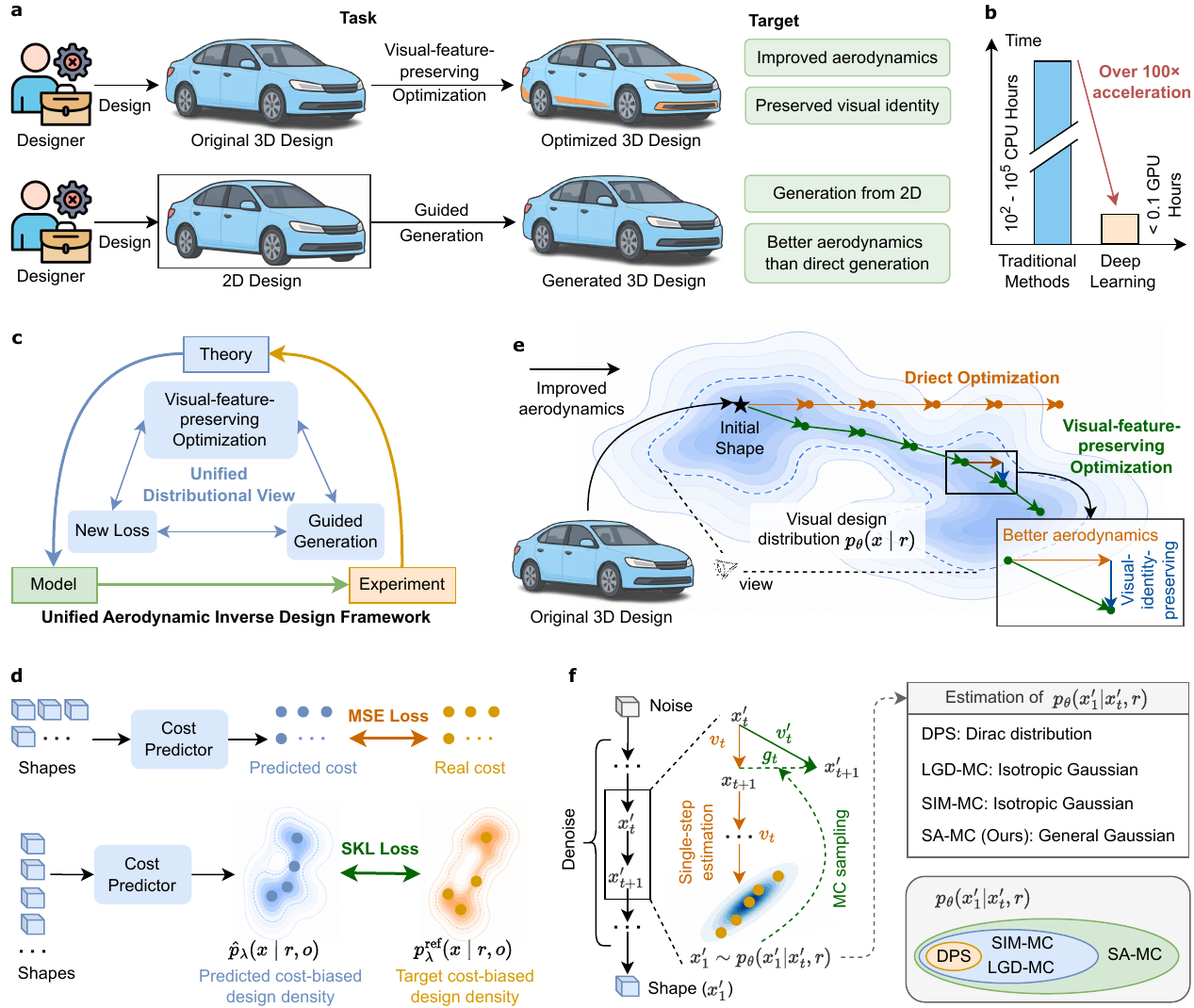}
\caption{\textbf{Unified optimization and generation framework for aerodynamic inverse design.}
\textbf{a}, Overview of the two target design tasks. For a given 3D vehicle, visual-feature-preserving optimization improves aerodynamic performance while keeping the overall appearance nearly unchanged. For an early-stage 2D design view, guided generation produces a plausible 3D design with better aerodynamic properties than direct generation.
\textbf{b}, Computational advantage of the proposed framework. Compared with conventional CFD-based optimization, the learning-based pipeline enables rapid design exploration and achieves over 100-fold acceleration.
\textbf{c}, Conceptual structure of the proposed method. A unified distributional view connects the theoretical formulation, the learning model and experimental validation, covering the new loss, visual-feature-preserving optimization and guided generation.
\textbf{d}, Visual-feature-preserving optimization in the learned design-space probability density. Direct optimization may improve the aerodynamic objective but can move the design away from visually consistent regions. The proposed visual-feature-preserving optimization instead follows directions that improve aerodynamics while preserving the learned visual design distribution.
\textbf{e}, Training objective for the aerodynamic cost predictor. Standard MSE matches individual predicted and real costs, whereas the proposed SKL loss aligns the estimated and real cost-induced distributions, making the predictor better suited for inverse design.
\textbf{f}, Covariance-aware guided generation. Existing training-free guidance methods rely on simplified conditional-distribution approximations, such as a Dirac distribution or isotropic Gaussian. The proposed SA-MC method efficiently estimates a general Gaussian approximation, enabling more accurate guidance for high-dimensional 3D shape generation.}
   \vspace{0pt} 
   \label{fig:overall}
\end{figure}

To address these limitations, we formulate aerodynamic inverse design under a unified probabilistic perspective (Figure~\ref{fig:overall}\textbf{c}), where visual-feature-preserving optimization and guided generation are treated as two instances of searching for high-performance designs under a learned design prior. The complete probabilistic view is shown in Extended Data Figure~\ref{fig:probability}. Let $r$ denote the visual representation of the reference design, $x$ denote a design representation, $p_{\mathrm{data}}(x\mid r)$ the distribution of designs visually consistent with $r$, and $c(x,o)$ the aerodynamic cost under operating condition $o$. The ideal inverse-design target can then be written as a cost-biased design distribution,
\begin{equation}
    p_\lambda(x\mid r,o)
    ~ \propto ~
    p_{\mathrm{data}}(x\mid r)
    \exp\left[-\lambda c(x,o)\right],
    \label{eq:cost_biased_design_distribution}
\end{equation}
where $\lambda$ controls the strength of aerodynamic preference. High-probability regions of this conditional distribution correspond to designs that preserve the visual representation $r$ while achieving favourable aerodynamic performance under the operating condition $o$. This provides a common formulation for the two tasks introduced above: visual-feature-preserving optimization seeks a high-density design of $p_\lambda(x\mid r,o)$, whereas guided generation samples diverse candidates from $p_\lambda(x\mid r,o)$.

In practice, the visual-conditional design distribution is represented by a learned prior $p_\theta(x\mid r)$, and the aerodynamic cost is approximated by a predictor $\hat{c}_\phi(x,o)$. The learned target distribution used by our framework is therefore
\begin{equation}
    \hat{p}_\lambda(x\mid r,o)
    ~ \propto ~
    p_\theta(x\mid r)
    \exp\left[-\lambda \hat{c}_\phi(x,o)\right].
    \label{eq:learned_cost_biased_distribution}
\end{equation}
Here, the learned prior constrains the search to visually consistent design regions, while the exponential cost term tilts this prior towards aerodynamically favourable regions. \textcolor{blue}{Supplementary Note~\ref{sec:si_derivations}} presents the formal statements and proofs of all key theoretical results supporting this work.

\paragraph{Distributional training of the cost predictor.}
The same distributional formulation motivates how the aerodynamic cost predictor should be trained. Since visual-feature-preserving optimization seeks a high-probability-density point of $\hat{p}_\lambda(x\mid r,o)$ and guided generation samples candidates from $\hat{p}_\lambda(x\mid r,o)$, the predictor should recover the distribution induced by the aerodynamic cost rather than only match cost values pointwise (Figure~\ref{fig:overall}\textbf{d}).

Under the learned visual prior, the simulated cost induces the reference tilted distribution
\begin{equation}
    p^{\mathrm{ref}}_\lambda(x\mid r,o)
    ~ \propto ~
    p_\theta(x\mid r)
    \exp\left[-\lambda c(x,o)\right],
    \label{eq:simulated_tilted_distribution}
\end{equation}
whereas the predicted cost induces $\hat{p}_\lambda(x\mid r,o)$ in equation~\eqref{eq:learned_cost_biased_distribution}. We train the predictor with a symmetric-KL-based objective,
\begin{equation}
    \mathcal{L}_{\mathrm{SKL}}
    =
    D_{\mathrm{KL}}
    \left(
    p^{\mathrm{ref}}_\lambda(x\mid r,o)
    \,\|\, 
    \hat{p}_\lambda(x\mid r,o)
    \right)
    +
    D_{\mathrm{KL}}
    \left(
    \hat{p}_\lambda(x\mid r,o)
    \,\|\,
    p^{\mathrm{ref}}_\lambda(x\mid r,o)
    \right).
    \label{eq:skl_distribution_alignment}
\end{equation}
This shifts the role of the predictor from pointwise regression to distributional alignment, making it better matched to the probability landscape actually used by downstream search and sampling. The theoretical connection and distinction between the SKL loss and the MSE loss are discussed in \textcolor{blue}{Supplementary Note~\ref{subsec:sKL-mse}}.

\paragraph{Visual-feature-preserving aerodynamic optimization.}
For visual-feature-preserving optimization, the goal is to find a high-density design of the learned target distribution rather than to minimize aerodynamic cost alone (Extended Data Figure~\ref{fig:probability} \textbf{b}). This can be written as
\begin{equation}
    x^\star
    =
    \arg\min_x
    \left[
    -\log \hat{p}_\lambda(x\mid r,o)
    \right].
    \label{eq:shape_preserving_objective}
\end{equation}
Using the definition of $\hat{p}_\lambda(x\mid r,o)$, its gradient is
\begin{equation}
    \nabla_x \log \hat{p}_\lambda(x\mid r,o)
    =
    \nabla_x \log p_\theta(x\mid r)
    -
    \lambda \nabla_x \hat{c}_\phi(x,o).
    \label{eq:target_density_gradient}
\end{equation}
Therefore, minimizing $-\log \hat{p}_\lambda(x\mid r,o)$ moves the design along two coupled directions: increasing the conditional density of visually consistent designs and decreasing the aerodynamic cost.

This view clarifies why direct cost minimization is insufficient for practical aerodynamic refinement. The second term in equation~\eqref{eq:target_density_gradient} is the aerodynamic cost gradient, which drives the design towards lower predicted cost. The first term is the conditional density gradient, which encourages the design to remain in regions that are visually consistent with the reference representation $r$. A direct optimizer that follows only $\nabla_x \hat{c}_\phi(x,o)$ may reduce the predicted aerodynamic cost, but can also move the design away from the visual features of the reference design. In contrast, visual-feature-preserving optimization follows the two coupled directions in Figure~\ref{fig:overall}\textbf{e}: a cost gradient for aerodynamic improvement and a density gradient for visual consistency. The resulting trajectory favours local, visually consistent refinement rather than unconstrained movement towards a purely aerodynamic optimum.

\paragraph{Guided aerodynamic generation.}
Let $x'_1$ denote the final clean design produced by aerodynamic guidance. The goal of guided generation is to make this guided output follow the learned cost-biased distribution,
\begin{equation}
    x'_1 \sim \hat{p}_\lambda(x'_1\mid r,o)
    ~ \propto ~
    p_\theta(x'_1\mid r)
    \exp\left[-\lambda \hat{c}_\phi(x'_1,o)\right],
    \label{eq:generation_target_distribution}
\end{equation}
where the prime denotes the guided trajectory. To see why guidance is needed, first consider the original flow without aerodynamic guidance. Generation starts from a source sample $x_0$, typically drawn from a simple noise distribution, and evolves through intermediate states $x_t$ with $t\in[0,1]$ under the learned velocity field $v_t(x_t\mid r)$ until it reaches the clean design $x_1$. Following this original velocity field samples the visual-conditional design prior,
\begin{equation}
    x_1 \sim p_\theta(x_1\mid r).
    \label{eq:unguided_generation_prior}
\end{equation}
This direct sample is consistent with the input representation $r$, but it is not biased towards lower aerodynamic cost. To obtain $x'_1$ from the target distribution in equation~\eqref{eq:generation_target_distribution}, we instead evolve a guided trajectory $x'_t$ using a guided velocity field $v'_t(x'_t\mid r,o)$ (Extended Data Figure~\ref{fig:probability}\textbf{c}).

The guidance term is defined as the correction from the original velocity field to the guided one,
\begin{equation}
    g_t(x'_t,r,o)
    :=
    v'_t(x'_t\mid r,o)
    -
    v_t(x'_t\mid r)
    ~ \propto ~
    \nabla_{x'_t}
    \log
    \mathbb{E}_{p_\theta(x_1\mid x'_t,r)}
    \left[
    \exp\left(-\lambda \hat{c}_\phi(x_1,o)\right)
    \right].
    \label{eq:guidance_correction}
\end{equation}
Here, $p_\theta(x_1\mid x'_t,r)$ denotes the distribution of possible unguided clean completions reachable from the current guided state $x'_t$ under the original flow. This guidance evaluates the expected aerodynamic preference of those possible clean completions and steers the guided trajectory towards regions whose future clean designs are likely to have lower cost. 
The derivation of equation (\ref{eq:guidance_correction}) follows the flow-matching guidance result \cite{feng2025guidance}.

The remaining challenge is how to estimate the conditional clean-design distribution $p_\theta(x_1\mid x'_t,r)$ in equation~\eqref{eq:guidance_correction}. Existing training-free guidance methods mainly differ in this approximation (Figure~\ref{fig:overall}\textbf{f}). DPS~\cite{chung2022diffusion} uses a single-point approximation, whereas LGD-MC~\cite{song2023loss} and SIM-MC~\cite{feng2025guidance} use an isotropic Gaussian approximation. However, these approximations do not provide the optimal Gaussian match. The optimal Gaussian approximation should match both the conditional mean and covariance~\cite{boystweedie}. Computing the covariance directly, however, requires a high-dimensional Jacobian, which is prohibitively expensive for general high-dimensional generation and has so far only been made practical for linear inverse problems with sparse operators~\cite{boystweedie}. For general problems, including 3D shape generation, this remains difficult. We therefore introduce SA-MC (Secant-Approximation Monte Carlo), which provides a practical general-Gaussian approximation through Monte Carlo sampling and efficient covariance estimation, enabling covariance-aware guidance for high-dimensional aerodynamic design. 
This summary is based on the unified theoretical perspective developed in \textcolor{blue}{Supplementary Note~\ref{subsec:si_guidance_statement}}, which shows that the essential differences among these guidance methods lie in their approximations to \(p_\theta(x_1\mid x'_t,r)\).

\section*{Latent-space implementation of the framework}

\begin{figure}[t!]
\centering
\includegraphics[width=0.99\linewidth]{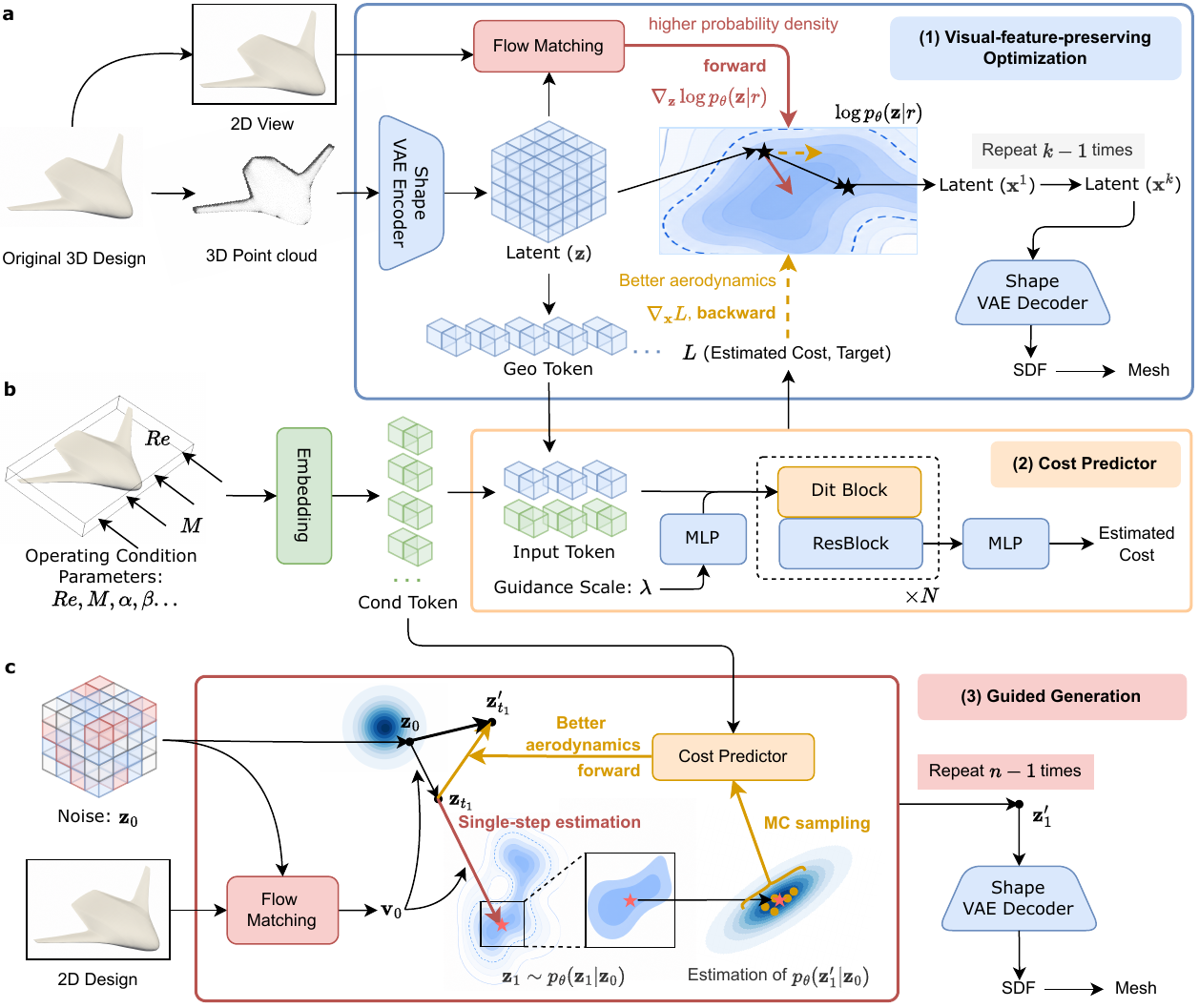}
\caption{\textbf{Latent-space implementation of visual-feature-preserving optimization and guided generation.}
\textbf{a}, Latent implementation of visual-feature-preserving optimization. An input 3D design is represented by a point cloud and encoded into a latent variable $\mathbf{z}$ by the Shape-VAE encoder. The pretrained flow-matching prior provides the visual-density direction $\nabla_{\mathbf{z}}\log p_\theta(\mathbf{z}\mid r)$ through its forward process, while the aerodynamic cost predictor provides the cost direction by back-propagating the predicted cost. The latent variable is iteratively updated by combining these two directions and then decoded into an SDF and mesh.
\textbf{b}, Condition-aware aerodynamic cost predictor. Operating conditions, such as Reynolds number, Mach number and angle of attack, are embedded as condition tokens and combined with geometry tokens from the latent design. The guidance scale $\lambda$ is also embedded, allowing one predictor to provide cost terms for different aerodynamic-bias strengths. The predictor outputs the estimated aerodynamic cost used by both optimization and generation.
\textbf{c}, Latent guided generation. Starting from noise $\mathbf{z}_0$ and a 2D design view, the flow-matching prior gives the unguided transport direction. SA-MC estimates the distribution of reachable clean latent designs through Monte Carlo sampling, scores the sampled clean designs with the cost predictor, and constructs a covariance-aware guidance direction towards lower-cost regions. The guided latent trajectory is finally decoded into a 3D design.}
\label{fig:all_model}
\end{figure}

The probabilistic formulation above defines aerodynamic inverse design through the learned cost-biased distribution $\hat{p}_\lambda(x\mid r,o)$. We implement this target in the latent space of Hunyuan3D 2.1 \cite{hunyuan3d2025hunyuan3d}, as shown in Figure~\ref{fig:all_model}. In this section, the abstract design variable $x$ is replaced by the Shape-VAE latent variable $\mathbf{z}$. This latent implementation is important for both efficiency and stability. Instead of directly modifying meshes, point clouds or dense SDF fields, the framework updates or samples a compact latent variable and then decodes the final latent design into an SDF and mesh.

The implementation uses three coupled modules. The Shape-VAE maps 3D geometry to and from the latent space. The DiT-based flow-matching model defines the visual-conditional design prior, which describes latent designs consistent with the input representation $r$. The aerodynamic cost predictor estimates the cost under operating condition $o$ and guidance scale $\lambda$. We keep the pretrained generative model fixed and adapt it through the learned aerodynamic cost, rather than retraining the 3D generator with aerodynamic data. This avoids the expensive training or fine-tuning of a large-scale 3D generative model, while preserving the broad visual and shape knowledge learned from large-scale 3D data. Aerodynamic adaptation is instead achieved by learning a lightweight condition-aware cost predictor and using it to bias the fixed latent design distribution.

In latent space, the learned cost-biased distribution is
\begin{equation}
    \hat{p}_\lambda(\mathbf{z}\mid r,o)
    ~ \propto ~
    p_\theta(\mathbf{z}\mid r)
    \exp\left[-\lambda \hat{c}_\phi(\mathbf{z},o;\lambda)\right],
    \label{eq:latent_target_distribution}
\end{equation}
where $p_\theta(\mathbf{z}\mid r)$ is the visual-conditional latent prior provided by the flow-matching model, and $\hat{c}_\phi(\mathbf{z},o;\lambda)$ is the predicted aerodynamic cost. The input representation $r$ can be an existing 3D design for optimization or a 2D design view for generation, and $o$ denotes operating conditions such as Reynolds number, Mach number and angle of attack. The first term in equation~\eqref{eq:latent_target_distribution} keeps the design tied to the visual input, while the exponential cost term reweights the distribution towards lower-cost regions. \textcolor{blue}{Supplementary Note~\ref{subsec:si_cost_biased_statement}} gives the corresponding theoretical statement.

\paragraph{Condition-aware aerodynamic cost predictor.}
The cost predictor provides the aerodynamic term in equation~\eqref{eq:latent_target_distribution} (Figure~\ref{fig:all_model}\textbf{b}). Given a latent design $\mathbf{z}$ and an operating condition $o$, such as Reynolds number, Mach number and angle of attack, the predictor outputs a scalar aerodynamic cost for downstream optimization and generation. Since the SKL objective in equation~\eqref{eq:skl_distribution_alignment} is defined over the full cost-biased design distribution, we implement it with a mini-batch probability loss during training.

A further issue is that the target distribution itself depends on the guidance scale $\lambda$: different values of $\lambda$ correspond to different strengths of aerodynamic preference and therefore induce different optimization objectives. To avoid training separate predictors for different $\lambda$ values, we condition the predictor on $\lambda$ and write its output as
$\hat{c}_\phi(\mathbf{z},o;\lambda)$. This allows one predictor to provide aerodynamic terms that are consistent with different guidance strengths in both visual-feature-preserving optimization and guided generation. Further implementation details are provided in Methods and in \textcolor{blue}{Supplementary Note~\ref{subsec:si_loss_statement}}.

\paragraph{Latent visual-feature-preserving optimization.}
Figure~\ref{fig:all_model}\textbf{a} shows the latent optimization workflow. Starting from an existing 3D design, we first convert the geometry into a point-cloud representation and encode it into a Shape-VAE latent variable $\mathbf{z}$. Visual-feature-preserving optimization then updates $\mathbf{z}$ by following the log-density gradient of the latent target distribution,
\begin{equation}
    \nabla_{\mathbf{z}}
    \log
    \hat{p}_\lambda(\mathbf{z}\mid r,o)
    =
    \nabla_{\mathbf{z}}
    \log
    p_\theta(\mathbf{z}\mid r)
    -\lambda
    \nabla_{\mathbf{z}}
    \hat{c}_\phi(\mathbf{z},o;\lambda).
    \label{eq:latent_optimization_gradient}
\end{equation}
The first term is obtained from the pretrained flow-matching prior through its forward process and points towards higher-probability latent regions under the input visual representation. The second term is obtained by back-propagating through the aerodynamic cost predictor and points towards lower predicted cost.

The update therefore combines two directions that correspond to the two requirements of the design task. The cost-gradient direction improves aerodynamic performance, while the visual-density direction discourages the latent variable from moving away from the visual features of the reference design. After repeating the update for $k-1$ steps, the optimized latent variable is decoded by the Shape-VAE decoder into an SDF and mesh. This implements the Density optimization rule used in the vehicle and aircraft experiments: the optimizer does not follow the aerodynamic cost alone, but seeks a low-cost design that remains in the learned visual design distribution. \textcolor{blue}{Supplementary Note~\ref{subsec:si_latent_opt_statement}} details the latent optimization workflow, and \textcolor{blue}{Supplementary Note~\ref{sec:si_algorithms}} gives pseudocode and time-complexity analysis.

\paragraph{Latent guided generation.}
Figure~\ref{fig:all_model}\textbf{c} shows the latent guided-generation workflow. Starting from a 2D design view and an initial noise latent $\mathbf{z}_0$, the flow-matching model provides the unguided transport direction. Without aerodynamic guidance, this process samples from the visual-conditional prior $p_\theta(\mathbf{z}_1\mid r)$ and produces a clean latent design consistent with the input view, but not explicitly biased towards lower aerodynamic cost. Guided generation instead modifies the flow trajectory so that the final clean latent variable is sampled from the cost-biased target distribution in equation~\eqref{eq:latent_target_distribution}.

The guidance correction is written as
\begin{equation}
    \mathbf{g}_t
    ~ \propto ~
    \nabla_{\mathbf{z}_t}
    \log
    \mathbb{E}_{p_\theta(\mathbf{z}_1\mid \mathbf{z}_t,r)}
    \left[
    \exp\left(-\lambda \hat{c}_\phi(\mathbf{z}_1,o;\lambda)\right)
    \right].
    \label{eq:latent_guidance}
\end{equation}
Here, $p_\theta(\mathbf{z}_1\mid \mathbf{z}_t,r)$ denotes the distribution of possible clean latent designs reachable from the current intermediate state under the original flow. The main implementation challenge is to estimate this reachable clean-design distribution accurately enough in high-dimensional 3D latent space.

Although equation~\eqref{eq:latent_guidance} is written as a gradient with respect to $\mathbf{z}_t$, SA-MC does not evaluate it by directly differentiating the full expectation. Instead, it generates possible clean latent designs from the current state using the flow-matching model, scores them with the cost predictor, and constructs the guidance direction from cost-weighted sample statistics. This makes a covariance-aware, general Gaussian approximation of $p_\theta(\mathbf{z}_1\mid \mathbf{z}_t,r)$ feasible in high-dimensional 3D latent space, without computing high-dimensional Jacobians. The final guided latent variable is decoded by the Shape-VAE into a 3D design that remains visually consistent with the input representation while being biased towards improved aerodynamic performance. \textcolor{blue}{Supplementary Note~\ref{subsec:si_samc_statement}} gives the complete covariance-estimation workflow used by SA-MC. \textcolor{blue}{Supplementary Note~\ref{subsec:si_samc_algorithm}} provides the executable pseudocode and time-complexity analysis.

\section*{Vehicle aerodynamic validation}

\begin{figure}[htbp]
  \centering
  \includegraphics[width=0.82\linewidth]{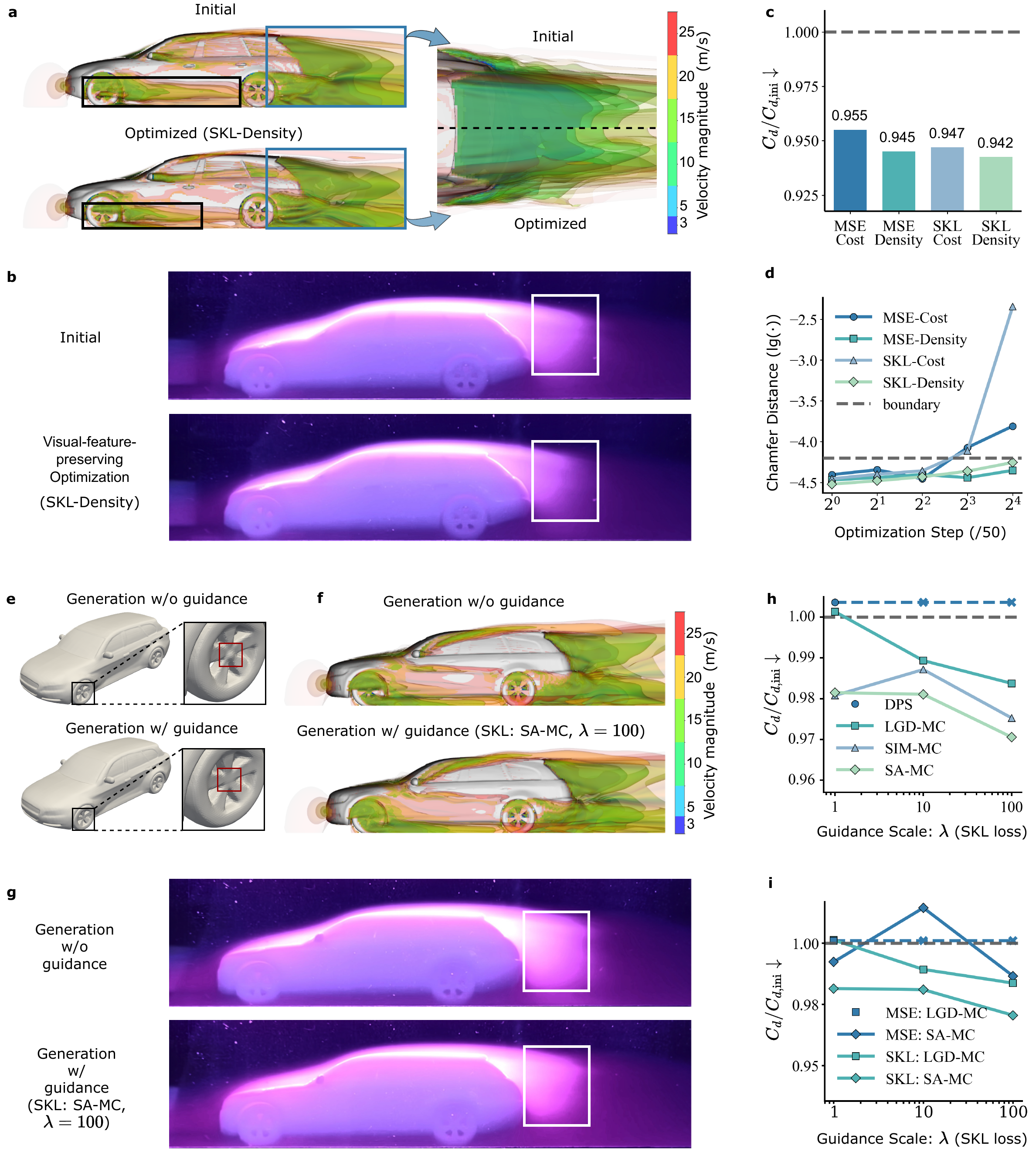}
\caption{\textbf{Vehicle aerodynamic validation.}
\textbf{a}, CFD comparison between the initial vehicle and the visual-feature-preserving optimized design. The optimized shape reduces adverse flow features in key regions while preserving the overall vehicle appearance.
\textbf{b}, Wind-tunnel visualization of the initial and optimized designs, showing a reduced wake region after visual-feature-preserving optimization.
\textbf{c}, Normalized drag coefficient for different predictor losses and optimization rules. ``Cost'' denotes cost-only optimization, whereas ``Density'' denotes the full visual-feature-preserving rule using the learned visual-design density. The grey dashed line marks the initial-vehicle baseline.
\textbf{d}, Chamfer distance during optimization. The grey dashed line marks the visual-deviation boundary; curves crossing this boundary correspond to updates that no longer preserve the input visual features. Visual-feature-preserving optimization keeps the update controlled, whereas cost-only optimization can deviate strongly from the original shape.
\textbf{e}, Generated 3D vehicles without and with aerodynamic guidance. Guidance reduces visible local distortions, especially abnormal protrusions around the wheel region, while maintaining the overall vehicle appearance.
\textbf{f}, CFD visualization of unguided and guided generation. SKL-based SA-MC guidance reduces the low-velocity wake compared with unguided generation.
\textbf{g}, Wind-tunnel visualization of generated designs, showing a smaller turbulent wake region after aerodynamic guidance.
\textbf{h}, Effect of guidance scale on normalized drag for different guidance methods under the SKL-trained predictor. The grey dashed line at 1.0 denotes direct generation without aerodynamic guidance; values below this line indicate drag reduction relative to direct generation. Crossed/dashed markers denote large-guidance settings that collapse or fail to produce usable vehicle outputs.
\textbf{i}, Joint comparison of predictor loss and guidance method. The grey dashed line denotes direct generation without guidance. The combination of SKL training and SA-MC guidance gives the most stable aerodynamic improvement across guidance scales.}
  \label{fig:vehicle}
\end{figure}

We first evaluate the framework on ground vehicles, where the aerodynamic objective is the drag coefficient $C_d$ and the design constraint is to preserve the visual features of the input vehicle. This experiment is organized as a validation of the three components of the proposed framework: the predictor loss, the visual-feature-preserving optimization rule and the guided generation strategy. In Figure~\ref{fig:vehicle}\textbf{c,d}, the labels ``Cost'' and ``Density'' denote two optimization rules. ``Cost'' means that the latent design is updated using only the aerodynamic cost term. ``Density'' denotes the full visual-feature-preserving optimization rule, in which the learned visual-design density is retained in the objective. Thus, the four settings separate the effect of the predictor loss, MSE or SKL, from the effect of the optimization rule, cost-only or visual-density-preserving.

For optimization, Figure~\ref{fig:vehicle}\textbf{a--d} shows why both the SKL loss and the visual-density term are needed. Starting from the same 3D vehicle, the SKL-Density setting modifies mainly local regions while preserving the recognizable vehicle appearance (Figure~\ref{fig:vehicle}\textbf{a}). The CFD field shows a reduced low-speed wake and weaker adverse-flow regions, and the 3D-printed wind-tunnel visualization gives the same qualitative trend with a smaller wake behind the optimized vehicle (Figure~\ref{fig:vehicle}\textbf{b}). In Figure~\ref{fig:vehicle}\textbf{c}, the grey dashed line is the drag of the initial vehicle, used as the normalization reference. All four optimization variants reduce drag below this reference, but SKL-Density gives the lowest value, $C_d/C_{d,\mathrm{init}}=0.942$, corresponding to a 5.8\% reduction. The comparison between MSE and SKL shows that the distribution-aware loss improves the downstream optimization target, while the comparison between Cost and Density shows that the visual-density term is not merely a regularizer for appearance: it also helps the optimizer remain in a useful region of the learned design distribution.

Figure~\ref{fig:vehicle}\textbf{d} explains the second part of the claim. The grey dashed line marks the visual-deviation boundary used in this analysis. Cost-only optimization can reduce drag, but it can also move the design beyond this boundary as the update proceeds, meaning that the optimized shape no longer preserves the input visual features. In contrast, the Density variants keep the Chamfer distance below the boundary throughout the optimization. The vehicle optimization result therefore supports the intended role of the framework: aerodynamic improvement is achieved while the update remains tied to the input visual features.

The generation panels test the sampling side of the same probability target. From the same input view, direct generation produces a vehicle-like 3D shape, but it is not explicitly biased towards lower drag. Aerodynamic guidance changes this by steering the generated candidate towards lower-cost regions while maintaining view consistency. In Figure~\ref{fig:vehicle}\textbf{e}, guidance also reduces visible local distortions around the wheel region. CFD visualization shows a smaller low-velocity wake for SKL-based SA-MC guidance than for direct generation (Figure~\ref{fig:vehicle}\textbf{f}), and the wind-tunnel visualization of generated vehicles shows a consistent reduction in the turbulent wake region (Figure~\ref{fig:vehicle}\textbf{g}).

The quantitative guidance results further validate the interaction between the loss and the sampler. In Figure~\ref{fig:vehicle}\textbf{h,i}, the grey dashed line denotes direct generation without guidance from the same view; values below this line indicate improvement over direct generation. Under the SKL-trained predictor, SA-MC remains below this baseline across the valid tested guidance scales and reaches $C_d/C_{d,\mathrm{direct}}=0.970$, a 3.0\% drag reduction relative to direct generation. Some alternative guidance methods become unstable at large guidance scales, as indicated by the failed or crossed high-scale markers. When the predictor is trained only with MSE, guidance becomes less reliable and can even move above the direct-generation baseline. These results show that the vehicle performance is not produced by a single module alone: the SKL loss defines a more useful cost-biased target, the Density optimization rule preserves visual features during refinement, and SA-MC uses the same target to make view-based generation controllably aerodynamic.

\section*{Aircraft aerodynamic validation}

\begin{figure}[htbp]
  \centering
   \includegraphics[width=0.82\linewidth]{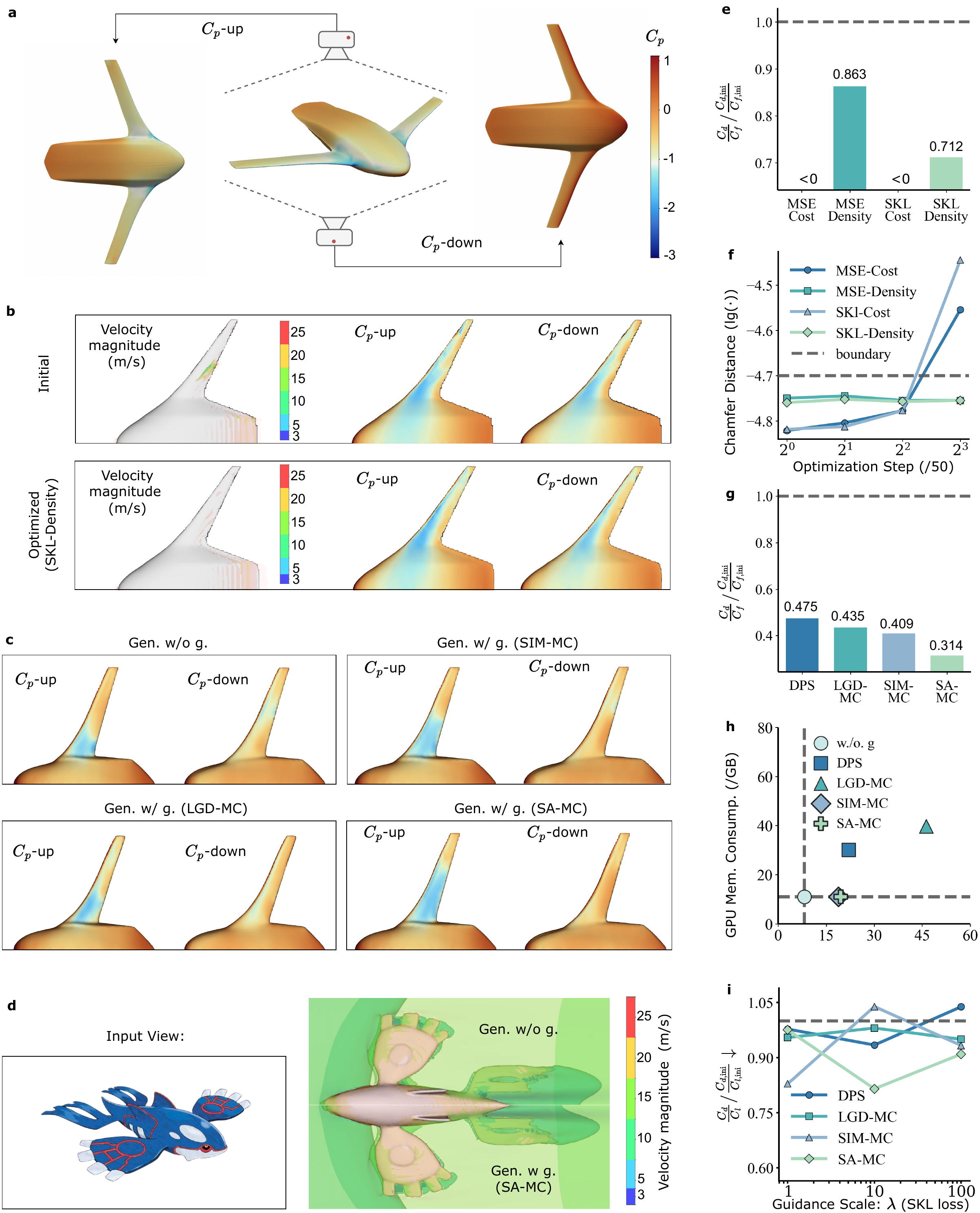}
\caption{\textbf{Aerodynamic validation on blended-wing aircraft.}
\textbf{a}, Definition of the upper- and lower-surface pressure-coefficient fields, denoted as $C_p$-up and $C_p$-down.
\textbf{b}, CFD comparison of the initial and optimized aircraft. The visual-feature-preserving optimized design reduces low-speed residual regions and yields smoother pressure distributions near the wing--body region.
\textbf{c}, Pressure-coefficient fields of generated aircraft without guidance and with different guidance methods.
\textbf{d}, Out-of-distribution view-based generation from a Kyogre input view. SA-MC guidance reduces the low-velocity wake compared with unguided generation.
\textbf{e}, Normalized drag-to-lift objective for different predictor losses and optimization rules. ``Cost'' denotes cost-only optimization, whereas ``Density'' denotes the full visual-feature-preserving rule using the learned visual-design density. The grey dashed line marks the initial-aircraft baseline.
\textbf{f}, Chamfer distance during optimization. The grey dashed line marks the visual-deviation boundary; visual-feature-preserving optimization keeps the optimized aircraft closer to the initial design, while cost-only optimization can cross this boundary.
\textbf{g}, Comparison of training-free guidance methods. The grey dashed line at 1.0 denotes direct generation without aerodynamic guidance, and SA-MC achieves the lowest normalized drag-to-lift objective.
\textbf{h}, Runtime and GPU-memory comparison for single-sample generation. The grey dashed lines indicate the time and memory reference of unguided generation.
\textbf{i}, Quantitative analysis of the Kyogre-conditioned out-of-distribution generation across guidance scales. The grey dashed line denotes unguided generation from the same OOD view.}
   \label{fig:aircraft}
\end{figure}

We next evaluate the framework on blended-wing aircraft, where the objective is the drag-to-lift ratio $C_d/C_l$. This case is more stringent than the vehicle experiment because improving the objective cannot be separated from maintaining positive lift: a design with negative lift is a failed aircraft, not a valid aerodynamic improvement. The aircraft results therefore test whether the same framework can support visual-feature-preserving optimization and guided generation under a coupled lift-drag objective. Figure~\ref{fig:aircraft}\textbf{a} defines the upper- and lower-surface pressure-coefficient fields, $C_p$-up and $C_p$-down, used to interpret the lift-related pressure changes. As in the vehicle experiment, ``Cost'' denotes cost-only optimization, while ``Density'' denotes the full visual-feature-preserving optimization rule that retains the learned visual-design density term.

For optimization, Figure~\ref{fig:aircraft}\textbf{b,e,f} shows that cost reduction alone is not a sufficient criterion for aircraft design. Starting from the same blended-wing aircraft, the SKL-Density setting changes the flow field and pressure distribution near the wing--body region while maintaining the overall blended-wing appearance (Figure~\ref{fig:aircraft}\textbf{b}). In Figure~\ref{fig:aircraft}\textbf{e}, the grey dashed line at 1.0 is the initial-aircraft objective used for normalization. The Cost variants can drive the lift coefficient below zero, shown as negative values in the plot. These cases are treated as failed aircraft designs and are not valid improvements. By contrast, the Density variants remain valid, and SKL-Density gives the best valid value, $0.712$, corresponding to a 28.8\% reduction relative to the initial aircraft. This comparison directly supports the value of the SKL loss under the full visual-feature-preserving optimization rule.

Figure~\ref{fig:aircraft}\textbf{f} shows why the Density rule is necessary. The grey dashed line marks the visual-deviation boundary. Cost-only optimization can cross this boundary, indicating that the update is no longer visually tied to the reference aircraft. The Density variants keep the aircraft close to the initial design while still improving the aerodynamic objective. Thus, the aircraft optimization result strengthens the vehicle conclusion: the proposed target is not simply a way to lower a predicted cost, but a way to search for lower-cost designs within the visual design distribution.

The generation results in Figure~\ref{fig:aircraft}\textbf{c,g} test whether the same probability target can be used for early-stage view-based design. Here the comparison is with direct generation from the same input view, not with the initial aircraft used in optimization. The grey dashed line in Figure~\ref{fig:aircraft}\textbf{g} denotes this direct-generation baseline. All guided methods improve over the baseline, but SA-MC gives the lowest normalized $C_d/C_l$, reaching $0.314$, a 68.6\% reduction relative to direct generation. The pressure fields in Figure~\ref{fig:aircraft}\textbf{c} are consistent with this quantitative result: different guidance rules produce different upper- and lower-surface pressure patterns, and SA-MC gives the most favourable pressure distribution among the compared methods.

Figure~\ref{fig:aircraft}\textbf{d,i} further tests controllability under an out-of-distribution visual condition. The input is a Kyogre view rather than an aircraft view, so this setting probes whether the guidance rule still provides a usable aerodynamic bias when the input view is far from the aircraft training distribution. Panel \textbf{d} shows that SA-MC reduces the low-velocity wake region relative to unguided generation. In panel \textbf{i}, the grey dashed line denotes unguided generation from the same OOD view. Although the guided methods fluctuate more strongly in this harder setting, SA-MC remains below the unguided baseline across the tested guidance scales and gives the lowest overall $C_d/C_l$.

Finally, Figure~\ref{fig:aircraft}\textbf{h} reports runtime and GPU memory for single-sample generation, with the grey dashed lines marking the unguided-generation reference. SA-MC adds aerodynamic guidance while keeping memory consumption close to the unguided and SIM-MC settings, whereas LGD-MC requires substantially more GPU memory. Together, the aircraft results show that the proposed framework is not limited to vehicle drag reduction. The same SKL-trained cost-biased target, visual-feature-preserving optimization rule and covariance-aware guided sampler extend to a different geometry family and a coupled lift-drag objective, while avoiding invalid negative-lift solutions.

\section*{Controlled analysis of optimization and guidance}

\begin{figure}[t!]
  \centering
   \includegraphics[width=0.90\linewidth]{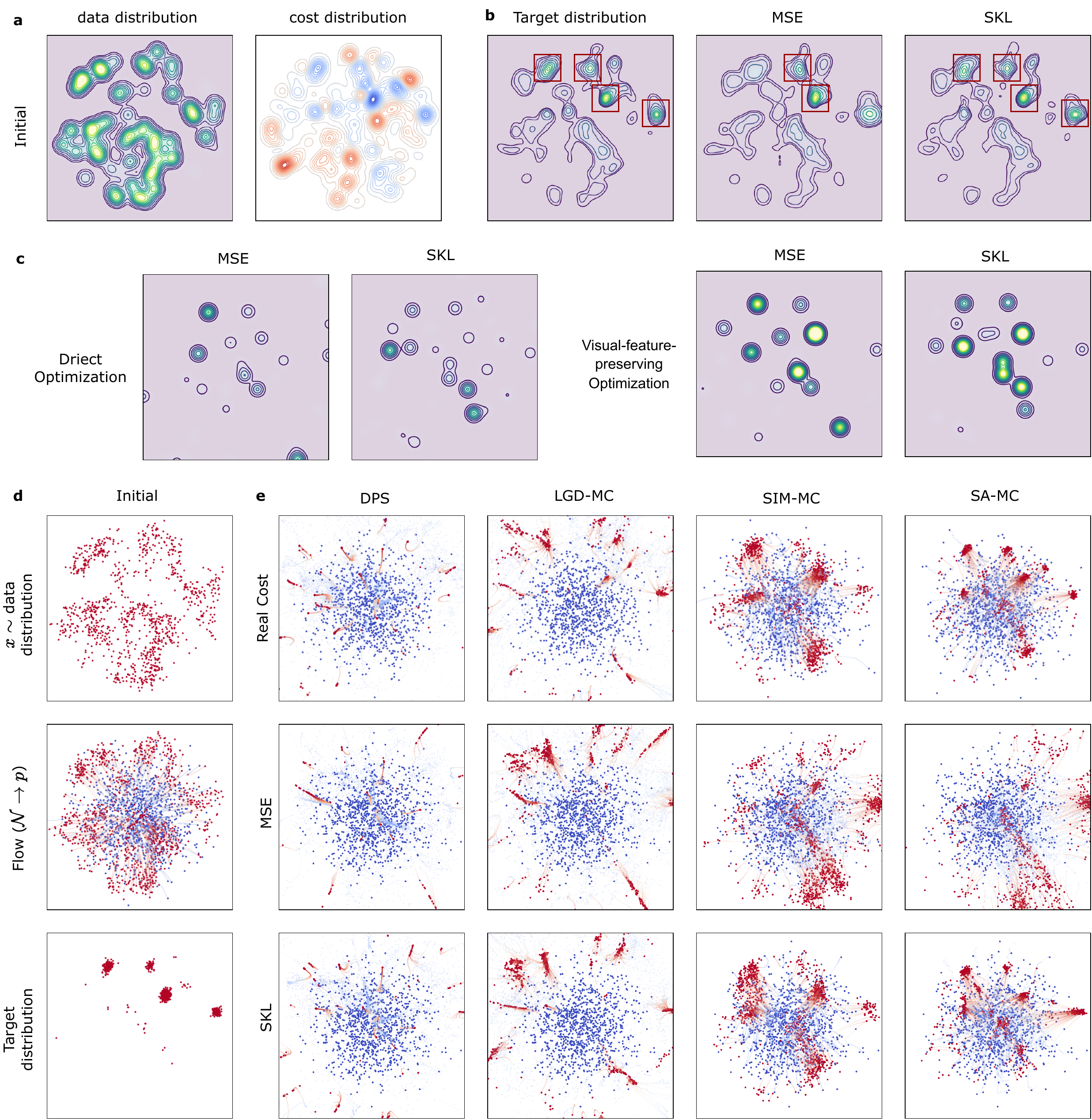}
      \caption{\textbf{Controlled analysis of the mechanisms behind optimization and guidance.}
\textbf{a}, Two-dimensional controlled setting. The data distribution plays the role of the learned visual design distribution, and the cost landscape defines the inverse-design preference.
\textbf{b}, Ground-truth cost-biased target distribution and the target distributions induced by MSE- and SKL-trained predictors. Red boxes highlight the target modes. MSE weakens or misses modes that are important after cost reweighting, whereas SKL better preserves the multimodal target structure.
\textbf{c}, Cost-only optimization and visual-feature-preserving optimization under different predictor losses. Cost-only optimization follows the predicted cost term alone and can converge to isolated low-cost regions away from the data distribution. Visual-feature-preserving optimization adds the data-density term and redirects updates towards supported target regions, especially when combined with SKL.
\textbf{d}, Reference generation process in the controlled setting. Samples from the data distribution define the visual reference, flow matching transports noise samples to this reference distribution, and the desired cost-biased target occupies selected high-probability, low-cost modes.
\textbf{e}, Guided generation using different training-free estimators and predictor losses. SA-MC most closely follows the target modes when paired with the SKL-trained predictor, showing that covariance-aware guidance improves distributional sampling rather than merely reducing a scalar cost.}
   \label{fig:analysis}
\end{figure}

The vehicle and aircraft experiments validate the framework in engineering settings. The controlled two-dimensional experiment in Figure~\ref{fig:analysis} is used for a different purpose: it exposes the distributional mechanism behind those results. In the full problem, visually consistent designs define a learned visual design distribution, and aerodynamic cost reweights this distribution towards better-performing regions. The controlled setting keeps the same structure in two dimensions. The data distribution represents the visual design distribution, and the cost landscape represents the inverse-design objective (Figure~\ref{fig:analysis}\textbf{a}). Because both are known exactly, we can directly compare the true cost-biased target with the targets produced by different losses, optimization rules and guidance estimators. \textcolor{blue}{Supplementary Note~\ref{sec:2d-setting}} gives the construction of the controlled problem, and \textcolor{blue}{Supplementary Note~\ref{sec:si_experimental_analysis}} provides additional analysis of predictor-induced drift.

The first question is whether the learned predictor defines the correct probability target. This matters because the predictor is not used only as a scalar regressor. Its output is exponentiated and combined with the reference distribution, so errors that appear small pointwise can still distort the cost-biased distribution used by optimization and generation. Figure~\ref{fig:analysis}\textbf{b} shows this effect. The MSE-trained predictor weakens or misses several target modes, including modes highlighted by the red boxes, even though it is trained to match costs pointwise. The SKL-trained predictor better preserves the location and relative importance of these modes because its training objective acts on the induced distribution. This explains why the loss affects downstream inverse design: the useful object is not only an accurate cost value, but an accurate cost-biased target distribution.

The second question is why visual-feature-preserving optimization is different from cost-only optimization. In Figure~\ref{fig:analysis}\textbf{c}, cost-only optimization follows the predicted cost direction alone. It can therefore move samples towards isolated low-cost regions that have little support under the data distribution. In the aerodynamic problem, this corresponds to reducing a predicted drag or drag-to-lift objective while losing the visual features of the reference design. Visual-feature-preserving optimization adds the data-density term, which is the two-dimensional analogue of the learned visual-design distribution in the main framework. This term redirects updates towards regions that are both low-cost and supported by the reference distribution. The effect is clearest with SKL, where the learned cost-biased target is already better aligned with the true target. Thus, the optimization result in Figure~\ref{fig:analysis}\textbf{c} explains why the Density rule in the vehicle and aircraft experiments can improve performance without crossing the visual-deviation boundary.

The third question is whether guided generation samples the intended target distribution. Figure~\ref{fig:analysis}\textbf{d} separates the reference distribution from the desired target: unguided flow matching transports noise to the data distribution, whereas guided generation should further bias samples towards the cost-reweighted target modes. Figure~\ref{fig:analysis}\textbf{e} shows that this is not guaranteed by guidance alone. DPS, LGD-MC and SIM-MC can move samples in favourable directions, but they may miss modes, over-concentrate samples or send trajectories into regions that do not match the target distribution. These failure modes become stronger when the predictor is trained with MSE, because the guidance then follows a distorted target. With the SKL-trained predictor, SA-MC gives the closest match to the target modes among the compared training-free guidance rules. This supports the role of covariance-aware guidance: it improves how the sampler follows the local spread of reachable clean samples, rather than simply applying a stronger cost gradient.

Together, the controlled analysis explains the structure observed in the vehicle and aircraft results. SKL improves the probability target used by the framework; visual-feature-preserving optimization follows this target while remaining within the reference distribution; and SA-MC provides a more faithful guided sampler for the same target. The engineering improvements in the preceding sections therefore come from aligning the predictor, optimizer and generator with one cost-biased visual design distribution, rather than from treating loss design, optimization and generation as separate empirical choices.

\section*{Robustness of covariance-aware guidance}

\begin{figure}[t]
  \centering
   \includegraphics[width=0.99\linewidth]{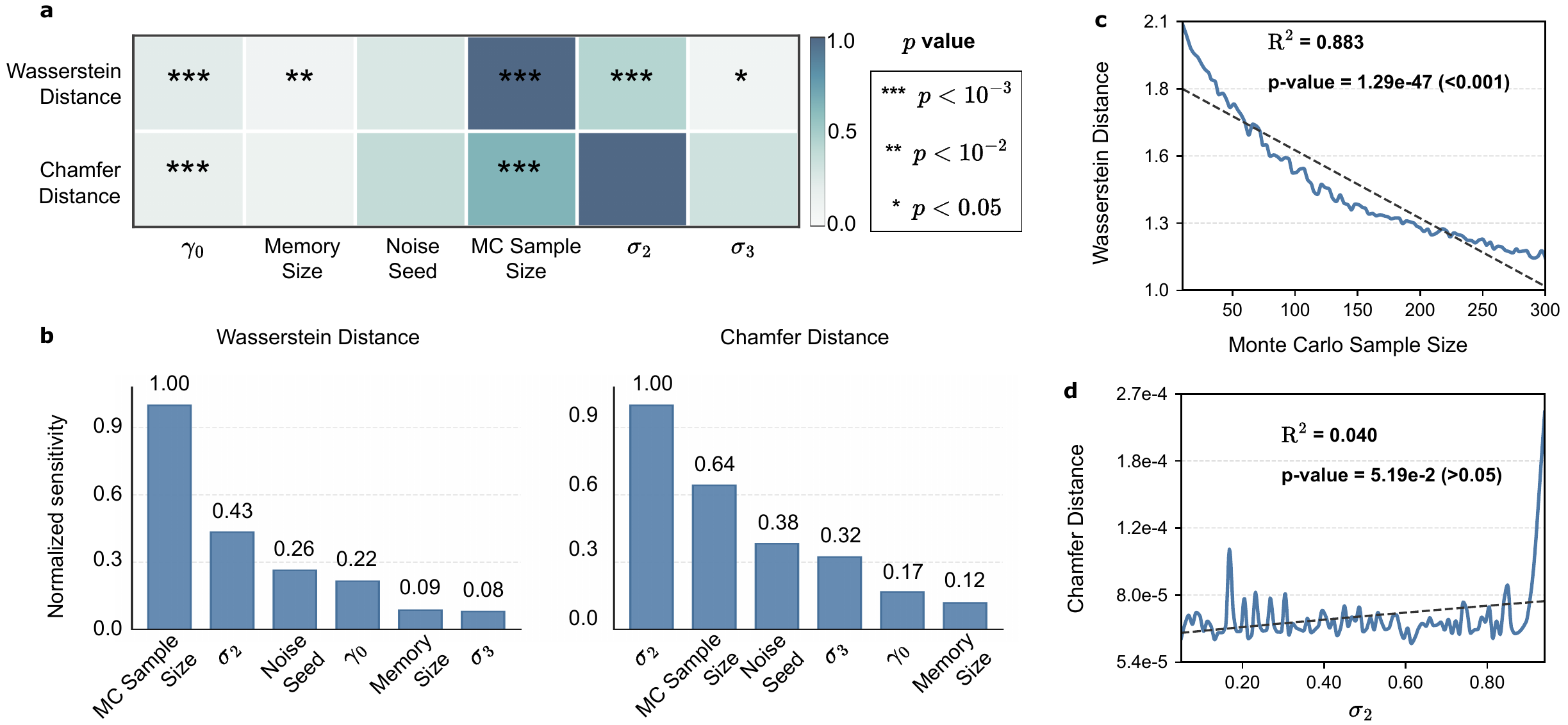}
\caption{\textbf{Two-level parameter sensitivity of SA-MC guidance.}
\textbf{a}, Sensitivity of SA-MC implementation parameters evaluated from two complementary levels. The distribution-level sensitivity is measured by Wasserstein distance, which quantifies how well the generated sample distribution matches the intended cost-biased target. The instance-level sensitivity is measured by Chamfer distance, which quantifies whether the decoded 3D shape changes under different parameter settings. Tested parameters include the initial covariance scale \(\gamma_0\), memory length, perturbation noise seed, Monte Carlo sample size, and damping parameters \(\sigma_2\) and \(\sigma_3\). Colours denote normalized sensitivity scores, and asterisks indicate significance levels.
\textbf{b}, Ranked normalized sensitivities for the two metrics. Monte Carlo sample size dominates distribution-level matching, indicating that the statistical quality of covariance estimation is the main factor controlling the target-distribution approximation. At the instance level, \(\sigma_2\) shows the largest relative sensitivity, followed by Monte Carlo sample size and noise seed.
\textbf{c}, Detailed trend for the dominant distribution-level factor. Increasing the Monte Carlo sample size steadily reduces the Wasserstein distance, confirming that accurate Monte Carlo covariance estimation is critical for distributional alignment.
\textbf{d}, Detailed trend for the dominant instance-level factor. Although \(\sigma_2\) has the largest relative sensitivity in the Chamfer-distance ranking, the curve remains nearly flat within the practical range used in our experiments, \(\sigma_2\in[0.3,0.6]\). Noticeable shape changes mainly occur under extreme damping choices, supporting the stability of SA-MC under moderate \(\sigma_2\) values.}
   \label{fig:sensitivity_all}
\end{figure}

SA-MC estimates local covariance information during sampling, so its behaviour depends on a small set of implementation parameters: the initial covariance scale $\gamma_0$, the memory length used to store recent update directions, the perturbation noise seed, the Monte Carlo sample size, and the damping parameters $\sigma_2$ and $\sigma_3$. We evaluate robustness at two levels. The distribution-level test asks whether the generated sample distribution remains close to the intended cost-biased target, using Wasserstein distance. The instance-level test asks whether the decoded 3D output changes under different parameter settings, using Chamfer distance. Figure~\ref{fig:sensitivity_all} summarizes these sensitivities, and Extended Data Figure~\ref{fig:sensitivity} reports the full one-parameter sweep curves. \textcolor{blue}{Supplementary Note~\ref{subsec:si_samc_algorithm}} defines the role of these parameters in the SA-MC algorithm and \textcolor{blue}{Supplementary Note~\ref{subsec:si_damping_pd}} explains the damping rule used to maintain stable covariance updates.

At the distribution level, the dominant factor is the Monte Carlo sample size (Figure~\ref{fig:sensitivity_all}\textbf{a,b}). This is expected because SA-MC uses Monte Carlo samples to estimate how the reachable clean-design distribution spreads around the current trajectory. With too few samples, the covariance estimate is noisy and the guided sample distribution remains farther from the cost-biased target. Increasing the sample size steadily reduces the Wasserstein distance (Figure~\ref{fig:sensitivity_all}\textbf{c}), and the full sweep in Extended Data Figure~\ref{fig:sensitivity}\textbf{a} shows the same trend. The damping parameter $\sigma_2$ and the initial covariance scale $\gamma_0$ form the main secondary effects because they control the scale and curvature damping of the covariance update. Memory length and $\sigma_3$ show weaker effects over the tested ranges, while the perturbation seed has a smaller and less systematic influence. Thus, distribution matching is governed mainly by the statistical quality and scale control of the covariance estimate, not by a favourable random initialization.

The instance-level analysis gives a complementary message (Figure~\ref{fig:sensitivity_all}\textbf{a,b,d}). In the normalized Chamfer-distance ranking, $\sigma_2$ has the largest relative sensitivity, followed by Monte Carlo sample size and noise seed. However, the detailed $\sigma_2$ curve is nearly flat across the practical range used in our experiments, $\sigma_2\in[0.3,0.6]$, and the fitted trend is not significant at the 0.05 level (Figure~\ref{fig:sensitivity_all}\textbf{d}). Noticeable changes appear mainly at extreme damping choices. The complete instance-level sweeps in Extended Data Figure~\ref{fig:sensitivity}\textbf{b} further show that most parameter changes produce only small Chamfer-distance variation, with larger deviations concentrated at very small Monte Carlo sample sizes or extreme damping. Therefore, parameters that affect distribution-level matching do not necessarily cause large changes in an individual decoded 3D shape.

Taken together, these results indicate that SA-MC is robust once the covariance estimate is statistically reliable and the damping is kept in a moderate regime. In practice, the Monte Carlo sample size should be large enough to stabilize distributional alignment, while $\gamma_0$ and $\sigma_2$ should be chosen within non-extreme ranges. The remaining settings can be selected more flexibly in the tested regime. Importantly, varying these parameters over the tested ranges did not lead to generation collapse or unusable samples, indicating that the sensitivity of SA-MC reflects controllable changes in covariance estimation rather than a fragile failure mode. This robustness is also consistent with \textcolor{blue}{Supplementary Note~\ref{sec:si_offline_rl}}, where the same covariance-aware guidance remains stable in offline reinforcement learning and shows stronger stability than alternative guidance schemes. This supports the use of SA-MC as a practical guidance method: although it introduces more parameters than simpler guidance rules, the main behaviour is controlled by interpretable covariance-estimation choices, and the decoded outputs remain visually stable under moderate settings.

\section*{Discussion}

This work addresses a practical gap in learning-based aerodynamic inverse design: aerodynamic improvement should not be pursued independently of the visual features that define a design. This requirement is especially important in vehicle development, where exterior visual features are tied to design language, brand recognition and user perception. Existing deep-learning approaches can accelerate inverse design, but they are still often organized around aerodynamic objectives alone. In contrast, the present framework makes visual preservation part of the target itself. By representing designs consistent with a reference shape or view as a learned visual design distribution and reweighting this distribution by aerodynamic cost, visual-feature preservation and aerodynamic improvement become two terms of one probability target rather than two separate post-hoc criteria.

This probability view is useful because it gives a common target for tasks that are usually treated separately. In late-stage optimization, the method seeks a low-cost, high-probability design relative to an initial geometry. In early-stage generation, it samples lower-cost three-dimensional candidates relative to direct generation from the same input view. The same formulation also clarifies the roles of the individual algorithmic components. The SKL loss trains the cost predictor to recover the cost-biased distribution used downstream, not only pointwise cost values. The Density optimization rule keeps updates tied to the learned visual design distribution while reducing aerodynamic cost. SA-MC then uses covariance-aware guidance to sample the same target distribution more faithfully during generation. The controlled two-dimensional analysis makes these roles visible: MSE can miss target modes, cost-only optimization can leave supported visual regions, and simplified guidance rules can distort the desired target distribution.

The engineering experiments show that this distributional target is useful beyond a controlled setting. In vehicles, visual-feature-preserving optimization reduces drag relative to the initial vehicle, and guided generation reduces drag relative to direct generation from the same view, while the OpenFOAM fields and miniature wind-tunnel visualizations show consistent wake reduction. In blended-wing aircraft, the same framework improves the best valid drag-to-lift objective and avoids the negative-lift failures produced by cost-only optimization. The out-of-distribution view experiment further probes a harder regime in which the visual input is far from the aircraft training distribution. Although this setting is more variable, it shows that guidance can still provide an aerodynamic bias instead of simply following direct generation. Together with the parameter-sensitivity analysis and the offline reinforcement-learning results in \textcolor{blue}{Supplementary Note~\ref{sec:si_offline_rl}}, these experiments suggest that the proposed guidance principle is not tied to a single aerodynamic benchmark, but reflects a broader way of steering generative models with cost-biased probability targets.

Several limitations remain. The learned visual design distribution is a proxy for visual-feature preservation; it does not capture all aspects of industrial styling, brand-specific design language or human design judgement. In this work, visual preservation is assessed through geometric comparisons such as Chamfer distance and through qualitative inspection, which cannot replace expert styling evaluation. The aerodynamic cost predictor is also trained on a finite set of simulated data, so surrogate errors can affect both optimization and guidance, especially outside the training distribution. The physical validation combines CFD with miniature wind-tunnel wake visualization, but it does not replace full-scale testing across operating conditions. More broadly, the present framework focuses on aerodynamic performance and visual consistency; practical design must also consider structural constraints, packaging, cooling, safety, manufacturability and multi-condition robustness.

Future work should therefore connect probability-guided inverse design more tightly with closed-loop engineering validation. One direction is to combine the present predictor-guided framework with uncertainty estimation and active CFD sampling, so that uncertain or high-value candidates can be re-evaluated and added back into training. Another direction is to extend the cost-biased target to multiple objectives and constraints, allowing aerodynamic improvement to be balanced with structural, manufacturing and operational requirements. With broader design datasets, better calibrated surrogates and closer integration with high-fidelity simulation, visual-feature-preserving generative models could become practical front-end tools for narrowing the design space before expensive physical evaluation, while keeping the visual intent of the original design in the optimization loop.

\section*{Methods}\label{Methods}

\subsection*{Problem definition and latent representation}

We consider aerodynamic inverse design for three-dimensional geometries under specified operating conditions. Let $\mathbf{s}$ denote a physical geometry, represented as a point cloud, SDF or mesh, and let $o$ denote the operating condition, such as Reynolds number, Mach number or angle of attack. The training data are written as
\begin{equation}
    \mathcal{D}
    =
    \{(\mathbf{s}^{[i]},o^{[i]},y^{[i]})\}_{i=1}^{N},
\end{equation}
where $y^{[i]}$ is the aerodynamic cost obtained from high-fidelity simulation. In the vehicle experiments, $y$ is the drag coefficient $C_d$; in the aircraft experiments, $y$ is the drag-to-lift ratio $C_d/C_l$. We use square brackets $[i]$ only for dataset or mini-batch indices.

The main text defines inverse design through the learned cost-biased distribution $\hat{p}_\lambda(x\mid r,o)$, where $x$ is an abstract design representation and $r$ is the visual representation to be preserved or followed. In implementation, we represent $x$ in the latent space of Hunyuan3D 2.1. A geometry $\mathbf{s}$ is encoded by the Shape-VAE into a latent variable
\begin{equation}
    \mathbf{z}=E(\mathbf{s}),
\end{equation}
and a latent variable is decoded back into an SDF and mesh by the Shape-VAE decoder. Throughout the Methods, $\mathbf{z}$ without a time or iteration index denotes a generic latent design.

The notation distinguishes optimization and generation. For visual-feature-preserving optimization, we use superscripts for iterations, such as $\mathbf{z}^{(0)},\mathbf{z}^{(m)}$ and $\mathbf{z}^{(m+1)}$. For guided generation, we use subscripts for flow-matching time, such as $\mathbf{z}_t$ for an intermediate state and $\mathbf{z}_1$ for the final clean latent design.

The DiT-based flow-matching model of Hunyuan3D 2.1 provides the visual-conditional latent distribution $p_\theta(\mathbf{z}\mid r)$. The pretrained 3D generator is kept fixed. Aerodynamic adaptation is performed by biasing this fixed latent distribution with the learned aerodynamic cost. The latent target distribution used in both optimization and generation is
\begin{equation}
    \hat{p}_\lambda(\mathbf{z}\mid r,o)
    ~ \propto ~
    p_\theta(\mathbf{z}\mid r)
    \exp\left[-\lambda \hat{c}_\phi(\mathbf{z},o;\lambda)\right].
    \label{eq:method_latent_target}
\end{equation}
Here, $o$ denotes the physical operating condition, while $\lambda$ controls the strength of aerodynamic preference. The predictor is written as $\hat{c}_\phi(\mathbf{z},o;\lambda)$ because it is conditioned on the guidance strength used to define the cost-biased distribution.

\subsection*{Mini-batch probability loss}

The SKL objective in the main text is defined between continuous cost-biased distributions. During training, we approximate it within a mini-batch. For a mini-batch
$\mathcal{B}=\{(\mathbf{z}^{[i]},o^{[i]},y^{[i]})\}_{i=1}^{B}$,
the simulated and predicted costs define two discrete probability distributions:
\begin{equation}
    w^{[i]}
    =
    \frac{\exp[-\lambda y^{[i]}]}
    {\sum_{j=1}^{B}\exp[-\lambda y^{[j]}]},
    \qquad
    \hat{w}^{[i]}
    =
    \frac{\exp[-\lambda \hat{c}_\phi(\mathbf{z}^{[i]},o^{[i]};\lambda)]}
    {\sum_{j=1}^{B}\exp[-\lambda \hat{c}_\phi(\mathbf{z}^{[j]},o^{[j]};\lambda)]}.
    \label{eq:method_minibatch_weights}
\end{equation}
The training loss is the symmetric KL divergence between these two mini-batch distributions:
\begin{equation}
    \mathcal{L}_{\mathrm{SKL}}^{\mathcal{B}}
    =
    D_{\mathrm{KL}}(w\,\|\,\hat{w})
    +
    D_{\mathrm{KL}}(\hat{w}\,\|\,w)
    =
    \lambda
    \sum_{i=1}^{B}
    \left(
    w^{[i]}-\hat{w}^{[i]}
    \right)
    \left[
    \hat{c}_\phi
    \left(
    \mathbf{z}^{[i]},o^{[i]};\lambda
    \right)
    -
    y^{[i]}
    \right].
    \label{eq:method_minibatch_skl}
\end{equation}
This is the discrete form of the distributional loss used in training. The derivation from the continuous objective is provided in \textcolor{blue}{Supplementary Note~\ref{subsec:si_loss_statement}} and \textcolor{blue}{Supplementary Note~\ref{sec:si_derivations}}.

Different values of $\lambda$ define different cost-biased distributions. Therefore, $\lambda$ is provided as an additional input to the cost predictor. This allows one predictor to support different aerodynamic preference strengths instead of training a separate model for each $\lambda$.

\subsection*{visual-feature-preserving optimization and density gradient from flow matching}

For visual-feature-preserving optimization, we start from an existing 3D design $\mathbf{s}_0$ and encode it into the Shape-VAE latent variable
\begin{equation}
    \mathbf{z}^{(0)}=E(\mathbf{s}_0).
\end{equation}
The goal is to update $\mathbf{z}^{(m)}$ so that the decoded design has lower aerodynamic cost while remaining visually close to the reference representation $r$.

From equation~\eqref{eq:method_latent_target}, the latent update direction is
\begin{equation}
    \nabla_{\mathbf{z}}
    \log
    \hat{p}_\lambda(\mathbf{z}\mid r,o)
    =
    \nabla_{\mathbf{z}}
    \log
    p_\theta(\mathbf{z}\mid r)
    -
    \lambda
    \nabla_{\mathbf{z}}
    \hat{c}_\phi(\mathbf{z},o;\lambda).
    \label{eq:method_shape_opt_direction}
\end{equation}
The first term keeps the design close to designs visually consistent with $r$, and the second term points towards lower aerodynamic cost.

In practice, the aerodynamic term is obtained by back-propagating through the cost predictor. The density term is obtained from the pretrained flow-matching model. A forward evaluation of the flow-matching velocity field gives an estimate of the density direction of the visual prior:
\begin{equation}
    \widehat{
    \nabla_{\mathbf{z}}
    \log p_\theta(\mathbf{z}\mid r)
    }
    =
    \mathrm{ScoreFromFM}(\mathbf{z},r,t),
    \label{eq:method_score_from_fm}
\end{equation}
where $\mathrm{ScoreFromFM}(\cdot)$ denotes the standard conversion from the flow-matching velocity to the density direction under the chosen probability path. The schedule-dependent expression is provided in \textcolor{blue}{Supplementary Note~\ref{subsec:si_latent_opt_statement}}.

At optimization iteration $m$, we update the latent variable as
\begin{equation}
    \mathbf{z}^{(m+1)}
    =
    \mathbf{z}^{(m)}
    +
    \eta^{(m)}
    \left[
    \widehat{
    \nabla_{\mathbf{z}}
    \log p_\theta(\mathbf{z}^{(m)}\mid r)
    }
    -
    \lambda
    \nabla_{\mathbf{z}}
    \hat{c}_\phi(\mathbf{z}^{(m)},o;\lambda)
    \right].
    \label{eq:method_shape_opt_update}
\end{equation}
After the final iteration, the optimized latent variable $\mathbf{z}^{\star}$ is decoded into an SDF and converted to a mesh.

In simple terms, the flow-matching model tells the optimizer how to keep the design visually plausible, and the cost predictor tells it how to improve aerodynamic performance.

\subsection*{Guided generation with SA-MC}

For view-based generation, $r$ is the input design view. Without aerodynamic guidance, the flow-matching model transports a source latent variable $\mathbf{z}_0$ to a clean latent design $\mathbf{z}_1$ sampled from $p_\theta(\mathbf{z}_1\mid r)$. To sample from the cost-biased distribution, the velocity field is modified by a guidance:
\begin{equation}
    \mathbf{g}_t
    ~ \propto ~
    \nabla_{\mathbf{z}_t}
    \log
    \mathbb{E}_{p_\theta(\mathbf{z}_1\mid \mathbf{z}_t,r)}
    \left[
    \exp\left(
    -\lambda
    \hat{c}_\phi(\mathbf{z}_1,o;\lambda)
    \right)
    \right],
    \label{eq:method_generation_guidance}
\end{equation}
where $\mathbf{z}_t$ is the current latent state and $\mathbf{z}_1$ is the final clean latent design.

In practice, SA-MC estimates this guidance by first approximating the conditional clean-design distribution with a general Gaussian, and then using cost-weighted clean-latent samples to construct the correction direction. Specifically, we approximate
\begin{equation}
    p_\theta(\mathbf{z}_1\mid \mathbf{z}_t,r)
    \approx
    q_t(\mathbf{z}_1\mid \mathbf{z}_t,r)
    =
    \mathcal{N}
    \left(
    \boldsymbol{\mu}_t,
    \boldsymbol{\Sigma}_t
    \right),
    \qquad
    \boldsymbol{\Sigma}_t
    =
    \mathbf{L}_t\mathbf{L}_t^\top .
    \label{eq:method_samc_gaussian}
\end{equation}
Here, $\boldsymbol{\mu}_t$ is the clean latent prediction from the flow-matching model, and $\boldsymbol{\Sigma}_t$ is estimated from forward evaluations along the sampling trajectory. The detailed covariance update is provided in \textcolor{blue}{Supplementary Note~\ref{subsec:si_samc_statement}}.

We draw Monte Carlo clean-latent samples from this approximation:
\begin{equation}
    \mathbf{z}_{1}^{[s]}
    =
    \boldsymbol{\mu}_t
    +
    \mathbf{L}_t
    \boldsymbol{\epsilon}^{[s]},
    \qquad
    \boldsymbol{\epsilon}^{[s]}\sim\mathcal{N}(0,I),
    \qquad
    s=1,\ldots,S .
    \label{eq:method_samc_samples}
\end{equation}
The cost predictor scores these samples, and lower-cost samples receive larger weights. With
\begin{equation}\omega^{[s]}
=
\operatorname{softmax}_{s}
\left[
-\lambda
\hat{c}_\phi
\left(
\mathbf{z}_{1}^{[s]},o;\lambda
\right)
\right],\end{equation}
the SA-MC guidance is
\begin{equation}
    \mathbf{g}^{\mathrm{SA\mbox{-}MC}}_t
    =
    b_t
    \sum_{s=1}^{S}
    \omega^{[s]}
    \left(
    \mathbf{z}_{1}^{[s]}
    -
    \boldsymbol{\mu}_t
    \right)
    =
    b_t
    \sum_{s=1}^{S}
    \omega^{[s]}
    \mathbf{L}_t
    \boldsymbol{\epsilon}^{[s]} .
    \label{eq:method_samc_guidance}
\end{equation}
This expression gives the practical guidance direction: the flow-matching model proposes possible clean latent designs, the cost predictor evaluates them, and the weighted displacement points the trajectory towards lower-cost clean designs.

The guided flow is then
\begin{equation}
    \dot{\mathbf{z}}_t
    =
    v_t(\mathbf{z}_t\mid r)
    +
    \mathbf{g}^{\mathrm{SA\mbox{-}MC}}_t .
    \label{eq:method_samc_velocity}
\end{equation}
After the final sampling step, the guided latent variable $\mathbf{z}_1$ is decoded by the Shape-VAE into an SDF and mesh.

\bibliographystyle{unsrt}
\bibliography{references}

@book{hucho2013aerodynamics,
  title={Aerodynamics of Road Vehicles: from fluid mechanics to vehicle engineering},
  author={Hucho, Wolf-Heinrich},
  year={2013},
  publisher={Elsevier}
}

@book{national2015cost,
  title={Cost, Effectiveness, and Deployment of Fuel Economy Technologies for Light-duty Vehicles},
  author={National Research Council and Division on Engineering and Physical Sciences and Board on Energy and Environmental Systems and Committee on the Assessment of Technologies for Improving Fuel Economy of Light-Duty Vehicles and Phase},
  year={2015},
  publisher={National Academies Press}
}

@book{anderson2011ebook,
  title={Fundamentals of Aerodynamics},
  author={Anderson, John},
  year={2011},
  publisher={McGraw Hill}
}

@book{raymer2012aircraft,
  title={Aircraft Design: A Conceptual Approach},
  author={Raymer, Daniel},
  year={2012},
  publisher={American Institute of Aeronautics and Astronautics, Inc.}
}

@article{martins2022aerodynamic,
  title={Aerodynamic Design Optimization: Challenges and Perspectives},
  author={Martins, Joaquim RRA},
  journal={Computers \& Fluids},
  volume={239},
  pages={105391},
  year={2022},
  publisher={Elsevier}
}

@inproceedings{ranscombe2011characterizing,
  title={Characterizing and Evaluating Aesthetic Features in Vehicle Design},
  author={Ranscombe, Charlie and Hicks, Ben and Mullineux, Glen and Singh, Baljinder and others},
  booktitle={ICORD 11: Proceedings of the 3rd International Conference on Research into Design Engineering, Bangalore, India, 10.-12.01. 2011},
  pages={792--799},
  year={2011}
}

@article{li2022machine,
  title={Machine Learning in Aerodynamic Shape Optimization},
  author={Li, Jichao and Du, Xiaosong and Martins, Joaquim RRA},
  journal={Progress in Aerospace Sciences},
  volume={134},
  pages={100849},
  year={2022},
  publisher={Elsevier}
}

@article{he2018aerodynamic,
  title={An Aerodynamic Design Optimization Framework Using a Discrete Adjoint Approach with OpenFOAM},
  author={He, Ping and Mader, Charles A and Martins, Joaquim RRA and Maki, Kevin J},
  journal={Computers \& Fluids},
  volume={168},
  pages={285--303},
  year={2018},
  publisher={Elsevier}
}

@article{papadimitriou2008aerodynamic,
  title={Aerodynamic Shape Optimization Using First and Second Order Adjoint and Direct Approaches},
  author={Papadimitriou, Dimitrios I and Giannakoglou, Kyriakos C},
  journal={Archives of Computational Methods in Engineering},
  volume={15},
  number={4},
  pages={447--488},
  year={2008},
  publisher={Springer}
}

@article{queipo2005surrogate,
  title={Surrogate-Based Analysis and Optimization},
  author={Queipo, Nestor V and Haftka, Raphael T and Shyy, Wei and Goel, Tushar and Vaidyanathan, Rajkumar and Tucker, P Kevin},
  journal={Progress in aerospace sciences},
  volume={41},
  number={1},
  pages={1--28},
  year={2005},
  publisher={Elsevier}
}

@article{tran2024aerodynamics,
  title={Aerodynamics-Guided Machine Learning for Design Optimization of Electric Vehicles},
  author={Tran, Jonathan and Fukami, Kai and Inada, Kenta and Umehara, Daisuke and Ono, Yoshimichi and Ogawa, Kenta and Taira, Kunihiko},
  journal={Communications Engineering},
  volume={3},
  number={1},
  pages={174},
  year={2024},
  publisher={Nature Publishing Group UK London}
}

@article{vatani2025tripoptimizer,
  title={Tripoptimizer: Generative 3d Shape Optimization and Drag Prediction Using Triplane VAE Networks},
  author={Vatani, Parsa and Elrefaie, Mohamed and Nazarpour, Farhad and Ahmed, Faez},
  journal={arXiv preprint arXiv:2509.12224},
  year={2025}
}

@article{you2025physgen,
  title={PhysGen: Physically Grounded 3D Shape Generation for Industrial Design},
  author={You, Yingxuan and Zhao, Chen and Zhang, Hantao and Xu, Mingda and Fua, Pascal},
  journal={arXiv preprint arXiv:2512.00422},
  year={2025}
}

@inproceedings{hao3did,
  title={3DID: Direct 3D Inverse Design for Aerodynamics with Physics-Aware Optimization},
  author={Hao, Yuze and Zhu, Linchao and Yang, Yi},
  booktitle={The Thirty-ninth Annual Conference on Neural Information Processing Systems},
year={2025}
}

@article{chung2022diffusion,
  title={Diffusion Posterior Sampling for General Noisy Inverse Problems},
  author={Chung, Hyungjin and Kim, Jeongsol and Mccann, Michael T and Klasky, Marc L and Ye, Jong Chul},
  journal={arXiv preprint arXiv:2209.14687},
  year={2022}
}

@inproceedings{song2023loss,
  title={Loss-guided Diffusion Models for Plug-and-play Controllable Generation},
  author={Song, Jiaming and Zhang, Qinsheng and Yin, Hongxu and Mardani, Morteza and Liu, Ming-Yu and Kautz, Jan and Chen, Yongxin and Vahdat, Arash},
  booktitle={International Conference on Machine Learning},
  pages={32483--32498},
  year={2023},
  organization={PMLR}
}

@article{feng2025guidance,
  title={On the Guidance of Flow Matching},
  author={Feng, Ruiqi and Yu, Chenglei and Deng, Wenhao and Hu, Peiyan and Wu, Tailin},
  journal={arXiv preprint arXiv:2502.02150},
  year={2025}
}

@article{boystweedie,
  title={Tweedie Moment Projected Diffusions for Inverse Problems},
  author={Boys, Benjamin and Girolami, Mark and Pidstrigach, Jakiw and Reich, Sebastian and Mosca, Alan and Akyildiz, Omer Deniz},
  journal={Transactions on Machine Learning Research},
  year={2024}
}

@article{hunyuan3d2025hunyuan3d,
  title={Hunyuan3D 2.1: From Images to High-Fidelity 3D Assets with Production-Ready PBR Material},
  author={Hunyuan3D, Team and Yang, Shuhui and Yang, Mingxin and Feng, Yifei and Huang, Xin and Zhang, Sheng and He, Zebin and Luo, Di and Liu, Haolin and Zhao, Yunfei and others},
  journal={arXiv preprint arXiv:2506.15442},
  year={2025}
}

@inproceedings{jasak2007openfoam,
  title={OpenFOAM: A C++ library for complex physics simulations},
  author={Jasak, Hrvoje and Jemcov, Aleksandar and Tukovic, Zeljko and others},
  booktitle={International workshop on coupled methods in numerical dynamics},
  volume={1000},
  pages={1--20},
  year={2007},
  organization={Dubrovnik, Croatia)}
}

@inproceedings{elrefaie2024drivaernet,
  title={Drivaernet: A Parametric Car Dataset for Data-driven Aerodynamic Design and Graph-based Drag Prediction},
  author={Elrefaie, Mohamed and Dai, Angela and Ahmed, Faez},
  booktitle={International Design Engineering Technical Conferences and Computers and Information in Engineering Conference},
  volume={88360},
  pages={V03AT03A019},
  year={2024},
  organization={American Society of Mechanical Engineers}
}

@article{aultman2021effects,
  title={Effects of Wheel Rotation on Long-period Wake Dynamics of the DrivAer Fastback Model},
  author={Aultman, Matthew and Auza-Gutierrez, Rodrigo and Disotell, Kevin and Duan, Lian},
  journal={Fluids},
  volume={7},
  number={1},
  pages={19},
  year={2021},
  publisher={MDPI}
}

@article{byrd1994representations,
  title={Representations of Quasi-Newton Matrices and Their Use in Limited Memory Methods},
  author={Byrd, Richard H and Nocedal, Jorge and Schnabel, Robert B},
  journal={Mathematical Programming},
  volume={63},
  number={1},
  pages={129--156},
  year={1994},
  publisher={Springer}
}

@article{elrefaie2024drivaernet++,
  title={Drivaernet++: A Large-scale Multimodal Car Dataset with Computational Fluid Dynamics Simulations and Deep Learning Benchmarks},
  author={Elrefaie, Mohamed and Morar, Florin and Dai, Angela and Ahmed, Faez},
  journal={Advances in Neural Information Processing Systems},
  volume={37},
  pages={499--536},
  year={2024}
}

@article{fu2020d4rl,
  title={D4rl: Datasets for Deep Data-driven Reinforcement Learning},
  author={Fu, Justin and Kumar, Aviral and Nachum, Ofir and Tucker, George and Levine, Sergey},
  journal={arXiv preprint arXiv:2004.07219},
  year={2020}
}

@article{janner2022planning,
  title={Planning with Diffusion for Flexible Behavior Synthesis},
  author={Janner, Michael and Du, Yilun and Tenenbaum, Joshua B and Levine, Sergey},
  journal={arXiv preprint arXiv:2205.09991},
  year={2022}
}

@article{levine2018reinforcement,
  title={Reinforcement Learning and Control as Probabilistic Inference: Tutorial and Review},
  author={Levine, Sergey},
  journal={arXiv preprint arXiv:1805.00909},
  year={2018}
}

@book{nocedal2006numerical,
  title={Numerical Optimization},
  author={Nocedal, Jorge and Wright, Stephen J},
  year={2006},
  publisher={Springer}
}

@article{powell1978algorithms,
  title={Algorithms for Nonlinear Constraints that Use Lagrangian Functions},
  author={Powell, Michael JD},
  journal={Mathematical programming},
  volume={14},
  number={1},
  pages={224--248},
  year={1978},
  publisher={Springer}
}

@inproceedings{sung2025blendednet,
  title={Blendednet: A Blended Wing Body Aircraft Dataset and Surrogate Model for Aerodynamic Predictions},
  author={Sung, Nicholas and Spreizer, Steven and Elrefaie, Mohamed and Samuel, Kaira and Jones, Matthew C and Ahmed, Faez},
  booktitle={International Design Engineering Technical Conferences and Computers and Information in Engineering Conference},
  volume={89237},
  pages={V03BT03A049},
  year={2025},
  organization={American Society of Mechanical Engineers}
}

\section*{Acknowledgement}
The work is supported by the National Natural Science Foundation of China (No. 62276269 and No. 92270118) and the Beijing Natural Science Foundation (No. 1232009).

\section*{Author contributions} 
H.S. led the project. H.C., N.L., and L.C. jointly organized this project. H.C. and N.L. led the work on the experimental studies. N.L. was responsible for the theoretical proofs of the corresponding parts. L.C. provided support on the generative modeling components. H.S. supervised all aspects of the project. All authors participated in writing and revising the manuscript.

\section*{Corresponding authors} 
Hao Sun (\url{haosun@ruc.edu.cn}).

\section*{Competing interests}
The authors declare no competing interests.

\clearpage
\setcounter{figure}{0}
\renewcommand{\figurename}{Extended Data Figure}
\setcounter{table}{0}
\renewcommand{\tablename}{Extended Data Table}


\begin{figure}[t!]
 \centering
\includegraphics[width=0.90\linewidth]{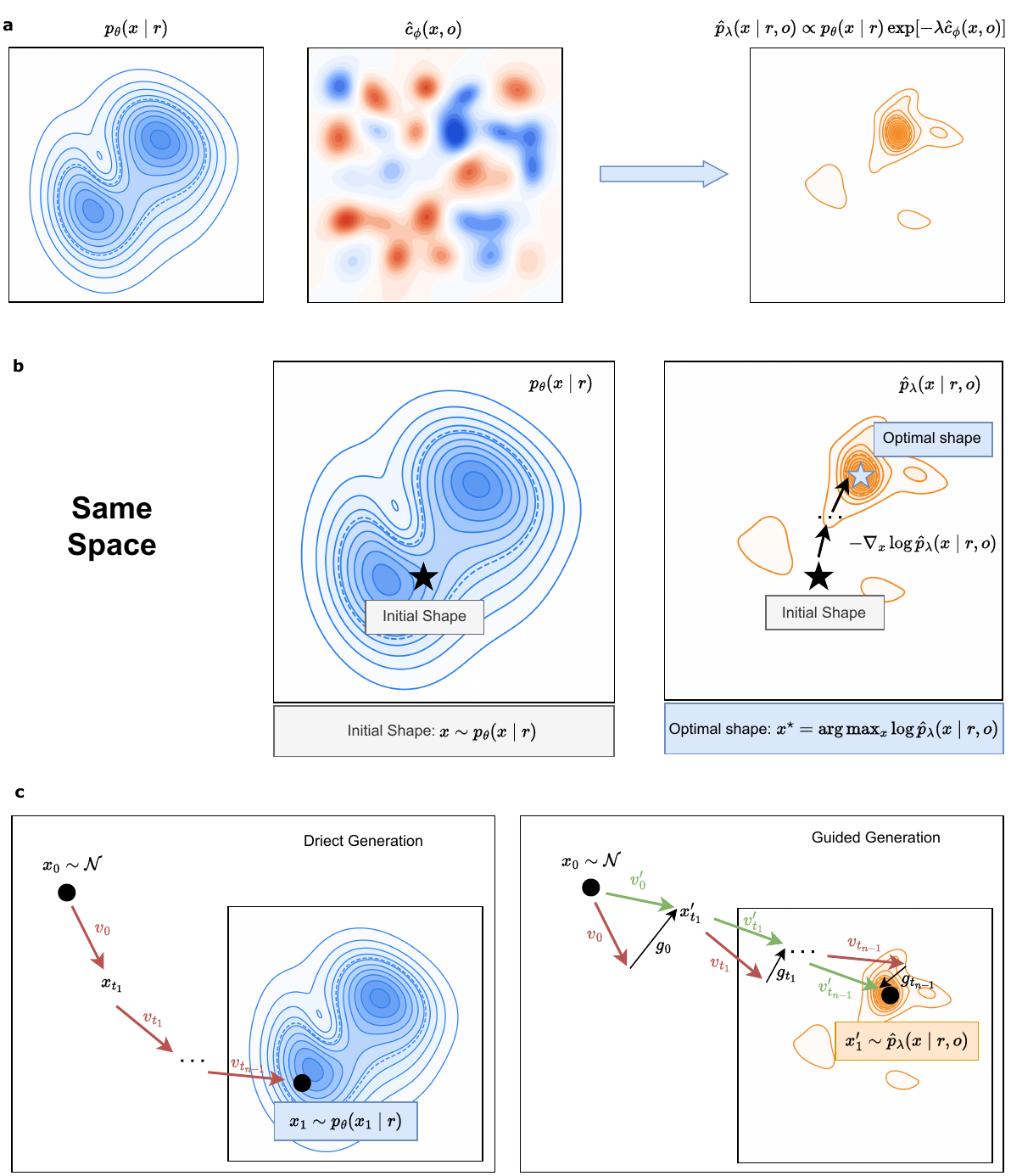}
\caption{
\textbf{Unified probabilistic view of aerodynamic optimization and generation.}
\textbf{a}, A cost-biased target distribution is constructed by reweighting the visual-conditional design distribution $p_{\mathrm{data}}(x\mid r)$ with the aerodynamic cost $c(x,o)$, yielding $p_\lambda(x\mid r,o)\propto p_{\mathrm{data}}(x\mid r)\exp[-\lambda c(x,o)]$. High-density regions correspond to designs that are visually plausible and aerodynamically favourable.
\textbf{b}, visual-feature-preserving optimization searches for a high-density point of this target distribution. The density term keeps the design close to the visual design manifold, while the cost term drives aerodynamic improvement.
\textbf{c}, Guided generation samples from the same target distribution. The original flow-matching velocity samples from the visual design prior, and aerodynamic guidance adds a correction $g_t$ that steers the trajectory towards lower-cost clean designs. Optimization and generation are therefore two complementary operations on the same cost-biased design distribution.
}
\label{fig:probability}
\end{figure}

\begin{figure}[t!]
  \centering
   \includegraphics[width=0.99\linewidth]{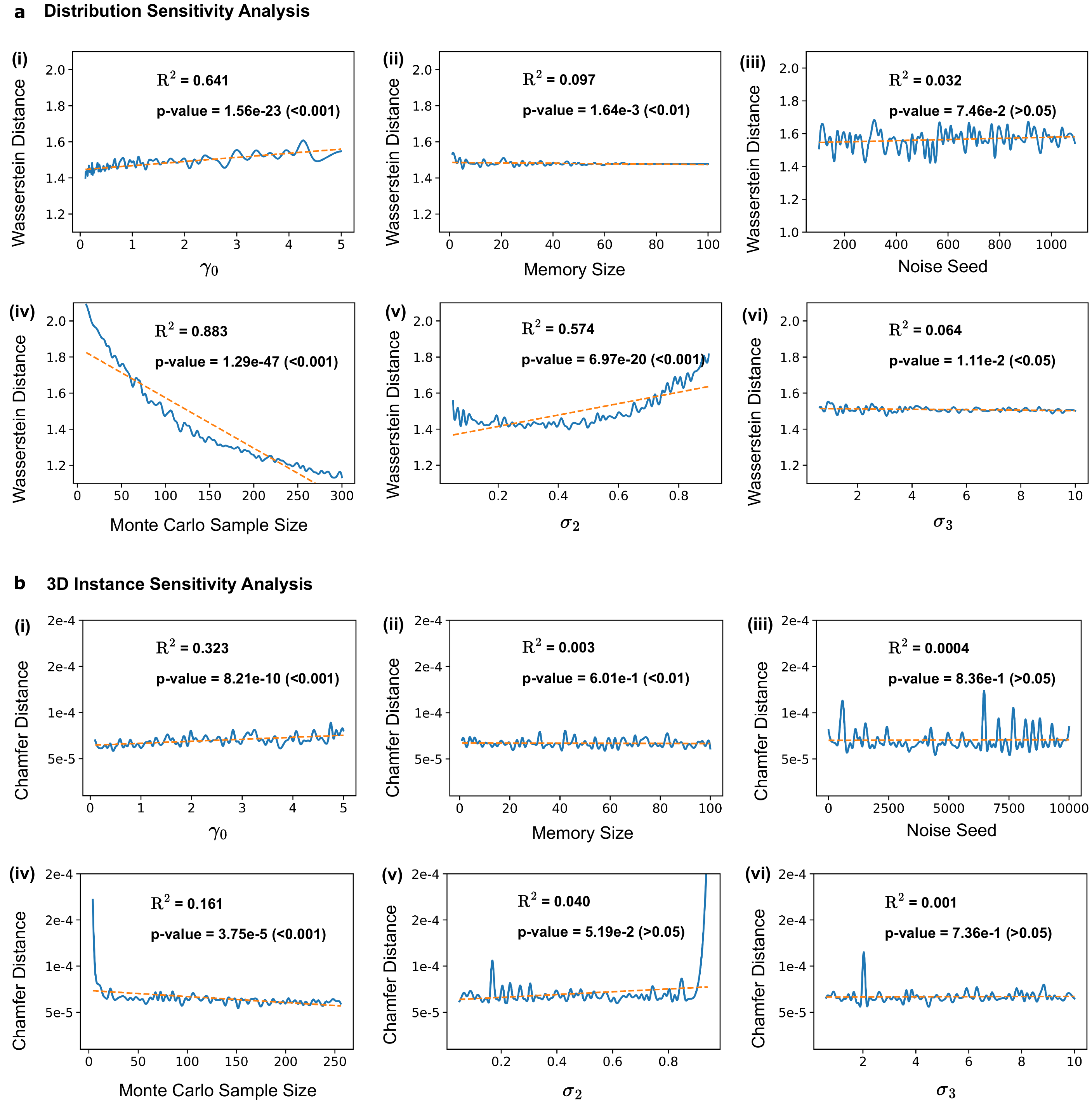}
\caption{\textbf{Parameter sensitivity of SA-MC guidance.}
\textbf{a}, Distribution-level sensitivity measured by Wasserstein distance while varying the initial covariance scale $\gamma_0$, memory size, noise seed, Monte Carlo sample size, and damping parameters $\sigma_2$ and $\sigma_3$. The Monte Carlo sample size has the clearest effect on distribution matching, while $\gamma_0$ and $\sigma_2$ also affect the target approximation. Memory size, noise seed and $\sigma_3$ have weaker effects over the tested ranges.
\textbf{b}, Instance-level sensitivity measured by Chamfer distance for generated 3D shapes under the same parameter sweeps. The individual 3D outputs remain comparatively stable for most settings, with noticeable changes mainly appearing under very small Monte Carlo sample sizes or extreme damping choices.}
   \label{fig:sensitivity}
\end{figure}

\clearpage
\setcounter{figure}{0}
\renewcommand{\figurename}{Supplementary Figure}
\setcounter{table}{0}
\renewcommand{\tablename}{Supplementary Table}

\begin{center}
{\Large\bfseries Supplementary Information\par}
\vspace{0.6em}
{\large for\par}
\vspace{0.6em}
{\Large\itshape Optimization and Generation in Aerodynamic Inverse Design\par}
\end{center}

\vspace{24pt}

\noindent This Supplementary Information provides additional details on the theoretical derivations, model implementation, datasets, CFD evaluation settings, miniature wind-tunnel experiments, sensitivity analyses, and additional experimental results for aerodynamic inverse design.

\clearpage

{\footnotesize
\tableofcontents
}

\clearpage

\section{Complete implementation framework}
\label{sec:si_theory}

This section gives the complete implementation logic used by the method. The aim is to make clear what is solved, why it is solved, and how each non-obvious term is computed. Supplementary Section~\ref{sec:si_derivations} is reserved for the theoretical results and proofs that justify the nontrivial steps in this section. Algorithmic pseudocode and time complexity are given in Supplementary Section~\ref{sec:si_algorithms}.

The notation follows the main manuscript. The visual representation or design condition is denoted by $r$, the operating condition by $o$, the Shape-VAE latent design variable by $\mathbf{z}$, and the guidance strength by $\lambda$. The learned visual design prior is $p_\theta(\mathbf{z}\mid r)$ and the aerodynamic predictor is $\hat{c}_\phi(\mathbf{z},o;\lambda)$. For compact derivations, later sections sometimes use a generic variable $\mathbf{x}$ and energy $J$; the latent implementation is obtained by substituting $\mathbf{x}=\mathbf{z}$ and $J(\mathbf{z})=\lambda\hat{c}_\phi(\mathbf{z},o;\lambda)$.

\subsection{Latent cost-biased design distribution}
\label{subsec:si_cost_biased_statement}

The common target of the whole framework is to improve aerodynamic performance while remaining in the high-density region of visually consistent designs represented by the visual prior. A cost-only objective is not sufficient for this purpose, because it can drive the latent design away from the visually consistent designs represented by $r$. We therefore bias the learned latent prior by the aerodynamic cost and use the following cost-biased design distribution:

\begin{equation}
    \hat{p}_\lambda(\mathbf{z}\mid r,o)
    =
    \frac{1}{Z_\lambda(r,o)}
    p_\theta(\mathbf{z}\mid r)
    \exp\left[-\lambda\hat{c}_\phi(\mathbf{z},o;\lambda)\right].
    \label{eq:si_cost_biased_latent}
\end{equation}
Here, $p_\theta(\mathbf{z}\mid r)$ preserves the visual design prior, while the exponential term increases the probability of latent designs with lower predicted aerodynamic cost. The guidance scale $\lambda$ controls the strength of this bias. The fact that Eq.~\eqref{eq:si_cost_biased_latent} is the variational optimum of a free-energy inverse-design problem is stated in Theorem~\ref{thm:si_distributional_target}.

This distribution is used in two ways. Visual-feature-preserving optimization returns a point estimate by searching for the maximum-density latent design,
\begin{equation}
    \mathbf{z}^{\star}
    =
    \arg\max_{\mathbf{z}}
    \hat{p}_\lambda(\mathbf{z}\mid r,o)
    =
    \arg\min_{\mathbf{z}}
    \left[
    \lambda\hat{c}_\phi(\mathbf{z},o;\lambda)
    -
    \log p_\theta(\mathbf{z}\mid r)
    \right].
    \label{eq:si_map_objective}
\end{equation}
Guided generation returns diverse candidates by sampling from the same distribution,
\begin{equation}
    \mathbf{z}^{[s]}
    \sim
    \hat{p}_\lambda(\mathbf{z}\mid r,o),
    \qquad
    s=1,\ldots,S.
    \label{eq:si_generation_sampling_target}
\end{equation}
Thus, the optimizer and the generator solve different computational forms of the same inverse-design target: optimization uses the MAP consequence of Theorem~\ref{thm:si_distributional_target}, while generation samples from the same theorem's distributional solution.

\subsection{Distributional training of the aerodynamic predictor}
\label{subsec:si_loss_statement}

The predictor is trained to support Eq.~\eqref{eq:si_cost_biased_latent}. Downstream optimization and generation do not use costs only as scalar labels; they use exponentiated costs to redistribute probability mass. The predictor should therefore reproduce the cost-induced probability redistribution, not only minimize pointwise error.

For a mini-batch $\mathcal{B}=\{(\mathbf{z}^{[i]},o^{[i]},y^{[i]})\}_{i=1}^{B}$, where $y^{[i]}$ is the simulated aerodynamic cost, the implementation first converts simulated and predicted costs into two discrete cost-biased distributions:
\begin{equation}
    w^{[i]}
    =
    \frac{\exp[-\lambda y^{[i]}]}{\sum_{j=1}^{B}\exp[-\lambda y^{[j]}]},
    \qquad
    \hat{w}^{[i]}
    =
    \frac{\exp[-\lambda\hat{c}_\phi(\mathbf{z}^{[i]},o^{[i]};\lambda)]}
    {\sum_{j=1}^{B}\exp[-\lambda\hat{c}_\phi(\mathbf{z}^{[j]},o^{[j]};\lambda)]}.
    \label{eq:si_minibatch_statement_weights}
\end{equation}
The predictor is then trained with the mini-batch symmetric-KL loss
\begin{equation}
    \mathcal{L}_{\mathrm{SKL}}^{\mathcal{B}}
    =
    D_{\mathrm{KL}}(w\,\|\,\hat{w})
    +
    D_{\mathrm{KL}}(\hat{w}\,\|\,w)
    =
    \lambda\sum_{i=1}^{B}
    \left(w^{[i]}-\hat{w}^{[i]}\right)
    \left[
    \hat{c}_\phi(\mathbf{z}^{[i]},o^{[i]};\lambda)-y^{[i]}
    \right].
    \label{eq:si_minibatch_statement_skl}
\end{equation}
The normalization in Eq.~\eqref{eq:si_minibatch_statement_weights} is computed within each mini-batch using a log-sum-exp implementation for numerical stability. The guidance scale $\lambda$ is provided as an input to the predictor, so a single predictor can represent different strengths of aerodynamic preference rather than requiring a separate model for each value of $\lambda$.

The compact mini-batch form in Eq.~\eqref{eq:si_minibatch_statement_skl} follows from the symmetric-KL identity in Theorem~\ref{thm:si_skl_identity}. Additional analysis of why this differs from MSE training is given in Supplementary Section~\ref{sec:si_supp_analysis}.

\subsection{visual-feature-preserving latent optimization}
\label{subsec:si_latent_opt_statement}

Visual-feature-preserving optimization uses the cost-biased distribution in Eq.~\eqref{eq:si_cost_biased_latent} as a MAP problem. The purpose is to improve the predicted aerodynamic cost while keeping the latent design in a high-density region of the learned design prior. Starting from an existing geometry $\mathbf{s}_{\mathrm{ref}}$, we encode it as
\begin{equation}
    \mathbf{z}^{(0)}=E(\mathbf{s}_{\mathrm{ref}}).
    \label{eq:si_initial_latent_workflow}
\end{equation}
The update direction is the gradient of the log target density:
\begin{equation}
    \nabla_{\mathbf{z}}\log\hat{p}_\lambda(\mathbf{z}\mid r,o)
    =
    \nabla_{\mathbf{z}}\log p_\theta(\mathbf{z}\mid r)
    -
    \lambda\nabla_{\mathbf{z}}\hat{c}_\phi(\mathbf{z},o;\lambda).
    \label{eq:si_unified_opt_gradient}
\end{equation}
The second term in Eq.~\eqref{eq:si_unified_opt_gradient} is obtained by back-propagating through the aerodynamic predictor. The non-obvious term is the density gradient $\nabla_{\mathbf{z}}\log p_\theta(\mathbf{z}\mid r)$, because the prior is represented by the flow-matching model rather than by an explicit density.

For the affine probability path used by the flow model, define
\begin{equation}
    a_t=\frac{\dot{\sigma}_t}{\sigma_t},
    \qquad
    b_t=\frac{\dot{\alpha}_t\sigma_t-\dot{\sigma}_t\alpha_t}{\sigma_t},
    \qquad
    s_t=\frac{b_t\sigma_t^2}{\alpha_t}.
    \label{eq:si_schedule_coefficients}
\end{equation}
By Theorem~\ref{thm:si_flow_score_clean_mean}, the flow velocity and the intermediate density score satisfy
\begin{equation}
    v_t(\mathbf{z}_t\mid r)
    =
    \frac{\dot{\alpha}_t}{\alpha_t}\mathbf{z}_t
    -
    s_t\nabla_{\mathbf{z}_t}\log p_t(\mathbf{z}_t\mid r),
    \label{eq:si_fm_score_relation}
\end{equation}
so a forward query of the pretrained flow model gives the density direction
\begin{equation}
    \nabla_{\mathbf{z}_t}\log p_t(\mathbf{z}_t\mid r)
    =
    \frac{
    \frac{\dot{\alpha}_t}{\alpha_t}\mathbf{z}_t
    -
    v_t(\mathbf{z}_t\mid r)
    }{s_t}.
    \label{eq:si_fm_score}
\end{equation}
During optimization, we evaluate this score at an intermediate noisy state for numerical stability, combine it with the cost-predictor gradient, and update the latent variable:
\begin{equation}
    \mathbf{z}^{(m+1)}
    =
    \mathbf{z}^{(m)}
    +
    \eta_m
    \left[
    \nabla_{\mathbf{z}}\log p_\theta(\mathbf{z}^{(m)}\mid r)
    -
    \lambda
    \nabla_{\mathbf{z}}\hat{c}_\phi(\mathbf{z}^{(m)},o;\lambda)
    \right].
    \label{eq:si_practical_latent_update}
\end{equation}
The optimized latent variable is decoded by the Shape-VAE into an SDF and mesh. The time-annealed implementation used in the experiments is given in Algorithm~\ref{alg:time_annealed_opt}. The score conversion in Eqs.~\eqref{eq:si_fm_score_relation}--\eqref{eq:si_fm_score} is the optimization part of Theorem~\ref{thm:si_flow_score_clean_mean}.

\subsection{Guided generation and the guidance correction}
\label{subsec:si_guidance_statement}

Guided generation uses Eq.~\eqref{eq:si_cost_biased_latent} as a sampling target. Without aerodynamic guidance, the flow-matching velocity field $v_t(\mathbf{z}_t\mid r)$ transports a source latent sample to the visual prior $p_\theta(\mathbf{z}_1\mid r)$. To sample instead from the cost-biased target, we modify the velocity field:
\begin{equation}
    v'_t(\mathbf{z}_t\mid r,o)
    =
    v_t(\mathbf{z}_t\mid r)
    +
    \mathbf{g}_t(\mathbf{z}_t,r,o).
    \label{eq:si_guided_velocity}
\end{equation}
The quantity that must be solved is the correction $\mathbf{g}_t$. This correction asks the following question at every intermediate state: among the clean latent designs that can be reached from $\mathbf{z}_t$, which directions lead to lower aerodynamic cost?

The clean-prediction mean from the current state, derived in Theorem~\ref{thm:si_flow_score_clean_mean}, is
\begin{equation}
    \boldsymbol{\mu}(\mathbf{z}_t)
    =
    \mathbb{E}_{p_\theta(\mathbf{z}_1\mid\mathbf{z}_t,r)}[\mathbf{z}_1]
    =
    -\frac{a_t}{b_t}\mathbf{z}_t
    +
    \frac{1}{b_t}v_t(\mathbf{z}_t\mid r).
    \label{eq:si_clean_mean_statement}
\end{equation}
Writing $J(\mathbf{z}_1)=\lambda\hat{c}_\phi(\mathbf{z}_1,o;\lambda)$, the cost-biased correction is
\begin{equation}
    \mathbf{g}_t
    =
    \frac{b_t\sigma_t^2}{\alpha_t}
    \nabla_{\mathbf{z}_t}
    \log
    \mathbb{E}_{p_\theta(\mathbf{z}_1\mid\mathbf{z}_t,r)}
    \left[\exp[-J(\mathbf{z}_1)]\right].
    \label{eq:si_exact_guidance_statement}
\end{equation}
Equation~\eqref{eq:si_exact_guidance_statement} is the coefficient-explicit form of the guidance equation in the main text. The only non-obvious object is the conditional clean-design distribution $p_\theta(\mathbf{z}_1\mid\mathbf{z}_t,r)$. Existing training-free guidance methods can be read as different approximations to this conditional distribution: DPS uses a point approximation at $\boldsymbol{\mu}(\mathbf{z}_t)$, while LGD-MC and SIM-MC use simplified Gaussian sampling. The score-form equivalence and the three estimator forms are stated and proved in Theorem~\ref{thm:loss_guided_estimators}.

\subsection{SA-MC solver for covariance-aware guided generation}
\label{subsec:si_samc_statement}

SA-MC is the solver used to evaluate Eq.~\eqref{eq:si_exact_guidance_statement} in high-dimensional 3D latent space. It replaces the unknown conditional clean-design distribution with a covariance-aware Gaussian approximation:
\begin{equation}
    p_\theta(\mathbf{z}_1\mid\mathbf{z}_t,r)
    \approx
    q_t(\mathbf{z}_1\mid\mathbf{z}_t,r)
    =
    \mathcal{N}(\boldsymbol{\mu}_t,\boldsymbol{\Sigma}_t),
    \qquad
    \boldsymbol{\Sigma}_t=\mathbf{L}_t\mathbf{L}_t^\top .
    \label{eq:si_samc_gaussian_workflow}
\end{equation}
The mean $\boldsymbol{\mu}_t$ is computed directly from Eq.~\eqref{eq:si_clean_mean_statement}. The difficult term is the covariance, or equivalently the standard-deviation map $\mathbf{L}_t$. The covariance-mismatch bound in Theorem~\ref{thm:lgd_sim_gap} shows why the covariance matters. A covariance-matched Gaussian should use
\begin{equation}
    \boldsymbol{\Sigma}^{\star}_t
    =
    \frac{\sigma_t^2}{\alpha_t}
    \nabla_{\mathbf{z}_t}\boldsymbol{\mu}_t(\mathbf{z}_t),
    \label{eq:si_covariance_matching_target}
\end{equation}
but directly forming the Jacobian $\nabla_{\mathbf{z}_t}\boldsymbol{\mu}_t$ is prohibitively expensive.

SA-MC estimates this covariance from the sampler trajectory. Let $\mathbf{J}_k\approx\nabla_{\mathbf{z}_k}v_{t_k}(\mathbf{z}_k)$ and define the local clean-prediction Jacobian proxy
\begin{equation}
    \mathbf{B}_{k}
    =
    -\frac{a_{t_k}}{b_{t_k}}I
    +
    \frac{1}{b_{t_k}}\mathbf{J}_{k}
    \approx
    \nabla_{\mathbf{z}_t}
    \mathbb{E}_{p_\theta(\mathbf{z}_1\mid\mathbf{z}_t,r)}[\mathbf{z}_1].
    \label{eq:si_B_definition_statement}
\end{equation}
The observed change between consecutive sampler states gives a secant pair:
\begin{equation}
    \mathbf{s}_k=\mathbf{z}_{k+1}-\mathbf{z}_k,
    \qquad
    \mathbf{r}_k=v_{t_{k+1}}(\mathbf{z}_{k+1})-v_{t_k}(\mathbf{z}_k),
    \qquad
    \mathbf{y}_k=-\frac{a_{t_k}}{b_{t_k}}\mathbf{s}_k+\frac{1}{b_{t_k}}\mathbf{r}_k.
    \label{eq:si_secant_statement}
\end{equation}
With $\rho_k=(\mathbf{y}_k^{\top}\mathbf{s}_k)^{-1}$, $\mathbf{V}_k=I-\rho_k\mathbf{s}_k\mathbf{y}_k^{\top}$, $u_k=b_{t_k}/b_{t_{k+1}}$ and $w_k=(a_{t_k}-a_{t_{k+1}})/b_{t_{k+1}}$, the covariance proxy is updated by
\begin{equation}
    \mathbf{B}_{k+1}
    =
    u_k\left(
    \mathbf{V}_k^{\top}\mathbf{B}_k\mathbf{V}_k
    +
    \rho_k\mathbf{y}_k\mathbf{y}_k^{\top}
    \right)
    +
    w_kI.
    \label{eq:si_updateB_statement}
\end{equation}
By Theorem~\ref{thm:updataB}, this update is the symmetric secant recursion for the clean-mean Jacobian proxy, so it extracts local covariance information without computing a full Jacobian.

Because directly storing or factorizing a dense $d\times d$ matrix is also expensive, the covariance proxy is maintained as a scalar plus a low-rank correction,
\begin{equation}
    \mathbf{B}_{k}
    =
    \gamma_k I
    +
    \mathbf{U}_k\boldsymbol{\Gamma}_k\mathbf{U}_k^{\top},
    \label{eq:si_compact_B_statement}
\end{equation}
where $\mathbf{U}_k$ stores a short memory of secant directions and $\boldsymbol{\Gamma}_k$ is a small dense matrix. The existence and recursion of this compact form are stated in Theorem~\ref{thm:Bk_compact_form}. In practice, the schedule factor in Eq.~\eqref{eq:si_covariance_matching_target} is absorbed into the compact matrix before sampling.

To draw samples, we need the standard-deviation map $\mathbf{L}_t$ satisfying $\boldsymbol{\Sigma}_t=\mathbf{L}_t\mathbf{L}_t^\top$. For a compact positive-definite matrix $\mathbf{B}=\gamma I+\mathbf{U}\boldsymbol{\Gamma}\mathbf{U}^{\top}$, SA-MC computes this map through a reduced QR factorization $\mathbf{U}=\mathbf{Q}\mathbf{R}$ and a small Cholesky factorization
\begin{equation}
    \mathbf{C}
    =
    \gamma I
    +
    \mathbf{R}\boldsymbol{\Gamma}\mathbf{R}^{\top}
    =
    \mathbf{L}_{\mathbf{C}}\mathbf{L}_{\mathbf{C}}^{\top}.
    \label{eq:si_small_cholesky_statement}
\end{equation}
By Theorem~\ref{thm:semi_numerical_sqrt}, the resulting matrix square root is applied implicitly as
\begin{equation}
    \mathbf{L}
    =
    \sqrt{\gamma}I
    +
    \mathbf{Q}
    \left(\mathbf{L}_{\mathbf{C}}-\sqrt{\gamma}I\right)
    \mathbf{Q}^{\top}.
    \label{eq:si_semi_sqrt_statement}
\end{equation}
This avoids materializing a dense covariance or square-root matrix.

After $\boldsymbol{\mu}_t$ and $\mathbf{L}_t$ have been obtained, SA-MC draws clean-latent proposals and weights them by aerodynamic preference,
\begin{equation}
    \mathbf{z}_1^{[s]}
    =
    \boldsymbol{\mu}_t+\mathbf{L}_t\boldsymbol{\epsilon}^{[s]},
    \qquad
    \boldsymbol{\epsilon}^{[s]}\sim\mathcal{N}(0,I),
    \qquad
    \omega^{[s]}
    =
    \frac{\exp[-J(\mathbf{z}_1^{[s]})]}
    {\sum_j\exp[-J(\mathbf{z}_1^{[j]})]}.
    \label{eq:si_samc_weighted_samples}
\end{equation}
The final guidance correction used in the guided flow is the tilted mean displacement,
\begin{equation}
    \mathbf{g}^{\mathrm{SA\mbox{-}MC}}_t
    =
    b_t
    \sum_{s=1}^{S}
    \omega^{[s]}
    \left(\mathbf{z}_1^{[s]}-\boldsymbol{\mu}_t\right)
    =
    b_t
    \sum_{s=1}^{S}
    \omega^{[s]}\mathbf{L}_t\boldsymbol{\epsilon}^{[s]}.
    \label{eq:si_samc_guidance_workflow}
\end{equation}
Equations~\eqref{eq:si_exact_guidance_statement}--\eqref{eq:si_samc_guidance_workflow} define the complete SA-MC solution flow: solve the guidance correction required by the cost-biased target, approximate the reachable clean-design distribution, estimate its covariance and standard-deviation map, and use cost-weighted Monte Carlo samples to construct the final correction. Theorem~\ref{thm:loss_guided_estimators}, Theorem~\ref{thm:lgd_sim_gap}, Theorem~\ref{thm:updataB}, Theorem~\ref{thm:Bk_compact_form} and Theorem~\ref{thm:semi_numerical_sqrt} provide the corresponding theoretical support. Algorithm~\ref{alg:si_samc_main} gives the same procedure in executable form.

\clearpage
\section{Supplementary theory and proofs}
\label{sec:si_derivations}

Supplementary Section~\ref{sec:si_theory} states the implementation flow. The purpose of the present section is different: it records the theoretical statements used by that flow and then gives the complete proofs for the nontrivial steps. The organization is therefore deliberately two-stage. Supplementary Section~\ref{subsec:si_theory_statements} first states the results invoked by the implementation; Supplementary Section~\ref{subsec:si_complete_proofs} then proves them in full.

\subsection{Theoretical statements used by the implementation}
\label{subsec:si_theory_statements}

\begin{theorem}[Cost-biased inverse-design distribution]
\label{thm:si_distributional_target}
Let $p_\theta(\mathbf{z}\mid r)$ be the learned latent prior and let $\hat{c}_\phi(\mathbf{z},o;\lambda)$ be the predicted aerodynamic cost. Among all distributions $q(\mathbf{z}\mid r,o)$ that are absolutely continuous with respect to $p_\theta$, the free-energy problem
\begin{equation}
    \min_q
    \left\{
    D_{\mathrm{KL}}
    \left(q(\mathbf{z}\mid r,o)\,\|\,p_\theta(\mathbf{z}\mid r)\right)
    +
    \lambda\,\mathbb{E}_{q}
    \left[\hat{c}_\phi(\mathbf{z},o;\lambda)\right]
    \right\}
    \label{eq:si_free_energy_problem}
\end{equation}
has the unique minimizer in Eq.~\eqref{eq:si_cost_biased_latent}, provided the normalizing constant is finite. The corresponding MAP estimator is Eq.~\eqref{eq:si_map_objective}, and its ascent direction is Eq.~\eqref{eq:si_unified_opt_gradient}.
\end{theorem}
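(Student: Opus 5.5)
The plan is the standard Gibbs variational principle (Donsker--Varadhan), carried out by completing the KL divergence. Write $J(\mathbf{z})=\lambda\hat{c}_\phi(\mathbf{z},o;\lambda)$ and $Z_\lambda(r,o)=\int p_\theta(\mathbf{z}\mid r)e^{-J(\mathbf{z})}\,d\mathbf{z}$, assumed finite. Since $e^{-J}>0$, we also have $Z_\lambda>0$. Then $\hat{p}_\lambda$ in Eq.~\eqref{eq:si_cost_biased_latent} is a well-defined probability density. It is mutually absolutely continuous with $p_\theta$, with Radon--Nikodym derivative $d\hat{p}_\lambda/dp_\theta=e^{-J}/Z_\lambda$.

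For any admissible $q\ll p_\theta$ with finite objective, I will write $\log(q/p_\theta)=\log(q/\hat{p}_\lambda)+\log(\hat{p}_\lambda/p_\theta)$ and substitute $\log(\hat{p}_\lambda/p_\theta)=-J-\log Z_\lambda$. Taking expectations under $q$ gives the identity
\begin{equation*}
D_{\mathrm{KL}}(q\,\|\,p_\theta)+\mathbb{E}_q[J]=D_{\mathrm{KL}}(q\,\|\,\hat{p}_\lambda)-\log Z_\lambda(r,o).
\end{equation*}
Here $q\ll p_\theta$ implies $q\ll\hat{p}_\lambda$, so the right-hand KL is meaningful. Gibbs' inequality then shows the objective is bounded below by $-\log Z_\lambda$. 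Equality holds iff $D_{\mathrm{KL}}(q\,\|\,\hat{p}_\lambda)=0$, i.e.\ $q=\hat{p}_\lambda$ almost everywhere. This gives both existence and uniqueness of the minimizer.

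I expect the main obstacle to be the integrability bookkeeping needed to justify splitting the expectation, since $\infty-\infty$ must be avoided. I would handle it in three steps.
\begin{itemize}
\item Restrict to $q$ with $\mathbb{E}_q[J]$ well defined and finite objective. Any other $q$ has objective $+\infty$, or is excluded.
\item Assume, as is implicit in the paper, that $\hat{c}_\phi$ is bounded below. A neural predictor on a compact latent region satisfies this, and then $\mathbb{E}_q[J^-]<\infty$.
\item Verify that $\hat{p}_\lambda$ itself is admissible. Its objective equals $-\log Z_\lambda$, which is finite, and $\mathbb{E}_{\hat{p}_\lambda}[J]$ is finite whenever $Je^{-J}$ is $p_\theta$-integrable. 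I would state this as a mild side condition if boundedness below does not already yield it.
\end{itemize}

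For the remaining claims, note that $\log\hat{p}_\lambda=\log p_\theta-J-\log Z_\lambda$, and $Z_\lambda$ does not depend on $\mathbf{z}$. Maximizing $\hat{p}_\lambda$ is therefore equivalent to minimizing $J-\log p_\theta$, which is Eq.~\eqref{eq:si_map_objective}. Differentiating in $\mathbf{z}$ kills the constant and gives Eq.~\eqref{eq:si_unified_opt_gradient}, assuming $p_\theta>0$ and that both $p_\theta$ and $\hat{c}_\phi$ are differentiable.

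One caveat is that $\hat{c}_\phi$ depends on $\lambda$. However, $\lambda$ is fixed throughout the problem, so this dependence does not affect the argument.
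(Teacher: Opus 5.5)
Your proof is correct, but it takes a different route from the paper. The paper derives the minimizer by calculus of variations. It adds a Lagrange multiplier $\eta$ for $\int q=1$ and takes a formal first variation, getting $\log(q/p_\theta)+1+\lambda\hat{c}_\phi+\eta=0$. It solves and normalizes to obtain the tilted density, then asserts global optimality and uniqueness because the KL term is strictly convex and the cost term linear. You instead take $\hat{p}_\lambda$ as a candidate and verify it through the exact identity $D_{\mathrm{KL}}(q\,\|\,p_\theta)+\lambda\mathbb{E}_q[\hat{c}_\phi]=D_{\mathrm{KL}}(q\,\|\,\hat{p}_\lambda)-\log Z_\lambda$. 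Gibbs' inequality then gives the lower bound, attainment, almost-everywhere uniqueness and the optimal value $-\log Z_\lambda$ all at once. The paper's route has the merit of \emph{discovering} the exponential-tilt form rather than presupposing it. As a proof, however, it leaves three things implicit: the justification of the functional derivative, the constraint $q\ge 0$, and the step from ``stationary point of a convex functional'' to ``unique global minimizer''. Your route needs none of this and makes the integrability explicit. That bookkeeping matters: finiteness of $Z_\lambda$ alone does not make the objective well defined at $\hat{p}_\lambda$. If $Je^{-J}\notin L^1(p_\theta)$, with your $J=\lambda\hat{c}_\phi$, the objective there is $+\infty-\infty$, so a side condition like your third bullet is genuinely needed for the statement as written. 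Your boundedness-below assumption is stronger than necessary, though. Young's inequality $xy\le x\log x-x+e^{y}$, with $x=dq/dp_\theta$ and $y=J^-$, shows that $Z_\lambda<\infty$ and $D_{\mathrm{KL}}(q\,\|\,p_\theta)<\infty$ already force $\mathbb{E}_q[J^-]<\infty$. So beyond $Z_\lambda<\infty$, the only extra hypothesis you need is $Je^{-J}\in L^1(p_\theta)$.
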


\begin{theorem}[Symmetric-KL identity for distributional cost training]
\label{thm:si_skl_identity}
Let $J(\mathbf{z})$ be the simulated aerodynamic cost, $J_\phi(\mathbf{z})$ the predicted cost, and $p(\mathbf{z})$ a reference design distribution. Define the two cost-biased distributions
\begin{equation}
    p^{(J,\lambda)}(\mathbf{z})
    =
    \frac{p(\mathbf{z})\exp[-\lambda J(\mathbf{z})]}{Z_J},
    \qquad
    p^{(J_\phi,\lambda)}(\mathbf{z})
    =
    \frac{p(\mathbf{z})\exp[-\lambda J_\phi(\mathbf{z})]}{Z_\phi}.
    \label{eq:si_skl_population_distributions}
\end{equation}
Then the symmetric KL divergence between these two distributions admits the identity
\begin{equation}
    D_{\mathrm{KL}}(p^{(J,\lambda)}\|p^{(J_\phi,\lambda)})
    +
    D_{\mathrm{KL}}(p^{(J_\phi,\lambda)}\|p^{(J,\lambda)})
    =
    \lambda\,\mathbb{E}_{p}
    \left[
        \left(w_J(\mathbf{z})-w_\phi(\mathbf{z})\right)
        \left(J_\phi(\mathbf{z})-J(\mathbf{z})\right)
    \right],
    \label{eq:si_skl_population_identity}
\end{equation}
where $w_J(\mathbf{z})=\exp[-\lambda J(\mathbf{z})]/Z_J$ and $w_\phi(\mathbf{z})=\exp[-\lambda J_\phi(\mathbf{z})]/Z_\phi$. Replacing the population expectation and normalizers by their mini-batch estimates gives Eq.~\eqref{eq:si_minibatch_statement_skl}.
\end{theorem}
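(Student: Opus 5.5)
The plan is to compute the log-ratio of the two tilted densities explicitly; the normalizers then cancel in the symmetric sum. From Eq.~\eqref{eq:si_skl_population_distributions}, wherever $p(\mathbf{z})>0$ the base density $p$ cancels:
\begin{equation*}
    \log\frac{p^{(J,\lambda)}(\mathbf{z})}{p^{(J_\phi,\lambda)}(\mathbf{z})}
    =
    \lambda\left(J_\phi(\mathbf{z})-J(\mathbf{z})\right)
    +
    \log\frac{Z_\phi}{Z_J}.
\end{equation*}
We work on the support of $p$, where both tilted distributions are mutually absolutely continuous. We assume $Z_J,Z_\phi<\infty$ and that $J,J_\phi$ are integrable under both tilted laws, so each KL term is finite.

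Next I will write each KL divergence as an expectation of this log-ratio. The first is taken under $p^{(J,\lambda)}$. The second is taken under $p^{(J_\phi,\lambda)}$, where the log-ratio has the opposite sign. Adding them gives
\begin{equation*}
    \mathbb{E}_{p^{(J,\lambda)}}\!\left[\lambda(J_\phi-J)+\log\tfrac{Z_\phi}{Z_J}\right]
    -
    \mathbb{E}_{p^{(J_\phi,\lambda)}}\!\left[\lambda(J_\phi-J)+\log\tfrac{Z_\phi}{Z_J}\right].
\end{equation*}
Both tilted laws are probability measures, so the constant $\log(Z_\phi/Z_J)$ contributes equally to the two expectations. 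It therefore cancels exactly. This cancellation is the key point: the symmetric KL, unlike either one-sided KL, does not depend on the intractable normalizers except through the weights.

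Then I rewrite each tilted expectation as a reweighted expectation under $p$. Since $p^{(J,\lambda)}=p\,w_J$ and $p^{(J_\phi,\lambda)}=p\,w_\phi$, with $w_J,w_\phi$ as defined in the statement, the difference of the two expectations becomes $\lambda\,\mathbb{E}_p[(w_J-w_\phi)(J_\phi-J)]$. This is Eq.~\eqref{eq:si_skl_population_identity}.

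For the mini-batch statement, I replace $p$ by the empirical measure of the batch, which puts mass $1/B$ on each point. The normalizers become $Z_J\approx B^{-1}\sum_j e^{-\lambda y^{[j]}}$, and similarly for $Z_\phi$. Then $B^{-1}w_J(\mathbf{z}^{[i]})$ equals $w^{[i]}$ from Eq.~\eqref{eq:si_minibatch_statement_weights}, so the factor $1/B$ is absorbed into the softmax weights. This yields Eq.~\eqref{eq:si_minibatch_statement_skl} exactly.

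Equivalently, I could apply the same population computation directly to the two discrete distributions $w$ and $\hat w$ on $\{1,\dots,B\}$ with uniform base measure, which shows the mini-batch formula is an exact identity for the discrete SKL.

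The only real obstacle is the integrability bookkeeping needed to split the expectations and cancel the constant. I will handle it by stating the finiteness assumptions explicitly. If they fail, both sides are $+\infty$ or undefined together, and the identity is read in the extended sense.
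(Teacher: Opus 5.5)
Your proposal is correct and follows essentially the same route as the paper. Both write each KL divergence as an expectation of the log-ratio $\lambda(J_\phi-J)+\log(Z_\phi/Z_J)$, let the normalizer terms cancel in the symmetric sum, and rewrite the tilted expectations via $p^{(J,\lambda)}=p\,w_J$ and $p^{(J_\phi,\lambda)}=p\,w_\phi$; the mini-batch loss then follows by substituting the empirical batch measure. Your stated integrability assumptions and your explicit check that $B^{-1}w_J(\mathbf{z}^{[i]})=w^{[i]}$ (so the empirical-measure substitution is exact) are small refinements that the paper leaves implicit.
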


\begin{theorem}[Flow-score and clean-mean identities]
\label{thm:si_flow_score_clean_mean}
For the affine probability path used by the flow-matching prior, with coefficients defined in Eq.~\eqref{eq:si_schedule_coefficients}, the intermediate velocity and density score satisfy Eq.~\eqref{eq:si_fm_score_relation}. Hence a forward velocity evaluation gives the score estimator in Eq.~\eqref{eq:si_fm_score}. The same probability path gives the one-step clean prediction mean in Eq.~\eqref{eq:si_clean_mean_statement}.
\end{theorem}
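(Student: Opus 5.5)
The plan is to work directly with the affine probability path underlying the flow-matching prior. Conditionally on $r$, this path is $\mathbf{z}_t=\alpha_t\mathbf{z}_1+\sigma_t\boldsymbol{\epsilon}$, with $\mathbf{z}_1\sim p_\theta(\mathbf{z}_1\mid r)$ and $\boldsymbol{\epsilon}\sim\mathcal{N}(0,I)$ independent. I will also use the standard flow-matching characterization of the marginal velocity as a conditional expectation,
\begin{equation}
v_t(\mathbf{z}_t\mid r)=\mathbb{E}\left[\dot{\alpha}_t\mathbf{z}_1+\dot{\sigma}_t\boldsymbol{\epsilon}\,\middle|\,\mathbf{z}_t,r\right].
\end{equation}
Both identities then follow by eliminating one of the two latent variables $(\mathbf{z}_1,\boldsymbol{\epsilon})$ using the linear constraint that defines $\mathbf{z}_t$. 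The remaining conditional expectation is then expressed either as the clean mean or, via Tweedie's formula, as the score.

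\textbf{Clean-mean identity.} I first substitute $\boldsymbol{\epsilon}=(\mathbf{z}_t-\alpha_t\mathbf{z}_1)/\sigma_t$ inside the conditional expectation. Since $\mathbf{z}_t$ is fixed under conditioning, this gives
\begin{equation}
v_t(\mathbf{z}_t\mid r)=\frac{\dot{\sigma}_t}{\sigma_t}\mathbf{z}_t+\left(\dot{\alpha}_t-\frac{\dot{\sigma}_t\alpha_t}{\sigma_t}\right)\mathbb{E}[\mathbf{z}_1\mid\mathbf{z}_t,r]=a_t\mathbf{z}_t+b_t\boldsymbol{\mu}(\mathbf{z}_t),
\end{equation}
with $a_t,b_t$ exactly as in Eq.~\eqref{eq:si_schedule_coefficients}. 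Assuming $b_t\neq0$, which holds for any path that is strictly monotone in signal-to-noise ratio, solving for $\boldsymbol{\mu}$ yields Eq.~\eqref{eq:si_clean_mean_statement}.

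\textbf{Score identity.} Here I instead eliminate $\mathbf{z}_1=(\mathbf{z}_t-\sigma_t\boldsymbol{\epsilon})/\alpha_t$, which gives
\begin{equation}
v_t=\frac{\dot{\alpha}_t}{\alpha_t}\mathbf{z}_t+\left(\dot{\sigma}_t-\frac{\dot{\alpha}_t\sigma_t}{\alpha_t}\right)\mathbb{E}[\boldsymbol{\epsilon}\mid\mathbf{z}_t,r].
\end{equation}
The coefficient simplifies to $-b_t\sigma_t/\alpha_t$. Next I invoke Tweedie's formula for the Gaussian convolution $p_t(\mathbf{z}_t\mid r)=\int\mathcal{N}(\mathbf{z}_t;\alpha_t\mathbf{z}_1,\sigma_t^2I)\,p_\theta(\mathbf{z}_1\mid r)\,d\mathbf{z}_1$. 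Differentiating under the integral sign gives $\mathbb{E}[\boldsymbol{\epsilon}\mid\mathbf{z}_t,r]=-\sigma_t\nabla_{\mathbf{z}_t}\log p_t(\mathbf{z}_t\mid r)$. This requires a mild integrability assumption on the prior so that the interchange is justified; I would state it explicitly. Combining the two displays produces a relation of the form $v_t=\frac{\dot{\alpha}_t}{\alpha_t}\mathbf{z}_t\pm s_t\nabla\log p_t$ with $s_t=b_t\sigma_t^2/\alpha_t$. Rearranging, for $s_t\neq0$, then gives the forward-query estimator of Eq.~\eqref{eq:si_fm_score}.

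\textbf{Main obstacle.} The hard part is not the algebra but pinning down conventions so that the sign in Eq.~\eqref{eq:si_fm_score_relation} comes out as stated. With $t=0$ noise, $t=1$ data, $\dot{\alpha}_t>0$ and $\dot{\sigma}_t<0$, the computation above naively gives $+s_t$. I will therefore fix the orientation of time, and hence the sign of $b_t$ and of $\dot{\alpha}_t,\dot{\sigma}_t$, consistently with the sampler used in the paper before finalizing. If the mismatch persists, the sign convention for $b_t$ must be absorbed into the statement, which affects only the direction of the estimator and not the structure of the proof. I would also check that Eq.~\eqref{eq:si_clean_mean_statement} and Eq.~\eqref{eq:si_fm_score} are mutually consistent through Tweedie's relation $\boldsymbol{\mu}=(\mathbf{z}_t+\sigma_t^2\nabla\log p_t)/\alpha_t$. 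This serves as an independent cross-check on the signs.
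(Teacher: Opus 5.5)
Your clean-mean argument is the paper's proof: substitute $\boldsymbol{\epsilon}=(\mathbf{z}_t-\alpha_t\mathbf{z}_1)/\sigma_t$ into $v_t=\mathbb{E}[\dot{\alpha}_t\mathbf{z}_1+\dot{\sigma}_t\boldsymbol{\epsilon}\mid\mathbf{z}_t]$, read off $v_t=a_t\mathbf{z}_t+b_t\boldsymbol{\mu}$, and solve. For the score relation the paper gives no derivation. It calls Eq.~\eqref{eq:si_fm_score_relation} ``the standard affine-path relation'' and rearranges. You eliminate $\mathbf{z}_1$ and use Tweedie, $\mathbb{E}[\boldsymbol{\epsilon}\mid\mathbf{z}_t]=-\sigma_t\nabla\log p_t$. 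That derivation is correct, and so is its outcome $v_t=\frac{\dot{\alpha}_t}{\alpha_t}\mathbf{z}_t+s_t\nabla_{\mathbf{z}_t}\log p_t$; the $+s_t$ is not a naive artifact.

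The problem is your last paragraph, where you plan to recover the printed minus sign. Neither remedy can work.
\begin{itemize}
\item Reversing or reparametrising time flips $v_t,\dot{\alpha}_t,\dot{\sigma}_t,a_t,b_t,s_t$ together. The relation is a purely algebraic consequence of $v_t=\mathbb{E}[\dot{\alpha}_t\mathbf{z}_1+\dot{\sigma}_t\boldsymbol{\epsilon}\mid\mathbf{z}_t]$, so it is invariant under any choice of orientation.
\item Re-signing $b_t$ is not available either. The same $b_t$ appears in Eq.~\eqref{eq:si_clean_mean_statement}, which you have just proved using the definition in Eq.~\eqref{eq:si_schedule_coefficients}.
\end{itemize}
Your own cross-check settles the sign. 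Insert $\boldsymbol{\mu}=(\mathbf{z}_t+\sigma_t^2\nabla\log p_t)/\alpha_t$ into $v_t=a_t\mathbf{z}_t+b_t\boldsymbol{\mu}$ and use $a_t+b_t/\alpha_t=\dot{\alpha}_t/\alpha_t$; the result has $+s_t$. A concrete case: for $\alpha_t=t$, $\sigma_t=1-t$ and data at the origin, $v_t=-\mathbf{z}_t/(1-t)$. This equals $\mathbf{z}_t/t+\frac{1-t}{t}\nabla\log p_t$, not $\mathbf{z}_t/t-\frac{1-t}{t}\nabla\log p_t$.

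So Eqs.~\eqref{eq:si_fm_score_relation}--\eqref{eq:si_fm_score} as printed are off by a sign. The correct estimator is $\nabla\log p_t=(v_t-\frac{\dot{\alpha}_t}{\alpha_t}\mathbf{z}_t)/s_t$. The sign is not cosmetic, as your phrase ``affects only the direction'' suggests. It decides whether the density term pulls the latent toward the visual prior or away from it.

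The rest of the paper already uses $+s_t$. Algorithm~\ref{alg:time_annealed_opt} subtracts $(\mathbf{v}_{t_k}-\frac{\dot{\alpha}_{t_k}}{\alpha_{t_k}}\mathbf{x}^{k}_{t_k})/s_{t_k}$ inside $\mathbf{g}$ and then steps $\mathbf{x}_k-\eta\mathbf{g}$, which is ascent on the log-density with $+s_t$. The positive coefficient $b_t\sigma_t^2/\alpha_t$ in Eq.~\eqref{eq:si_exact_guidance_statement} likewise steers toward lower cost only under $+s_t$. You should commit to $+s_t$, state explicitly that you are proving the sign-corrected relation, and drop the hedge.
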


For guided generation, define the tilted conditional
\begin{equation}
    p_{1|t}^{(J)}(\mathbf{x}_1\mid\mathbf{x}_t)
    =
    \frac{p_{1|t}(\mathbf{x}_1\mid\mathbf{x}_t)\exp[-J(\mathbf{x}_1)]}
    {\mathbb{E}_{p_{1|t}(\mathbf{x}_1\mid\mathbf{x}_t)}[\exp[-J(\mathbf{x}_1)]]}.
    \label{eq:si_tilted_conditional_statement}
\end{equation}
The pathwise guidance in Eq.~\eqref{eq:si_exact_guidance_statement} is equivalently written in score form as
\begin{equation}
    \mathbf{g}_t
    =
    \frac{b_t\sigma_t^2}{\alpha_t}
    \mathbb{E}_{p_{1|t}^{(J)}(\mathbf{x}_1\mid\mathbf{x}_t)}
    \left[
        \nabla_{\mathbf{x}_t}\log p_{1|t}(\mathbf{x}_1\mid\mathbf{x}_t)
    \right].
    \label{eq:gt_score_based}
\end{equation}

\begin{theorem}[Loss-guided estimators]
\label{thm:loss_guided_estimators}
Let $\boldsymbol{\epsilon}^{(i)}\sim\mathcal{N}(\mathbf{0},\mathbf{I})$ for $i=1,\dots,n$, and let
\begin{equation*}
    \boldsymbol{\mu}(\mathbf{x}_t)
    \triangleq
    \mathbb{E}_{p_{1|t}(\mathbf{x}_1\mid\mathbf{x}_t)}[\mathbf{x}_1]
    =
    -\frac{a_t}{b_t}\mathbf{x}_t+\frac{1}{b_t}\mathbf{v}_t(\mathbf{x}_t).
\end{equation*}
Different choices of the proposal $q_{1|t}(\mathbf{x}_1\mid\mathbf{x}_t)$ yield the following stochastic estimators of $\mathbf{g}_t$.

\textbf{(i) DPS.}
For a Dirac proposal $q_{1|t}(\mathbf{x}_1\mid\mathbf{x}_t)=\delta(\mathbf{x}_1-\boldsymbol{\mu}(\mathbf{x}_t))$,
\begin{equation}
    \mathbf{g}_t^{\mathrm{LGD\text{-}DPS}}
    =
    -\frac{b_t\sigma_t^2}{\alpha_t}
    \nabla_{\mathbf{x}_t}J(\boldsymbol{\mu}(\mathbf{x}_t)).
    \label{eq:dps}
\end{equation}

\textbf{(ii) LGD-MC.}
For a Gaussian proposal $q_{1|t}(\mathbf{x}_1\mid\mathbf{x}_t)=\mathcal{N}(\boldsymbol{\mu}(\mathbf{x}_t),\boldsymbol{\Sigma}_t)$ with $\boldsymbol{\Sigma}_t=\mathbf{L}_t\mathbf{L}_t^\top$ and samples $\mathbf{x}_1^{(i)}=\boldsymbol{\mu}(\mathbf{x}_t)+\mathbf{L}_t\boldsymbol{\epsilon}^{(i)}$,
\begin{equation}
    \mathbf{g}_t^{\mathrm{LGD\text{-}MC}}
    =
    \frac{b_t\sigma_t^2}{\alpha_t}
    \nabla_{\mathbf{x}_t}
    \log\left[\frac{1}{n}\sum_{i=1}^{n}\exp[-J(\mathbf{x}_1^{(i)})]\right].
    \label{eq:lgd_mc}
\end{equation}

\textbf{(iii) SIM-MC.}
For a Gaussian proposal $q_{1|t}(\mathbf{x}_1\mid\mathbf{x}_t)=\mathcal{N}(\boldsymbol{\mu}(\mathbf{x}_t),\boldsymbol{\Sigma}_t(\mathbf{x}_t))$, define $\mathbf{z}_t^{(i)}=\mathbf{L}_t(\mathbf{x}_t)\boldsymbol{\epsilon}^{(i)}$ and $\mathbf{x}_1^{(i)}=\boldsymbol{\mu}(\mathbf{x}_t)+\mathbf{z}_t^{(i)}$. Then
\begin{equation}
    \mathbf{g}_t^{\mathrm{SIM\text{-}MC}}
    =
    b_t\sum_{i=1}^{n}
    w_{\{\mathbf{x}_1^{(i)}\}}^{(J)}(\mathbf{x}_1^{(i)})\,\mathbf{z}_t^{(i)},
    \label{sim-mc}
\end{equation}
where $w_{\{\mathbf{x}_1^{(i)}\}}^{(J)}$ denotes normalized exponential weights over the Monte Carlo samples.
\end{theorem}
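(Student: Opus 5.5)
The plan is to begin from the two equivalent forms of the exact correction: the pathwise form in Eq.~\eqref{eq:si_exact_guidance_statement} and the score form in Eq.~\eqref{eq:gt_score_based}. Each estimator in Theorem~\ref{thm:loss_guided_estimators} will come from replacing $p_{1|t}(\mathbf{x}_1\mid\mathbf{x}_t)$ by the proposal $q_{1|t}$ and, for (ii)--(iii), replacing the expectation by an $n$-sample Monte Carlo average. The clean mean $\boldsymbol{\mu}(\mathbf{x}_t)=-\frac{a_t}{b_t}\mathbf{x}_t+\frac{1}{b_t}\mathbf{v}_t(\mathbf{x}_t)$ is taken from Theorem~\ref{thm:si_flow_score_clean_mean}. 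The estimators differ only in whether we differentiate through the pathwise form or use the score form.

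For (i), substitute the Dirac proposal into Eq.~\eqref{eq:si_exact_guidance_statement}. The expectation collapses to $\exp[-J(\boldsymbol{\mu}(\mathbf{x}_t))]$, and its logarithm is $-J(\boldsymbol{\mu}(\mathbf{x}_t))$. Differentiating with respect to $\mathbf{x}_t$ by the chain rule through $\boldsymbol{\mu}$ gives Eq.~\eqref{eq:dps} directly.

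For (ii), use the reparametrization $\mathbf{x}_1^{(i)}=\boldsymbol{\mu}(\mathbf{x}_t)+\mathbf{L}_t\boldsymbol{\epsilon}^{(i)}$ with $\boldsymbol{\epsilon}^{(i)}$ independent of $\mathbf{x}_t$. The Gaussian expectation then equals $\mathbb{E}_{\boldsymbol{\epsilon}}[\exp(-J(\boldsymbol{\mu}(\mathbf{x}_t)+\mathbf{L}_t\boldsymbol{\epsilon}))]$, and replacing it by the empirical average yields Eq.~\eqref{eq:lgd_mc}. Here the gradient acts on the log of the sample mean and passes through $\boldsymbol{\mu}$; $\mathbf{L}_t$ is treated as fixed, or as differentiable if it depends on $\mathbf{x}_t$. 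I will note that this is a consistent but biased estimator, since the log sits outside the average.

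For (iii), work from the score form Eq.~\eqref{eq:gt_score_based}. First, for the Gaussian proposal, compute $\nabla_{\mathbf{x}_t}\log q_{1|t}(\mathbf{x}_1\mid\mathbf{x}_t)$. The key step is to show that, under the covariance-matching relation Eq.~\eqref{eq:si_covariance_matching_target}, $\boldsymbol{\Sigma}_t=\frac{\sigma_t^2}{\alpha_t}\nabla_{\mathbf{x}_t}\boldsymbol{\mu}$, this score equals
\[
\frac{\alpha_t}{\sigma_t^2}\,(\mathbf{x}_1-\boldsymbol{\mu}(\mathbf{x}_t)) .
\]
This follows from $(\nabla\boldsymbol{\mu})^\top\boldsymbol{\Sigma}_t^{-1}=\frac{\alpha_t}{\sigma_t^2}I$, using symmetry of the Jacobian. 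Symmetry holds because $\boldsymbol{\mu}$ is an affine function of the score, hence a gradient field.

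An equivalent route, which I would present as the primary one, avoids this computation by using the exact Tweedie-type identity for the affine path. Since $p_{1|t}\propto p_1(\mathbf{x}_1)\,\mathcal{N}(\mathbf{x}_t;\alpha_t\mathbf{x}_1,\sigma_t^2 I)$ up to $\mathbf{x}_t$-dependent normalization, we have
\[
\nabla_{\mathbf{x}_t}\log p_{1|t}=\frac{\alpha_t}{\sigma_t^2}\,(\mathbf{x}_1-\boldsymbol{\mu}(\mathbf{x}_t)) .
\]
The normalizer's gradient contributes exactly the $-\boldsymbol{\mu}$ term. Plugging this into Eq.~\eqref{eq:gt_score_based} gives
\[
\mathbf{g}_t=b_t\,\mathbb{E}_{p^{(J)}_{1|t}}[\mathbf{x}_1-\boldsymbol{\mu}] .
\]
Finally, approximate the tilted expectation by self-normalized importance sampling with draws from $q_{1|t}$. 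Since the unnormalized weight ratio of $p^{(J)}_{1|t}$ to $q_{1|t}$ is $\exp[-J]$ when $q\approx p_{1|t}$, this gives normalized weights $w^{(J)}$ and Eq.~\eqref{sim-mc} with $\mathbf{z}_t^{(i)}=\mathbf{x}_1^{(i)}-\boldsymbol{\mu}$.

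The main obstacle is the score identity and the normalizer bookkeeping in (iii). Making the coefficient $\frac{b_t\sigma_t^2}{\alpha_t}\cdot\frac{\alpha_t}{\sigma_t^2}=b_t$ come out cleanly requires care with the path convention. I must verify whether the paper's affine path is written as $\mathbf{x}_t=\alpha_t\mathbf{x}_1+\sigma_t\mathbf{x}_0$ so that the likelihood scaling is $\alpha_t/\sigma_t^2$. I must also make explicit that the proposal replaces $p_{1|t}$ only in the sampling distribution, not in the score factor.
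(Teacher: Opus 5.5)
Your proposal is correct and follows essentially the paper's route. For DPS and LGD-MC you substitute the Dirac or reparametrized Gaussian proposal into $\frac{b_t\sigma_t^2}{\alpha_t}\nabla_{\mathbf{x}_t}\log\mathbb{E}_{q_{1|t}}[\exp(-J)]$. For SIM-MC you keep the true score $\nabla_{\mathbf{x}_t}\log p_{1|t}$ in the score form, reduce it to $b_t(\mathbb{E}[\mathbf{x}_1]-\boldsymbol{\mu})$ under the tilted law, and estimate that tilted mean with self-normalized $\exp(-J)$ weights on draws from $q_{1|t}$.

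The only difference is cosmetic. The paper reaches $\nabla_{\mathbf{x}_t}\log p_{1|t}=\frac{\alpha_t}{\sigma_t^2}(\mathbf{x}_1-\boldsymbol{\mu})$ through the ``subtract zero'' identity $\mathbb{E}_{q_{1|t}}[\nabla_{\mathbf{x}_t}\log p_{1|t}]=\mathbf{0}$ plus Bayes' rule, whereas you compute it directly, which is cleaner. Your covariance-matching side route is unnecessary, and as written it is only valid when $\boldsymbol{\Sigma}_t$ does not depend on $\mathbf{x}_t$.
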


\begin{theorem}[Covariance mismatch in LGD and SIM]
\label{thm:lgd_sim_gap}
Let $q_{1|t}(\mathbf{x}_1)=\mathcal{N}(\boldsymbol{\mu}(\mathbf{x}_t),\boldsymbol{\Sigma}_t)$ and $q_t(\mathbf{z}_t)=\mathcal{N}(\mathbf{0},\boldsymbol{\Sigma}_t(\mathbf{x}_t))$. Define $J_t(\mathbf{z}_t)=J(\boldsymbol{\mu}(\mathbf{x}_t)+\mathbf{z}_t)$ and
\begin{equation}
    e_t(\mathbf{x}_t)
    =
    \left\|
    \boldsymbol{\Sigma}_t
    -
    \frac{\sigma_t^2}{\alpha_t}
    \nabla_{\mathbf{x}_t}
    \mathbb{E}_{p_{1|t}(\mathbf{x}_1\mid\mathbf{x}_t)}[\mathbf{x}_1]
    \right\|_2^2.
    \label{eq:si_covariance_gap_statement}
\end{equation}
Then
\begin{equation}
    \left\|\mathbf{g}_t^{\mathrm{LGD}}-\mathbf{g}_t^{\mathrm{SIM}}\right\|_2^2
    \leq
    b_t^2\left\|\boldsymbol{\Sigma}_t^{-1}\right\|_2^2
    \mathbb{E}_{q_t^{(J_t)}(\mathbf{z}_t)}[\|\mathbf{z}_t\|_2^2]
    e_t(\mathbf{x}_t).
    \label{eq:si_lgd_sim_gap_bound}
\end{equation}
\end{theorem}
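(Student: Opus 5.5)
The plan is to put both guidance terms in a common form $b_t\,\mathbf{M}\,\boldsymbol{\Sigma}_t^{-1}\mathbb{E}_{q_t^{(J_t)}}[\mathbf{z}_t]$, with different matrices $\mathbf{M}$, and then bound the difference of the two matrices. Throughout, I read $\mathbf{g}_t^{\mathrm{LGD}}$ and $\mathbf{g}_t^{\mathrm{SIM}}$ as the population versions (the $n\to\infty$ limits) of the estimators in Theorem~\ref{thm:loss_guided_estimators}(ii),(iii). All expectations of $\mathbf{z}_t$ are then taken under the tilted Gaussian
\begin{equation*}
q_t^{(J_t)}(\mathbf{z}_t)\propto q_t(\mathbf{z}_t)\exp[-J_t(\mathbf{z}_t)].
\end{equation*}

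\textbf{The SIM term.} Letting $n\to\infty$ in the self-normalized sum \eqref{sim-mc} gives
\begin{equation*}
\mathbf{g}_t^{\mathrm{SIM}} = b_t\,\mathbb{E}_{q_t^{(J_t)}}[\mathbf{z}_t] = b_t\,\boldsymbol{\Sigma}_t\boldsymbol{\Sigma}_t^{-1}\mathbb{E}_{q_t^{(J_t)}}[\mathbf{z}_t].
\end{equation*}
This is already in the target form with $\mathbf{M}=\boldsymbol{\Sigma}_t$.

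\textbf{The LGD term.} In \eqref{lgd_mc} the covariance is held fixed, so only the mean depends on $\mathbf{x}_t$. Differentiating through the reparameterization $\mathbf{x}_1=\boldsymbol{\mu}(\mathbf{x}_t)+\mathbf{z}_t$ with the chain rule gives
\begin{equation*}
\mathbf{g}_t^{\mathrm{LGD}} = \frac{b_t\sigma_t^2}{\alpha_t}\,(\nabla_{\mathbf{x}_t}\boldsymbol{\mu})^{\top}\,\mathbb{E}_{q_t^{(J_t)}}[-\nabla J_t(\mathbf{z}_t)].
\end{equation*}
Next I apply Gaussian integration by parts (Stein's identity) to $q_t=\mathcal{N}(\mathbf{0},\boldsymbol{\Sigma}_t)$:
\begin{equation*}
\mathbb{E}_{q_t}\!\left[-\nabla J_t\, e^{-J_t}\right]=\mathbb{E}_{q_t}\!\left[\nabla e^{-J_t}\right]=\boldsymbol{\Sigma}_t^{-1}\mathbb{E}_{q_t}\!\left[\mathbf{z}_t e^{-J_t}\right].
\end{equation*}
Dividing both sides by $\mathbb{E}_{q_t}[e^{-J_t}]$ yields $\mathbb{E}_{q_t^{(J_t)}}[-\nabla J_t]=\boldsymbol{\Sigma}_t^{-1}\mathbb{E}_{q_t^{(J_t)}}[\mathbf{z}_t]$. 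I would state the mild regularity needed here: $J$ is differentiable with at most polynomial growth of $e^{-J}$-weighted moments, so that the boundary terms vanish.

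I also need $\nabla_{\mathbf{x}_t}\boldsymbol{\mu}$ to be symmetric. This follows from the Tweedie-type relation behind Theorem~\ref{thm:si_flow_score_clean_mean}: $\boldsymbol{\mu}$ is affine in the score, so its Jacobian is affine in the Hessian of $\log p_t$. Hence
\begin{equation*}
\mathbf{g}_t^{\mathrm{LGD}}=b_t\,\tfrac{\sigma_t^2}{\alpha_t}\nabla_{\mathbf{x}_t}\boldsymbol{\mu}\;\boldsymbol{\Sigma}_t^{-1}\mathbb{E}_{q_t^{(J_t)}}[\mathbf{z}_t],
\end{equation*}
which is the target form with $\mathbf{M}=\tfrac{\sigma_t^2}{\alpha_t}\nabla_{\mathbf{x}_t}\boldsymbol{\mu}$.

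\textbf{Conclusion.} Subtracting the two expressions,
\begin{equation*}
\mathbf{g}_t^{\mathrm{LGD}}-\mathbf{g}_t^{\mathrm{SIM}} = -b_t\Bigl(\boldsymbol{\Sigma}_t-\tfrac{\sigma_t^2}{\alpha_t}\nabla_{\mathbf{x}_t}\boldsymbol{\mu}\Bigr)\boldsymbol{\Sigma}_t^{-1}\mathbb{E}_{q_t^{(J_t)}}[\mathbf{z}_t].
\end{equation*}
Submultiplicativity of the operator norm gives the factor $b_t^2\,e_t(\mathbf{x}_t)\,\|\boldsymbol{\Sigma}_t^{-1}\|_2^2\,\|\mathbb{E}[\mathbf{z}_t]\|_2^2$. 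Jensen's inequality, $\|\mathbb{E}\mathbf{z}_t\|_2^2\le\mathbb{E}\|\mathbf{z}_t\|_2^2$, then gives \eqref{eq:si_lgd_sim_gap_bound}.

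\textbf{Main obstacle.} The difficult step is the LGD derivation. One must be precise that in LGD the covariance is treated as independent of $\mathbf{x}_t$, so no derivative of $\mathbf{L}_t$ appears, and one must justify the Stein step. I would first try to handle this cleanly by working with the exact expectation $\log\mathbb{E}_{\boldsymbol{\epsilon}}[\exp(-J(\boldsymbol{\mu}(\mathbf{x}_t)+\mathbf{L}_t\boldsymbol{\epsilon}))]$ and interchanging gradient and expectation by dominated convergence. The symmetry of $\nabla\boldsymbol{\mu}$ is the secondary point to verify; if it fails, the argument still works by keeping the transpose throughout, since $\|\mathbf{A}^{\top}\|_2=\|\mathbf{A}\|_2$.
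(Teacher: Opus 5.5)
Your proof is correct, and its skeleton matches the paper's. Both write $\mathbf{g}_t^{\mathrm{LGD}}$ and $\mathbf{g}_t^{\mathrm{SIM}}$ as a matrix applied to $\boldsymbol{\Sigma}_t^{-1}\mathbb{E}_{q_t^{(J_t)}}[\mathbf{z}_t]$, subtract, and finish with submultiplicativity and Jensen. The paper applies Jensen before pulling the matrix out and you apply it after; that difference is immaterial.

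The real difference is how you obtain the two matrices.

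\textbf{LGD term.} The paper stays in score (log-derivative) form, $\mathbf{g}_t^{\mathrm{LGD}}=\frac{b_t\sigma_t^2}{\alpha_t}\mathbb{E}_{q_{1|t}^{(J)}}[\nabla_{\mathbf{x}_t}\log q_{1|t}(\mathbf{x}_1\mid\mathbf{x}_t)]$. It differentiates the Gaussian log-density in its mean and gets $\mathbf{B}_t^{\top}\boldsymbol{\Sigma}_t^{-1}\mathbf{z}_t$ immediately. You instead take the pathwise (reparameterization) gradient through $\boldsymbol{\mu}(\mathbf{x}_t)+\mathbf{L}_t\boldsymbol{\epsilon}$, which gives $\mathbb{E}_{q_t^{(J_t)}}[-\nabla J_t]$, and convert it with Stein's identity.

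\textbf{SIM term.} The paper derives $\nabla_{\mathbf{x}_t}\log p_{1|t}=\frac{\alpha_t}{\sigma_t^2}\mathbf{z}_t$ from Bayes' rule and Tweedie. You instead quote the large-sample limit of the SIM-MC estimator.

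\textbf{What each route buys.} The paper's route is shorter. It needs no differentiability of $J$ and no integrability conditions for Gaussian integration by parts, only the interchange of $\nabla_{\mathbf{x}_t}$ with the integral. That interchange is benign because $\mathbf{x}_t$ enters only through the mean of a smooth Gaussian density. Your route mirrors how LGD-MC is actually computed, by back-propagating through the samples. It also shows explicitly that the pathwise and score-function gradients agree in expectation, at the price of the extra regularity you flag.

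\textbf{Symmetry.} You do not need $\nabla_{\mathbf{x}_t}\boldsymbol{\mu}$ to be symmetric. The paper keeps $\mathbf{B}_t^{\top}$ and uses only $\boldsymbol{\Sigma}_t=\boldsymbol{\Sigma}_t^{\top}$ to write the difference as $-b_t\,(\boldsymbol{\Sigma}_t^{-1}\boldsymbol{\Delta}_t)^{\top}\mathbb{E}_{q_t^{(J_t)}}[\mathbf{z}_t]$. This is exactly your stated fallback.
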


\begin{theorem}[Secant covariance update]
\label{thm:updataB}
Let $\mathbf{J}_k\approx\nabla_{\mathbf{x}_k}\mathbf{v}_{t_k}(\mathbf{x}_k)$ and define
\begin{equation*}
    \mathbf{B}_k
    =
    -\frac{a_{t_k}}{b_{t_k}}\mathbf{I}
    +
    \frac{1}{b_{t_k}}\mathbf{J}_k
    \approx
    \nabla_{\mathbf{x}_t}\mathbb{E}_{p_{1|t}(\mathbf{x}_1\mid\mathbf{x}_t)}[\mathbf{x}_1].
\end{equation*}
Let $\mathbf{s}_k=\mathbf{x}_{k+1}-\mathbf{x}_k$, $\mathbf{r}_k=\mathbf{v}_{k+1}-\mathbf{v}_k$, $\mathbf{y}_k=-a_{t_k}\mathbf{s}_k/b_{t_k}+\mathbf{r}_k/b_{t_k}$, $\rho_k=(\mathbf{y}_k^\top\mathbf{s}_k)^{-1}$, $\mathbf{V}_k=\mathbf{I}-\rho_k\mathbf{s}_k\mathbf{y}_k^\top$, $u_k=b_{t_k}/b_{t_{k+1}}$, and $w_k=(a_{t_k}-a_{t_{k+1}})/b_{t_{k+1}}$. Then the symmetric secant recursion is Eq.~\eqref{eq:si_updateB_statement}.
\end{theorem}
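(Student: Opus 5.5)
The plan is to read Eq.~\eqref{eq:si_updateB_statement} as a composition of two maps. The first is a symmetric secant (DFP-type) correction of the clean-mean Jacobian proxy at the fixed time $t_k$. The second is an affine re-parametrisation that carries this proxy from time $t_k$ to time $t_{k+1}$ through the definition of $\mathbf{B}$ in terms of the velocity Jacobian. Throughout I would invert the defining relation $\mathbf{B}_k=-\frac{a_{t_k}}{b_{t_k}}\mathbf{I}+\frac{1}{b_{t_k}}\mathbf{J}_k$ to $\mathbf{J}_k=b_{t_k}\mathbf{B}_k+a_{t_k}\mathbf{I}$. Both directions are justified by Theorem~\ref{thm:si_flow_score_clean_mean}, since differentiating Eq.~\eqref{eq:si_clean_mean_statement} in $\mathbf{z}_t$ gives exactly this affine link between $\nabla\boldsymbol{\mu}$ and $\nabla v_t$.

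\emph{Step 1 (secant pair).} A first-order Taylor expansion of the velocity gives $\mathbf{r}_k=v_{t_{k+1}}(\mathbf{z}_{k+1})-v_{t_k}(\mathbf{z}_k)\approx\mathbf{J}\mathbf{s}_k$. This holds to leading order, with the explicit time variation of $v$ treated as a higher-order term. Multiplying by $1/b_{t_k}$ and subtracting $a_{t_k}\mathbf{s}_k/b_{t_k}$ shows that the exact clean-mean Jacobian at time $t_k$ maps $\mathbf{s}_k$ approximately to $\mathbf{y}_k$. So the natural requirement on a corrected proxy $\tilde{\mathbf{B}}_k$ is the secant condition $\tilde{\mathbf{B}}_k\mathbf{s}_k=\mathbf{y}_k$, together with symmetry. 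Symmetry is natural because $\frac{\sigma_t^2}{\alpha_t}\nabla\boldsymbol{\mu}$ is a covariance (Eq.~\eqref{eq:si_covariance_matching_target}), and so it is a Hessian-type object.

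\emph{Step 2 (symmetric secant correction).} Set $\tilde{\mathbf{B}}_k=\mathbf{V}_k^\top\mathbf{B}_k\mathbf{V}_k+\rho_k\mathbf{y}_k\mathbf{y}_k^\top$ and check three properties.
\begin{itemize}
\item[(a)] Symmetry is inherited from $\mathbf{B}_k$.
\item[(b)] The secant condition holds: $\mathbf{V}_k\mathbf{s}_k=\mathbf{s}_k-\rho_k\mathbf{s}_k(\mathbf{y}_k^\top\mathbf{s}_k)=\mathbf{0}$, so $\tilde{\mathbf{B}}_k\mathbf{s}_k=\rho_k\mathbf{y}_k(\mathbf{y}_k^\top\mathbf{s}_k)=\mathbf{y}_k$.
\item[(c)] Positive definiteness is preserved whenever $\mathbf{y}_k^\top\mathbf{s}_k>0$. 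For any $\mathbf{x}$, $\mathbf{x}^\top\tilde{\mathbf{B}}_k\mathbf{x}=(\mathbf{V}_k\mathbf{x})^\top\mathbf{B}_k(\mathbf{V}_k\mathbf{x})+\rho_k(\mathbf{y}_k^\top\mathbf{x})^2$. Both terms are nonnegative, and they vanish simultaneously only when $\mathbf{x}=\mathbf{0}$.
\end{itemize}
Optionally, I would cite the classical characterisation (Nocedal--Wright) that this is the unique minimiser of a weighted Frobenius distance to $\mathbf{B}_k$ under symmetry and the secant constraint. This justifies the name ``symmetric secant recursion''.

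\emph{Step 3 (time transport).} Convert $\tilde{\mathbf{B}}_k$ back to a velocity-Jacobian estimate $\tilde{\mathbf{J}}=b_{t_k}\tilde{\mathbf{B}}_k+a_{t_k}\mathbf{I}$. Treat this as the estimate of $\nabla v$ at the new state, $\mathbf{J}_{k+1}:=\tilde{\mathbf{J}}$, and apply the definition of $\mathbf{B}$ at time $t_{k+1}$:
\[
\mathbf{B}_{k+1}=-\frac{a_{t_{k+1}}}{b_{t_{k+1}}}\mathbf{I}+\frac{1}{b_{t_{k+1}}}\left(b_{t_k}\tilde{\mathbf{B}}_k+a_{t_k}\mathbf{I}\right)=u_k\tilde{\mathbf{B}}_k+w_k\mathbf{I}.
\]
This is exactly Eq.~\eqref{eq:si_updateB_statement}.

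The main obstacle is Step 1 together with the standing assumption in Step 3 that the velocity Jacobian is frozen between $t_k$ and $t_{k+1}$. Both are approximations rather than identities, so the theorem has to be stated as ``the recursion is the symmetric secant update for the proxy''; it does not claim the proxy is exact. I would make the $O(\|\mathbf{s}_k\|^2+|t_{k+1}-t_k|)$ error explicit. The well-posedness condition $\mathbf{y}_k^\top\mathbf{s}_k>0$ is not guaranteed in general. I would state it as a hypothesis and defer its enforcement to the damping rule of Supplementary Note~\ref{subsec:si_damping_pd}. That note is also needed to handle the $w_k\mathbf{I}$ shift: when $w_k<0$, this shift could otherwise destroy positive definiteness.
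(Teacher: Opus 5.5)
Your proposal is correct and follows the paper's proof essentially step for step. Like the paper, you invert the $\mathbf{B}$--$\mathbf{J}$ relation and impose the secant condition on the new velocity Jacobian, written with the time-$t_k$ coefficients so that it reads $\widetilde{\mathbf{B}}\mathbf{s}_k=\mathbf{y}_k$. You then take the symmetric DFP-form update $\mathbf{V}_k^\top\mathbf{B}_k\mathbf{V}_k+\rho_k\mathbf{y}_k\mathbf{y}_k^\top$ and re-express it with the time-$t_{k+1}$ coefficients to get $u_k\widetilde{\mathbf{B}}+w_k\mathbf{I}$.

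Two small adjustments. First, the Nocedal--Wright least-change characterization should not be optional. Symmetry plus the secant equation alone do not single out this update, and the paper's proof rests on exactly that citation. Second, the damping note only enforces $\mathbf{s}_k^\top\widehat{\mathbf{y}}_k>0$; it does not treat the $w_k\mathbf{I}$ shift. That shift is harmless anyway for the linear path $\alpha_t=t$, $\sigma_t=1-t$, where $w_k=1-u_k\in(0,1)$.
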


\begin{theorem}[Compact representation of the accumulated update]
\label{thm:Bk_compact_form}
If $\mathbf{B}_0=\gamma_0\mathbf{I}$ and the accepted secant pairs are accumulated through Theorem~\ref{thm:updataB}, then $\mathbf{B}_k$ admits the compact representation
\begin{equation}
    \mathbf{B}_k=\gamma_k\mathbf{I}+\mathbf{U}_k\boldsymbol{\Gamma}_k\mathbf{U}_k^\top,
    \label{eq:si_compact_full_statement}
\end{equation}
where $\mathbf{U}_{k+1}=[\mathbf{U}_k,\mathbf{s}_k,\mathbf{y}_k]$, $\gamma_{k+1}=u_k\gamma_k+w_k$, and, with
\begin{equation*}
    \mathbf{p}_k=\boldsymbol{\Gamma}_k\mathbf{U}_k^\top\mathbf{s}_k,
    \qquad
    \tau_k=\mathbf{s}_k^\top\mathbf{U}_k\boldsymbol{\Gamma}_k\mathbf{U}_k^\top\mathbf{s}_k,
    \qquad
    \delta_k=\tau_k+\gamma_k\mathbf{s}_k^\top\mathbf{s}_k,
\end{equation*}
$\boldsymbol{\Gamma}_k$ obeys
\begin{equation}
    \boldsymbol{\Gamma}_{k+1}
    =
    u_k
    \begin{bmatrix}
    \boldsymbol{\Gamma}_k & \mathbf{0} & -\rho_k\mathbf{p}_k\\
    \mathbf{0}^\top & 0 & -\gamma_k\rho_k\\
    -\rho_k\mathbf{p}_k^\top & -\gamma_k\rho_k & \rho_k(1+\rho_k\delta_k)
    \end{bmatrix}.
    \label{eq:si_gamma_recursion_statement}
\end{equation}
\end{theorem}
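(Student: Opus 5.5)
The plan is an induction on $k$. It uses only the recursion of Theorem~\ref{thm:updataB},
\begin{equation*}
\mathbf{B}_{k+1}=u_k\left(\mathbf{V}_k^{\top}\mathbf{B}_k\mathbf{V}_k+\rho_k\mathbf{y}_k\mathbf{y}_k^{\top}\right)+w_k\mathbf{I},
\qquad
\mathbf{V}_k=\mathbf{I}-\rho_k\mathbf{s}_k\mathbf{y}_k^{\top}.
\end{equation*}
We carry two inductive hypotheses: $\mathbf{B}_k=\gamma_k\mathbf{I}+\mathbf{U}_k\boldsymbol{\Gamma}_k\mathbf{U}_k^{\top}$, and $\boldsymbol{\Gamma}_k$ is symmetric. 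For the base case take $\mathbf{U}_0$ to be the empty $d\times 0$ matrix and $\boldsymbol{\Gamma}_0$ the empty $0\times 0$ matrix, so that $\mathbf{B}_0=\gamma_0\mathbf{I}$ holds trivially. For the first step, $\mathbf{p}_0$ is empty and $\tau_0=0$, so Eq.~\eqref{eq:si_gamma_recursion_statement} reduces to its lower-right $2\times 2$ block; we will check this degenerate case separately.

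For the inductive step, we substitute the compact form into $\mathbf{V}_k^{\top}\mathbf{B}_k\mathbf{V}_k$ and treat the two pieces separately. Drop the index $k$ for brevity.

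The scalar piece gives
\begin{equation*}
\gamma\mathbf{V}^{\top}\mathbf{V}=\gamma\mathbf{I}-\gamma\rho\left(\mathbf{s}\mathbf{y}^{\top}+\mathbf{y}\mathbf{s}^{\top}\right)+\gamma\rho^2(\mathbf{s}^{\top}\mathbf{s})\,\mathbf{y}\mathbf{y}^{\top}.
\end{equation*}

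For the low-rank piece, use the symmetry of $\boldsymbol{\Gamma}$, which gives $\mathbf{s}^{\top}\mathbf{U}\boldsymbol{\Gamma}=\mathbf{p}^{\top}$. This yields
\begin{equation*}
\mathbf{V}^{\top}\mathbf{U}\boldsymbol{\Gamma}\mathbf{U}^{\top}\mathbf{V}=\mathbf{U}\boldsymbol{\Gamma}\mathbf{U}^{\top}-\rho\,\mathbf{U}\mathbf{p}\mathbf{y}^{\top}-\rho\,\mathbf{y}\mathbf{p}^{\top}\mathbf{U}^{\top}+\rho^2\tau\,\mathbf{y}\mathbf{y}^{\top}.
\end{equation*}

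Adding $\rho\mathbf{y}\mathbf{y}^{\top}$ collects the coefficient of $\mathbf{y}\mathbf{y}^{\top}$ as $\rho+\rho^2(\tau+\gamma\mathbf{s}^{\top}\mathbf{s})=\rho(1+\rho\delta)$. The other rank-one terms contribute a coefficient $-\gamma\rho$ on $\mathbf{s}\mathbf{y}^{\top}$ and its transpose, and a block $-\rho\mathbf{p}$ coupling $\mathbf{U}$ with $\mathbf{y}$. The key observation is that all non-identity terms are then of the form $[\mathbf{U},\mathbf{s},\mathbf{y}]\,\mathbf{M}\,[\mathbf{U},\mathbf{s},\mathbf{y}]^{\top}$. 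Here $\mathbf{M}$ is exactly the bracketed block matrix in Eq.~\eqref{eq:si_gamma_recursion_statement}, and it has a zero $(\mathbf{s},\mathbf{s})$ entry because no $\mathbf{s}\mathbf{s}^{\top}$ term ever appears.

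Multiplying by $u_k$ and adding $w_k\mathbf{I}$ then gives the identity coefficient $\gamma_{k+1}=u_k\gamma_k+w_k$, together with $\mathbf{U}_{k+1}=[\mathbf{U}_k,\mathbf{s}_k,\mathbf{y}_k]$ and $\boldsymbol{\Gamma}_{k+1}=u_k\mathbf{M}$. Since $\mathbf{M}$ is symmetric by inspection, the symmetry hypothesis propagates and the induction closes.

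I expect no serious obstacle. The only delicate point is bookkeeping: $\mathbf{p}_k$ must be used consistently on both sides, which requires symmetry of $\boldsymbol{\Gamma}_k$ to be carried through the induction. We also need $\rho_k$ to be well defined, that is $\mathbf{y}_k^{\top}\mathbf{s}_k\neq 0$, which is guaranteed by the phrase ``accepted secant pairs''; we will state this explicitly as the acceptance condition.
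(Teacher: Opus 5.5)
Your proposal is correct and follows essentially the same route as the paper. Both argue by induction on $k$ from empty $\mathbf{U}_0,\boldsymbol{\Gamma}_0$, expand $\mathbf{V}_k^{\top}\mathbf{B}_k\mathbf{V}_k+\rho_k\mathbf{y}_k\mathbf{y}_k^{\top}$ under the compact hypothesis, collect the rank-one terms as $[\mathbf{U}_k,\mathbf{s}_k,\mathbf{y}_k]$ times the stated block matrix, and then apply the affine rescaling $u_k(\cdot)+w_k\mathbf{I}$. Splitting into the $\gamma_k\mathbf{V}_k^{\top}\mathbf{V}_k$ and low-rank pieces is only a bookkeeping variant of the paper's expansion via $\mathbf{B}_k\mathbf{s}_k$ and $\mathbf{s}_k^{\top}\mathbf{B}_k\mathbf{s}_k$, and carrying the symmetry of $\boldsymbol{\Gamma}_k$ as an inductive hypothesis is a welcome tightening, since the paper uses it tacitly when writing $\mathbf{s}_k^{\top}\mathbf{B}_k$ in terms of $\mathbf{p}_k^{\top}$.
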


\begin{theorem}[Semi-numerical square root for SA-MC]
\label{thm:semi_numerical_sqrt}
Let $\mathbf{B}=\gamma\mathbf{I}+\mathbf{U}\boldsymbol{\Gamma}\mathbf{U}^\top$ with $\gamma>0$, $\mathbf{U}\in\mathbb{R}^{d\times m}$, and $\boldsymbol{\Gamma}\in\mathbb{R}^{m\times m}$. Let $\mathbf{U}=\mathbf{Q}\mathbf{R}$ be a reduced QR factorization and let
\begin{equation*}
    \mathbf{C}=\gamma\mathbf{I}+\mathbf{R}\boldsymbol{\Gamma}\mathbf{R}^\top
    =\mathbf{L}_{\mathbf{C}}\mathbf{L}_{\mathbf{C}}^\top.
\end{equation*}
Then
\begin{equation}
    \mathbf{L}
    =
    \mathbf{Q}(\mathbf{L}_{\mathbf{C}}-\sqrt{\gamma}\mathbf{I})\mathbf{Q}^\top
    +
    \sqrt{\gamma}\mathbf{I}
    \label{eq:si_sqrt_theorem_statement}
\end{equation}
satisfies $\mathbf{B}=\mathbf{L}\mathbf{L}^\top$.
\end{theorem}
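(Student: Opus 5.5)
The proof is a direct algebraic verification. I will rewrite $\mathbf{L}$ in terms of the orthogonal projector onto the range of $\mathbf{Q}$ and use the orthonormality of the columns of $\mathbf{Q}$ to kill the cross terms.

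Set $\mathbf{P}=\mathbf{Q}\mathbf{Q}^\top$. Since the reduced QR factorization gives $\mathbf{Q}^\top\mathbf{Q}=\mathbf{I}_m$, the matrix $\mathbf{P}$ is symmetric and idempotent, and $(\mathbf{I}-\mathbf{P})\mathbf{Q}=\mathbf{0}$. The first step is to rewrite Eq.~\eqref{eq:si_sqrt_theorem_statement} as
\begin{equation*}
    \mathbf{L}=\sqrt{\gamma}(\mathbf{I}-\mathbf{P})+\mathbf{Q}\mathbf{L}_{\mathbf{C}}\mathbf{Q}^\top .
\end{equation*}
This splits $\mathbf{L}$ into a part acting on the orthogonal complement of $\operatorname{range}(\mathbf{Q})$ and a part acting inside $\operatorname{range}(\mathbf{Q})$.

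Next I expand $\mathbf{L}\mathbf{L}^\top$ into four terms.
\begin{itemize}
\item The diagonal term $\gamma(\mathbf{I}-\mathbf{P})^2$ equals $\gamma(\mathbf{I}-\mathbf{P})$ by idempotence.
\item The two cross terms, $\sqrt{\gamma}(\mathbf{I}-\mathbf{P})\mathbf{Q}\mathbf{L}_{\mathbf{C}}^\top\mathbf{Q}^\top$ and its transpose, vanish because $(\mathbf{I}-\mathbf{P})\mathbf{Q}=\mathbf{0}$.
\item The last term is $\mathbf{Q}\mathbf{L}_{\mathbf{C}}\mathbf{Q}^\top\mathbf{Q}\mathbf{L}_{\mathbf{C}}^\top\mathbf{Q}^\top=\mathbf{Q}\mathbf{L}_{\mathbf{C}}\mathbf{L}_{\mathbf{C}}^\top\mathbf{Q}^\top=\mathbf{Q}\mathbf{C}\mathbf{Q}^\top$, again using $\mathbf{Q}^\top\mathbf{Q}=\mathbf{I}_m$.
\end{itemize}
Substituting the definition of $\mathbf{C}$ gives $\mathbf{Q}\mathbf{C}\mathbf{Q}^\top=\gamma\mathbf{P}+\mathbf{Q}\mathbf{R}\boldsymbol{\Gamma}\mathbf{R}^\top\mathbf{Q}^\top=\gamma\mathbf{P}+\mathbf{U}\boldsymbol{\Gamma}\mathbf{U}^\top$. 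Adding the surviving terms yields $\gamma(\mathbf{I}-\mathbf{P})+\gamma\mathbf{P}+\mathbf{U}\boldsymbol{\Gamma}\mathbf{U}^\top=\mathbf{B}$, as claimed.

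The only point needing care is the well-posedness of the small Cholesky factor $\mathbf{L}_{\mathbf{C}}$. I will observe that $\mathbf{C}=\mathbf{Q}^\top\mathbf{B}\mathbf{Q}$, which follows from the same identities. Hence, whenever $\mathbf{B}$ is positive definite (the setting in which SA-MC applies the theorem, after the damping of Supplementary Section~\ref{subsec:si_damping_pd}), $\mathbf{C}$ is positive definite and its Cholesky factor exists. If $\mathbf{U}$ is rank-deficient, I will take $\mathbf{Q}$ to be any $d\times m$ factor with orthonormal columns, with $\mathbf{R}$ possibly singular. The argument above only uses $\mathbf{Q}^\top\mathbf{Q}=\mathbf{I}_m$ and $\mathbf{U}=\mathbf{Q}\mathbf{R}$, so it goes through unchanged.

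I expect no step to be a genuine obstacle; the main thing to get right is stating the positive-definiteness hypothesis so that $\mathbf{L}_{\mathbf{C}}$ is defined.
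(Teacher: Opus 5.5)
Your proof is correct and uses the same mechanism as the paper's proof. Both split $\mathbb{R}^d$ orthogonally into $\operatorname{range}(\mathbf{Q})$ and its complement, and both assume $\mathbf{B}\succ\mathbf{0}$ so that $\mathbf{C}$ has a Cholesky factor. The difference is only in presentation: the paper completes $\mathbf{Q}$ to an orthogonal matrix $[\mathbf{Q},\mathbf{Q}_{\perp}]$, block-diagonalizes $\mathbf{B}$ as $\mathrm{diag}(\mathbf{C},\gamma\mathbf{I}_{d-m})$, conjugates back the block square root, and then removes $\mathbf{Q}_{\perp}$ via $\mathbf{Q}_{\perp}\mathbf{Q}_{\perp}^{\top}=\mathbf{I}-\mathbf{Q}\mathbf{Q}^{\top}$, whereas you verify $\mathbf{L}\mathbf{L}^{\top}=\mathbf{B}$ directly with the projector $\mathbf{Q}\mathbf{Q}^{\top}$ and never need $\mathbf{Q}_{\perp}$.
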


\subsection{Complete proofs}
\label{subsec:si_complete_proofs}

\subsubsection{Proof of Theorem~\ref{thm:si_distributional_target}}
\begin{proof}
Write $q$ and $p_\theta$ for $q(\mathbf{z}\mid r,o)$ and $p_\theta(\mathbf{z}\mid r)$, respectively. Enforcing $\int q(\mathbf{z})\,\mathrm{d}\mathbf{z}=1$ with a Lagrange multiplier $\eta$, the Lagrangian of Eq.~\eqref{eq:si_free_energy_problem} is
\begin{equation}
    \mathcal{L}(q)
    =
    \int q(\mathbf{z})
    \left[
        \log\frac{q(\mathbf{z})}{p_\theta(\mathbf{z}\mid r)}
        +
        \lambda\hat{c}_\phi(\mathbf{z},o;\lambda)
    \right]\mathrm{d}\mathbf{z}
    +
    \eta\left(\int q(\mathbf{z})\,\mathrm{d}\mathbf{z}-1\right).
    \label{eq:si_free_energy_lagrangian}
\end{equation}
The first variation with respect to $q$ gives
\begin{equation}
    \log\frac{q(\mathbf{z})}{p_\theta(\mathbf{z}\mid r)}
    +1+
    \lambda\hat{c}_\phi(\mathbf{z},o;\lambda)
    +\eta=0.
    \label{eq:si_free_energy_stationary}
\end{equation}
Solving Eq.~\eqref{eq:si_free_energy_stationary} yields
\begin{equation}
    q(\mathbf{z}\mid r,o)
    =
    C\,p_\theta(\mathbf{z}\mid r)
    \exp[-\lambda\hat{c}_\phi(\mathbf{z},o;\lambda)].
\end{equation}
The constant $C$ is determined by normalization, giving $C=1/Z_\lambda(r,o)$ and therefore Eq.~\eqref{eq:si_cost_biased_latent}. The KL term is strictly convex in $q$ on the support of $p_\theta$, while the expectation term is linear, so the stationary point is the unique minimizer when $Z_\lambda(r,o)<\infty$. Taking $-\log$ of Eq.~\eqref{eq:si_cost_biased_latent} and dropping the constant $\log Z_\lambda(r,o)$ gives Eq.~\eqref{eq:si_map_objective}. Differentiating $\log\hat{p}_\lambda(\mathbf{z}\mid r,o)$ with respect to $\mathbf{z}$ gives Eq.~\eqref{eq:si_unified_opt_gradient}.
\end{proof}

\subsubsection{Proof of Theorem~\ref{thm:si_skl_identity}}
\begin{proof}
For compactness write $p_J=p^{(J,\lambda)}$ and $p_\phi=p^{(J_\phi,\lambda)}$. The first KL term is
\begin{align}
    D_{\mathrm{KL}}(p_J\|p_\phi)
    &=
    \mathbb{E}_{p_J}
    \left[
        \log\frac{p_J(\mathbf{z})}{p_\phi(\mathbf{z})}
    \right] \\
    &=
    \mathbb{E}_{p_J}
    \left[
        -\lambda J(\mathbf{z})-\log Z_J
        +\lambda J_\phi(\mathbf{z})+\log Z_\phi
    \right] \\
    &=
    \lambda\mathbb{E}_{p_J}[J_\phi(\mathbf{z})-J(\mathbf{z})]
    +
    \log Z_\phi-
    \log Z_J.
    \label{eq:si_forward_skl_derivation}
\end{align}
Similarly,
\begin{equation}
    D_{\mathrm{KL}}(p_\phi\|p_J)
    =
    -\lambda\mathbb{E}_{p_\phi}[J_\phi(\mathbf{z})-J(\mathbf{z})]
    +
    \log Z_J-
    \log Z_\phi.
    \label{eq:si_reverse_skl_derivation}
\end{equation}
Adding Eqs.~\eqref{eq:si_forward_skl_derivation} and~\eqref{eq:si_reverse_skl_derivation} cancels the log-normalizers and gives
\begin{equation}
    \lambda
    \left(
    \mathbb{E}_{p_J}[J_\phi-J]
    -
    \mathbb{E}_{p_\phi}[J_\phi-J]
    \right).
\end{equation}
Since $p_J(\mathbf{z})=p(\mathbf{z})w_J(\mathbf{z})$ and $p_\phi(\mathbf{z})=p(\mathbf{z})w_\phi(\mathbf{z})$, this is exactly Eq.~\eqref{eq:si_skl_population_identity}. Replacing $p$ by the empirical measure on a mini-batch and replacing $J,J_\phi$ by $y^{[i]},\hat{c}_\phi(\mathbf{z}^{[i]},o^{[i]};\lambda)$ gives the normalized weights in Eq.~\eqref{eq:si_minibatch_statement_weights} and the mini-batch loss in Eq.~\eqref{eq:si_minibatch_statement_skl}.
\end{proof}

\subsubsection{Proof of Theorem~\ref{thm:si_flow_score_clean_mean}}
\begin{proof}
The score conversion in Eq.~\eqref{eq:si_fm_score_relation} is the standard affine-path relation between the flow-matching velocity and the score of the intermediate density. With $s_t=b_t\sigma_t^2/\alpha_t$, rearranging Eq.~\eqref{eq:si_fm_score_relation} immediately gives Eq.~\eqref{eq:si_fm_score}.

It remains to derive the clean-mean identity. For the affine path, write
\begin{equation}
    \mathbf{x}_t=\alpha_t\mathbf{x}_1+\sigma_t\boldsymbol{\epsilon},
    \qquad
    \boldsymbol{\epsilon}\sim\mathcal{N}(\mathbf{0},\mathbf{I}).
    \label{eq:si_affine_path_for_mean}
\end{equation}
The conditional velocity is the conditional expectation of the path derivative:
\begin{equation}
    \mathbf{v}_t(\mathbf{x}_t)
    =
    \mathbb{E}
    \left[
        \dot{\alpha}_t\mathbf{x}_1+\dot{\sigma}_t\boldsymbol{\epsilon}
        \mid \mathbf{x}_t
    \right].
    \label{eq:si_conditional_velocity_mean_proof}
\end{equation}
Substituting $\boldsymbol{\epsilon}=(\mathbf{x}_t-\alpha_t\mathbf{x}_1)/\sigma_t$ into Eq.~\eqref{eq:si_conditional_velocity_mean_proof} gives
\begin{align}
    \mathbf{v}_t(\mathbf{x}_t)
    &=
    \frac{\dot{\sigma}_t}{\sigma_t}\mathbf{x}_t
    +
    \left(\dot{\alpha}_t-\frac{\dot{\sigma}_t\alpha_t}{\sigma_t}\right)
    \mathbb{E}[\mathbf{x}_1\mid\mathbf{x}_t] \\
    &=
    a_t\mathbf{x}_t+b_t\boldsymbol{\mu}(\mathbf{x}_t),
    \label{eq:si_velocity_clean_mean_linear}
\end{align}
where $a_t=\dot{\sigma}_t/\sigma_t$, $b_t=(\dot{\alpha}_t\sigma_t-\dot{\sigma}_t\alpha_t)/\sigma_t$, and $\boldsymbol{\mu}(\mathbf{x}_t)=\mathbb{E}[\mathbf{x}_1\mid\mathbf{x}_t]$. Solving Eq.~\eqref{eq:si_velocity_clean_mean_linear} for $\boldsymbol{\mu}(\mathbf{x}_t)$ gives Eq.~\eqref{eq:si_clean_mean_statement}.
\end{proof}

\subsubsection{Proof of Theorem~\ref{thm:loss_guided_estimators}}
\begin{proof}
We derive each estimator by specifying the proposal $q_{1|t}(\mathbf{x}_1\mid \mathbf{x}_t)$ in the LGD and SIM formulations. 
Let $\boldsymbol{\epsilon}^{(i)}\sim\mathcal{N}(\mathbf{0},\mathbf{I})$, and recall the conditional mean
\begin{equation}
\boldsymbol{\mu}(\mathbf{x}_{t})
\triangleq \mathbb{E}_{p_{1|t}(\mathbf{x}_{1}\mid \mathbf{x}_t)}[\mathbf{x}_{1}]
= -\frac{a_{t}}{b_{t}}\mathbf{x}_{t} + \frac{1}{b_{t}}\mathbf{v}_{t}(\mathbf{x}_{t}).
\end{equation}

\paragraph{LGD (DPS and LGD-MC).}
Define the log-partition function
\begin{equation}
Z_t(\mathbf{x}_t)
\;\triangleq\;
\mathbb{E}_{q_{1|t}(\mathbf{x}_1\mid \mathbf{x}_t)}\!\left[\exp\!\big(-J(\mathbf{x}_1)\big)\right]
=
\int q_{1|t}(\mathbf{x}_1\mid\mathbf{x}_t)\,\exp\!\big(-J(\mathbf{x}_1)\big)\,d\mathbf{x}_1 .
\end{equation}
Then
\begin{equation}
\nabla_{\mathbf{x}_t}\log Z_t(\mathbf{x}_t)
=
\frac{1}{Z_t(\mathbf{x}_t)}\nabla_{\mathbf{x}_t} Z_t(\mathbf{x}_t).
\end{equation}
Under mild regularity conditions that allow exchanging differentiation and integration, differentiating under the integral sign gives
\begin{align}
\nabla_{\mathbf{x}_t} Z_t(\mathbf{x}_t)
&=
\int \exp\!\big(-J(\mathbf{x}_1)\big)\,\nabla_{\mathbf{x}_t} q_{1|t}(\mathbf{x}_1\mid\mathbf{x}_t)\,d\mathbf{x}_1 \nonumber\\
&=
\int q_{1|t}(\mathbf{x}_1\mid\mathbf{x}_t)\,\exp\!\big(-J(\mathbf{x}_1)\big)\,
\nabla_{\mathbf{x}_t}\log q_{1|t}(\mathbf{x}_1\mid\mathbf{x}_t)\,d\mathbf{x}_1 .
\end{align}
Introduce the exponentially tilted distribution
\begin{equation}
q_{1|t}^{(J)}(\mathbf{x}_1\mid\mathbf{x}_t)
\;\triangleq\;
\frac{q_{1|t}(\mathbf{x}_1\mid\mathbf{x}_t)\exp\!\big(-J(\mathbf{x}_1)\big)}
{Z_t(\mathbf{x}_t)}.
\end{equation}
Substituting into the gradient of $\log Z_t$ yields the identity
\begin{equation}
\nabla_{\mathbf{x}_t}\log Z_t(\mathbf{x}_t)
=
\mathbb{E}_{q_{1|t}^{(J)}(\mathbf{x}_1\mid\mathbf{x}_t)}
\!\left[\nabla_{\mathbf{x}_t}\log q_{1|t}(\mathbf{x}_1\mid\mathbf{x}_t)\right].
\end{equation}
Therefore the LGD guidance direction can be written as
\begin{equation}
\mathbf{g}_{t}^{\mathrm{LGD}}
=\frac{b_{t}\sigma_{t}^{2}}{\alpha_{t}}\,
\nabla_{\mathbf{x}_{t}}
\log \mathbb{E}_{q_{1|t}(\mathbf{x}_{1}\mid \mathbf{x}_t)}\!\left[\exp\!\big(-J(\mathbf{x}_{1})\big)\right].
\end{equation}

When $q_{1|t}(\cdot\mid\mathbf{x}_t)$ is reparameterizable, i.e., $\mathbf{x}_1=\mathcal{T}(\mathbf{x}_t,\boldsymbol{\epsilon})$ with
$\boldsymbol{\epsilon}\sim\mathcal{N}(\mathbf{0},\mathbf{I})$, we have
\begin{equation}
Z_t(\mathbf{x}_t)=
\mathbb{E}_{\boldsymbol{\epsilon}}\!\left[\exp\!\big(-J(\mathcal{T}(\mathbf{x}_t,\boldsymbol{\epsilon}))\big)\right].
\end{equation}
A practical Monte Carlo estimator is obtained by drawing $\boldsymbol{\epsilon}^{(i)}$ and setting
$\mathbf{x}_1^{(i)}=\mathcal{T}(\mathbf{x}_t,\boldsymbol{\epsilon}^{(i)})$, which gives
\begin{equation}
\mathbf{g}_{t}^{\mathrm{LGD}}
\approx
\frac{b_{t}\sigma_{t}^{2}}{\alpha_{t}}\,
\nabla_{\mathbf{x}_{t}}
\log\!\left(\frac{1}{n}\sum_{i=1}^{n}\exp\!\big(-J(\mathbf{x}_{1}^{(i)})\big)\right).
\end{equation}
In particular, if $q_{1|t}(\mathbf{x}_{1}\mid \mathbf{x}_t)=\delta(\mathbf{x}_{1}-\boldsymbol{\mu}(\mathbf{x}_{t}))$, then
\begin{equation}
\mathbf{g}_{t}^{\mathrm{LGD\text{-}DPS}}
=-\frac{b_{t}\sigma_{t}^{2}}{\alpha_{t}}\,
\nabla_{\mathbf{x}_{t}} J\!\left(\boldsymbol{\mu}(\mathbf{x}_{t})\right),
\end{equation}
and if $q_{1|t}(\mathbf{x}_{1}\mid \mathbf{x}_t)=\mathcal{N}(\boldsymbol{\mu}(\mathbf{x}_{t}),\boldsymbol{\Sigma}_{t})$ with
$\boldsymbol{\Sigma}_t=\mathbf{L}_t\mathbf{L}_t^{\top}$, then using
$\mathbf{x}_{1}^{(i)}=\boldsymbol{\mu}(\mathbf{x}_{t})+\mathbf{L}_{t}\boldsymbol{\epsilon}^{(i)}$ yields
\begin{equation}
\mathbf{g}_{t}^{\mathrm{LGD\text{-}MC}}
=\frac{b_{t}\sigma_{t}^{2}}{\alpha_{t}}\,
\nabla_{\mathbf{x}_{t}}
\log\!\left(\frac{1}{n}\sum_{i=1}^{n}\exp\!\big(-J(\mathbf{x}_{1}^{(i)})\big)\right).
\end{equation}

\paragraph{SIM (SIM-MC).}
Start from the SIM estimator in score form, Eq.~\eqref{eq:gt_score_based}:
\begin{equation}
\mathbf{g}_{t}^{\mathrm{SIM}}
=\frac{b_{t}\sigma_{t}^{2}}{\alpha_{t}}\,
\mathbb{E}_{q_{1|t}^{(J)}(\mathbf{x}_{1}\mid \mathbf{x}_t)}
\!\left[\nabla_{\mathbf{x}_{t}}\log p_{1|t}(\mathbf{x}_{1}\mid \mathbf{x}_t)\right],
\label{eq:sim_start_app}
\end{equation}
where $q_{1|t}^{(J)}(\mathbf{x}_{1}\mid \mathbf{x}_t)\propto q_{1|t}(\mathbf{x}_{1}\mid \mathbf{x}_t)\exp(-J(\mathbf{x}_{1}))$.
We will use the identity
\begin{equation}
\mathbb{E}_{q_{1|t}(\mathbf{x}_{1}\mid \mathbf{x}_t)}
\!\left[\nabla_{\mathbf{x}_{t}}\log p_{1|t}(\mathbf{x}_{1}\mid \mathbf{x}_t)\right]=\mathbf{0}.
\label{eq:score_zero_mean_app}
\end{equation}
To verify \eqref{eq:score_zero_mean_app}, apply Bayes' rule:
\begin{equation}
\nabla_{\mathbf{x}_{t}}\log p_{1|t}(\mathbf{x}_{1}\mid \mathbf{x}_t)
=
\nabla_{\mathbf{x}_{t}}\log p_{t|1}(\mathbf{x}_{t}\mid \mathbf{x}_1)
-\nabla_{\mathbf{x}_{t}}\log p_{t}(\mathbf{x}_{t}).
\end{equation}
For the forward noising kernel $p_{t|1}(\mathbf{x}_{t}\mid \mathbf{x}_1)=\mathcal{N}(\alpha_t\mathbf{x}_1,\sigma_t^{2}\mathbf{I})$, we have
\begin{equation}
\nabla_{\mathbf{x}_{t}}\log p_{t|1}(\mathbf{x}_{t}\mid \mathbf{x}_1)
=\frac{1}{\sigma_t^2}\big(\alpha_t\mathbf{x}_1-\mathbf{x}_t\big).
\end{equation}
Taking expectation under $q_{1|t}(\mathbf{x}_1\mid \mathbf{x}_t)$ yields
\begin{equation}
\mathbb{E}_{q_{1|t}}\!\left[\nabla_{\mathbf{x}_{t}}\log p_{1|t}(\mathbf{x}_{1}\mid \mathbf{x}_t)\right]
=
\frac{1}{\sigma_t^2}\big(\alpha_t\,\mathbb{E}_{q_{1|t}}[\mathbf{x}_1]-\mathbf{x}_t\big)
-\nabla_{\mathbf{x}_{t}}\log p_t(\mathbf{x}_t).
\label{eq:score_mean_app}
\end{equation}
We choose the SIM proposal to match the conditional mean,
$q_{1|t}(\mathbf{x}_{1}\mid \mathbf{x}_t)=\mathcal{N}\!\big(\boldsymbol{\mu}(\mathbf{x}_t),\boldsymbol{\Sigma}_t(\mathbf{x}_t)\big)$, so that
$\mathbb{E}_{q_{1|t}}[\mathbf{x}_1]=\boldsymbol{\mu}(\mathbf{x}_t)$.
By the (Tweedie) identity implied by the definition of $\boldsymbol{\mu}(\mathbf{x}_t)$,
\begin{equation}
\nabla_{\mathbf{x}_t}\log p_t(\mathbf{x}_t)=\frac{1}{\sigma_t^2}\big(\alpha_t\boldsymbol{\mu}(\mathbf{x}_t)-\mathbf{x}_t\big),
\end{equation}
and substituting into \eqref{eq:score_mean_app} gives \eqref{eq:score_zero_mean_app}.

Using \eqref{eq:score_zero_mean_app}, we can subtract zero from \eqref{eq:sim_start_app}:
\begin{equation}
\mathbf{g}_{t}^{\mathrm{SIM}}
=\frac{b_{t}\sigma_{t}^{2}}{\alpha_{t}}
\left(
\mathbb{E}_{q_{1|t}^{(J)}}\!\left[\nabla_{\mathbf{x}_{t}}\log p_{1|t}(\mathbf{x}_{1}\mid \mathbf{x}_t)\right]
-
\mathbb{E}_{q_{1|t}}\!\left[\nabla_{\mathbf{x}_{t}}\log p_{1|t}(\mathbf{x}_{1}\mid \mathbf{x}_t)\right]
\right).
\label{eq:sim_diff_app}
\end{equation}
Plugging in the Bayes decomposition and canceling the $\nabla_{\mathbf{x}_t}\log p_t(\mathbf{x}_t)$ term yields
\begin{align}
\mathbf{g}_{t}^{\mathrm{SIM}}
&=\frac{b_t\sigma_t^2}{\alpha_t}
\left(
\mathbb{E}_{q_{1|t}^{(J)}}\!\left[\nabla_{\mathbf{x}_t}\log p_{t|1}(\mathbf{x}_t\mid \mathbf{x}_1)\right]
-
\mathbb{E}_{q_{1|t}}\!\left[\nabla_{\mathbf{x}_t}\log p_{t|1}(\mathbf{x}_t\mid \mathbf{x}_1)\right]
\right)\nonumber\\
&=\frac{b_t\sigma_t^2}{\alpha_t}\cdot\frac{\alpha_t}{\sigma_t^2}
\left(\mathbb{E}_{q_{1|t}^{(J)}}[\mathbf{x}_1]-\mathbb{E}_{q_{1|t}}[\mathbf{x}_1]\right)
= b_t\left(\mathbb{E}_{q_{1|t}^{(J)}}[\mathbf{x}_1]-\boldsymbol{\mu}(\mathbf{x}_t)\right).
\label{eq:sim_mean_shift_app}
\end{align}

Finally, write $\mathbf{x}_1=\boldsymbol{\mu}(\mathbf{x}_t)+\mathbf{z}_t$ with
$q_t(\mathbf{z}_t)=\mathcal{N}(\mathbf{0},\boldsymbol{\Sigma}_t(\mathbf{x}_t))$ and define
$J_t(\mathbf{z}_t)\triangleq J(\boldsymbol{\mu}(\mathbf{x}_t)+\mathbf{z}_t)$.
Then \eqref{eq:sim_mean_shift_app} becomes
\begin{equation}
\mathbf{g}_t^{\mathrm{SIM}}
= b_t\,\mathbb{E}_{q_t^{(J_t)}}[\mathbf{z}_t]
= b_t\,\mathbb{E}_{q_t}\!\left[w^{(J)}\!\left(\boldsymbol{\mu}(\mathbf{x}_t)+\mathbf{z}_t\right)\mathbf{z}_t\right],
\end{equation}
where $q_t^{(J_t)}(\mathbf{z}_t)\propto q_t(\mathbf{z}_t)\exp(-J_t(\mathbf{z}_t))$ and $w^{(J)}$ is the corresponding normalized weight.
Sampling $\mathbf{z}_t^{(i)}=\mathbf{L}_t(\mathbf{x}_t)\boldsymbol{\epsilon}^{(i)}$ and setting
$\mathbf{x}_1^{(i)}=\boldsymbol{\mu}(\mathbf{x}_t)+\mathbf{z}_t^{(i)}$, the Monte Carlo approximation gives
\begin{equation}
\mathbf{g}_{t}^{\mathrm{SIM\text{-}MC}}
= b_t\sum_{i=1}^{n} w_{\{\mathbf{x}_1^{(i)}\}}^{(J)}\!\left(\mathbf{x}_1^{(i)}\right)\,\mathbf{z}_t^{(i)},
\end{equation}
where $w_{\{\mathbf{x}_1^{(i)}\}}^{(J)}$ is the normalized exponential weight over the $n$ samples (using the normalized exponential weights in Eq.~\eqref{eq:si_minibatch_statement_weights}).
\end{proof}

\subsubsection{Proof of Theorem~\ref{thm:lgd_sim_gap}}
\begin{proof}
Fix $\mathbf{x}_t$ throughout and write $\boldsymbol{\mu}=\boldsymbol{\mu}(\mathbf{x}_t)$.
Starting from the score-based definitions under a proposal $q_{1|t}$,
\begin{equation}
\mathbf{g}_t^{\mathrm{LGD}}
=\frac{b_t\sigma_t^2}{\alpha_t}\,
\mathbb{E}_{q_{1|t}^{(J)}(\mathbf{x}_1\mid \mathbf{x}_t)}
\!\left[\nabla_{\mathbf{x}_t}\log q_{1|t}(\mathbf{x}_1\mid \mathbf{x}_t)\right],
\qquad
\mathbf{g}_t^{\mathrm{SIM}}
=\frac{b_t\sigma_t^2}{\alpha_t}\,
\mathbb{E}_{q_{1|t}^{(J)}(\mathbf{x}_1\mid \mathbf{x}_t)}
\!\left[\nabla_{\mathbf{x}_t}\log p_{1|t}(\mathbf{x}_1\mid \mathbf{x}_t)\right],
\end{equation}
where $q_{1|t}^{(J)}(\mathbf{x}_1\mid \mathbf{x}_t)\propto q_{1|t}(\mathbf{x}_1\mid \mathbf{x}_t)\exp(-J(\mathbf{x}_1))$.
Assume $q_{1|t}(\mathbf{x}_1\mid \mathbf{x}_t)=\mathcal{N}(\boldsymbol{\mu},\boldsymbol{\Sigma}_t)$ and let
$\mathbf{z}_t\triangleq \mathbf{x}_1-\boldsymbol{\mu}$, so that $\mathbf{z}_t\sim q_t(\mathbf{z}_t)=\mathcal{N}(\mathbf{0},\boldsymbol{\Sigma}_t)$.
Let $\mathbf{B}_t\triangleq \nabla_{\mathbf{x}_t}\boldsymbol{\mu}(\mathbf{x}_t)$.

Since $q_{1|t}$ depends on $\mathbf{x}_t$ only through $\boldsymbol{\mu}(\mathbf{x}_t)$, we have
\[
\log q_{1|t}(\mathbf{x}_1\mid \mathbf{x}_t)
= -\tfrac12(\mathbf{x}_1-\boldsymbol{\mu})^{\top}\boldsymbol{\Sigma}_t^{-1}(\mathbf{x}_1-\boldsymbol{\mu})+\mathrm{const},
\]
and differentiating w.r.t.\ $\mathbf{x}_t$ yields
\begin{equation}
\nabla_{\mathbf{x}_t}\log q_{1|t}(\mathbf{x}_1\mid \mathbf{x}_t)
=
\mathbf{B}_t^{\top}\boldsymbol{\Sigma}_t^{-1}(\mathbf{x}_1-\boldsymbol{\mu})
=
\mathbf{B}_t^{\top}\boldsymbol{\Sigma}_t^{-1}\mathbf{z}_t.
\label{eq:score_q_app_clean}
\end{equation}
For the true conditional, Bayes' rule gives
$\log p_{1|t}(\mathbf{x}_1\mid \mathbf{x}_t)=\log p_{t|1}(\mathbf{x}_t\mid \mathbf{x}_1)-\log p_t(\mathbf{x}_t)$.
With $p_{t|1}(\mathbf{x}_t\mid \mathbf{x}_1)=\mathcal{N}(\alpha_t\mathbf{x}_1,\sigma_t^2\mathbf{I})$,
\[
\nabla_{\mathbf{x}_t}\log p_{t|1}(\mathbf{x}_t\mid \mathbf{x}_1)
=\sigma_t^{-2}(\alpha_t\mathbf{x}_1-\mathbf{x}_t),
\]
and the Tweedie identity implies
$\nabla_{\mathbf{x}_t}\log p_t(\mathbf{x}_t)=\sigma_t^{-2}(\alpha_t\boldsymbol{\mu}-\mathbf{x}_t)$.
Therefore,
\begin{equation}
\nabla_{\mathbf{x}_t}\log p_{1|t}(\mathbf{x}_1\mid \mathbf{x}_t)
=\frac{\alpha_t}{\sigma_t^2}\,(\mathbf{x}_1-\boldsymbol{\mu})
=\frac{\alpha_t}{\sigma_t^2}\,\mathbf{z}_t.
\label{eq:score_p_app_clean}
\end{equation}

Subtracting the two guidance terms and using \eqref{eq:score_q_app_clean}--\eqref{eq:score_p_app_clean} gives
\begin{equation}
\mathbf{g}_t^{\mathrm{LGD}}-\mathbf{g}_t^{\mathrm{SIM}}
=
\frac{b_t\sigma_t^2}{\alpha_t}\,
\mathbb{E}_{q_{1|t}^{(J)}}
\!\left[
\Big(\mathbf{B}_t^{\top}\boldsymbol{\Sigma}_t^{-1}-\frac{\alpha_t}{\sigma_t^2}\mathbf{I}\Big)\mathbf{z}_t
\right].
\label{eq:gap_raw_app_clean}
\end{equation}
Noting that $\mathbf{x}_1=\boldsymbol{\mu}+\mathbf{z}_t$, the tilted law $q_{1|t}^{(J)}$ induces a tilted law on $\mathbf{z}_t$:
$q_t^{(J_t)}(\mathbf{z}_t)\propto q_t(\mathbf{z}_t)\exp(-J_t(\mathbf{z}_t))$ with $J_t(\mathbf{z}_t)=J(\boldsymbol{\mu}+\mathbf{z}_t)$.
Hence the expectation in \eqref{eq:gap_raw_app_clean} can be written under $q_t^{(J_t)}$.

Define the covariance mismatch
\[
\boldsymbol{\Delta}_t(\mathbf{x}_t)\triangleq \boldsymbol{\Sigma}_t-\frac{\sigma_t^2}{\alpha_t}\mathbf{B}_t,
\qquad
e_t(\mathbf{x}_t)\triangleq \|\boldsymbol{\Delta}_t(\mathbf{x}_t)\|_2^2.
\]
Then $\mathbf{B}_t-\tfrac{\alpha_t}{\sigma_t^2}\boldsymbol{\Sigma}_t= -\tfrac{\alpha_t}{\sigma_t^2}\boldsymbol{\Delta}_t(\mathbf{x}_t)$, and since $\boldsymbol{\Sigma}_t$ is symmetric,
\[
\mathbf{B}_t^{\top}\boldsymbol{\Sigma}_t^{-1}-\frac{\alpha_t}{\sigma_t^2}\mathbf{I}
=
\big(\boldsymbol{\Sigma}_t^{-1}(\mathbf{B}_t-\tfrac{\alpha_t}{\sigma_t^2}\boldsymbol{\Sigma}_t)\big)^{\top}
=
-\frac{\alpha_t}{\sigma_t^2}\big(\boldsymbol{\Sigma}_t^{-1}\boldsymbol{\Delta}_t(\mathbf{x}_t)\big)^{\top}.
\]
Substituting into \eqref{eq:gap_raw_app_clean} yields
\begin{equation}
\mathbf{g}_t^{\mathrm{LGD}}-\mathbf{g}_t^{\mathrm{SIM}}
=
-b_t\,
\mathbb{E}_{q_t^{(J_t)}}
\!\left[
\big(\boldsymbol{\Sigma}_t^{-1}\boldsymbol{\Delta}_t(\mathbf{x}_t)\big)^{\top}\mathbf{z}_t
\right].
\label{eq:gap_final_app_clean}
\end{equation}

Finally, by Jensen's inequality,
\[
\|\mathbf{g}_t^{\mathrm{LGD}}-\mathbf{g}_t^{\mathrm{SIM}}\|_2^2
\le
b_t^2\,
\mathbb{E}_{q_t^{(J_t)}}\!\left[
\left\|
\big(\boldsymbol{\Sigma}_t^{-1}\boldsymbol{\Delta}_t(\mathbf{x}_t)\big)^{\top}\mathbf{z}_t
\right\|_2^2
\right].
\]
Using $\|A\mathbf{z}\|_2\le \|A\|_2\|\mathbf{z}\|_2$ and submultiplicativity,
\[
\left\|
\big(\boldsymbol{\Sigma}_t^{-1}\boldsymbol{\Delta}_t(\mathbf{x}_t)\big)^{\top}\mathbf{z}_t
\right\|_2^2
\le
\|\boldsymbol{\Sigma}_t^{-1}\|_2^2\,\|\boldsymbol{\Delta}_t(\mathbf{x}_t)\|_2^2\,\|\mathbf{z}_t\|_2^2.
\]
Taking expectation under $q_t^{(J_t)}$ and substituting $e_t(\mathbf{x}_t)=\|\boldsymbol{\Delta}_t(\mathbf{x}_t)\|_2^2$ gives
\[
\|\mathbf{g}_t^{\mathrm{LGD}}-\mathbf{g}_t^{\mathrm{SIM}}\|_2^2
\le
b_t^2\,\|\boldsymbol{\Sigma}_t^{-1}\|_2^2\,
\mathbb{E}_{q_t^{(J_t)}}\!\left[\|\mathbf{z}_t\|_2^2\right]\,
e_t(\mathbf{x}_t),
\]
which concludes the proof.
\end{proof}

\subsubsection{Proof of Theorem~\ref{thm:updataB}}
\begin{proof}
Recall that
\begin{equation}
\nabla_{\mathbf{x}_t}\,\mathbb{E}_{p_{1|t}(\mathbf{x}_1\mid \mathbf{x}_t)}[\mathbf{x}_1]
=
-\frac{a_t}{b_t}\mathbf{I}
+\frac{1}{b_t}\nabla_{\mathbf{x}_t}\mathbf{v}_t(\mathbf{x}_t).
\label{eq:mu_jac_app}
\end{equation}
At iteration $k$, let $\mathbf{J}_k\approx \nabla_{\mathbf{x}_k}\mathbf{v}_{t_k}(\mathbf{x}_k)$ and define
\begin{equation}
\mathbf{B}_k \triangleq -\frac{a_{t_k}}{b_{t_k}}\mathbf{I}+\frac{1}{b_{t_k}}\mathbf{J}_k,
\qquad\Longleftrightarrow\qquad
\mathbf{J}_k=b_{t_k}\mathbf{B}_k+a_{t_k}\mathbf{I}.
\label{eq:BJ_relation_app}
\end{equation}

We approximate the Jacobian $\mathbf{J}_{k+1}$ using the secant condition induced by the observed pair
$(\mathbf{x}_k,\mathbf{v}_k)$, $(\mathbf{x}_{k+1},\mathbf{v}_{k+1})$, namely
\begin{equation}
\mathbf{J}_{k+1}\mathbf{s}_k=\mathbf{r}_k,
\qquad
\mathbf{s}_k=\mathbf{x}_{k+1}-\mathbf{x}_k,\quad
\mathbf{r}_k=\mathbf{v}_{k+1}-\mathbf{v}_k.
\label{eq:secant_J_app}
\end{equation}
Using \eqref{eq:BJ_relation_app} at time $t_k$, introduce the scaled matrix
\begin{equation}
\widetilde{\mathbf{B}}_{k+1}
\triangleq
-\frac{a_{t_k}}{b_{t_k}}\mathbf{I}+\frac{1}{b_{t_k}}\mathbf{J}_{k+1},
\label{eq:Btilde_def_app}
\end{equation}
so that $\mathbf{J}_{k+1}=b_{t_k}\widetilde{\mathbf{B}}_{k+1}+a_{t_k}\mathbf{I}$.
Substituting this into \eqref{eq:secant_J_app} yields the equivalent secant equation
\begin{equation}
\widetilde{\mathbf{B}}_{k+1}\mathbf{s}_k
=
-\frac{a_{t_k}}{b_{t_k}}\mathbf{s}_k+\frac{1}{b_{t_k}}\mathbf{r}_k
\triangleq \mathbf{y}_k,
\label{eq:secant_Btilde_app}
\end{equation}
which matches the definition of $\mathbf{y}_k$ in the theorem.

The secant constraint \eqref{eq:secant_Btilde_app} does not identify a unique matrix. To obtain a stable and covariance-compatible estimate, we impose (i) symmetry, and (ii) a least-change principle:
\begin{equation}
\widetilde{\mathbf{B}}_{k+1}
=
\arg\min_{\mathbf{B}=\mathbf{B}^{\top}}
\ \|\mathbf{B}-\mathbf{B}_k\|
\quad\text{s.t.}\quad
\mathbf{B}\mathbf{s}_k=\mathbf{y}_k.
\label{eq:least_change_app}
\end{equation}
With the standard quasi-Newton choice of norm that yields a symmetric positive-definite update (the same setting used to derive the DFP/BFGS family; see~\cite{nocedal2006numerical}), the unique minimizer of \eqref{eq:least_change_app} has the closed form
\begin{equation}
\widetilde{\mathbf{B}}_{k+1}
=
\mathbf{V}_k^{\top}\mathbf{B}_k\mathbf{V}_k
+\rho_k\,\mathbf{y}_k\mathbf{y}_k^{\top},
\qquad
\rho_k=\frac{1}{\mathbf{y}_k^{\top}\mathbf{s}_k},\quad
\mathbf{V}_k=\mathbf{I}-\rho_k\mathbf{s}_k\mathbf{y}_k^{\top}.
\label{eq:Btilde_update_app}
\end{equation}
We briefly verify feasibility: symmetry is immediate from $\mathbf{B}_k=\mathbf{B}_k^{\top}$ and the rank-one term.
Moreover, since $\mathbf{V}_k\mathbf{s}_k=\mathbf{s}_k-\rho_k\mathbf{s}_k(\mathbf{y}_k^{\top}\mathbf{s}_k)=\mathbf{0}$,
\begin{equation}
\widetilde{\mathbf{B}}_{k+1}\mathbf{s}_k
=
\mathbf{V}_k^{\top}\mathbf{B}_k(\mathbf{V}_k\mathbf{s}_k)
+\rho_k\mathbf{y}_k(\mathbf{y}_k^{\top}\mathbf{s}_k)
=
\mathbf{y}_k,
\end{equation}
so \eqref{eq:secant_Btilde_app} holds.

Finally, the matrix $\mathbf{B}_{k+1}$ in the theorem is defined at time $t_{k+1}$:
\begin{equation}
\mathbf{B}_{k+1}
=
-\frac{a_{t_{k+1}}}{b_{t_{k+1}}}\mathbf{I}
+\frac{1}{b_{t_{k+1}}}\mathbf{J}_{k+1}.
\label{eq:Bkp1_def_app}
\end{equation}
Combining \eqref{eq:Bkp1_def_app} with \eqref{eq:Btilde_def_app} gives the affine relation
\begin{equation}
\mathbf{B}_{k+1}
=
\frac{b_{t_k}}{b_{t_{k+1}}}\,\widetilde{\mathbf{B}}_{k+1}
+\frac{a_{t_k}-a_{t_{k+1}}}{b_{t_{k+1}}}\mathbf{I}
\triangleq
u_k\,\widetilde{\mathbf{B}}_{k+1}+w_k\mathbf{I},
\end{equation}
where $u_k=b_{t_k}/b_{t_{k+1}}$ and $w_k=(a_{t_k}-a_{t_{k+1}})/b_{t_{k+1}}$.
Substituting \eqref{eq:Btilde_update_app} completes the claimed recursion:
\begin{equation}
\mathbf{B}_{k+1}
=
u_k\left(\mathbf{V}_k^{\top}\mathbf{B}_k\mathbf{V}_k+\rho_k\,\mathbf{y}_k\mathbf{y}_k^{\top}\right)
+w_k\mathbf{I}.
\end{equation}
\end{proof}

\subsubsection{Proof of Theorem~\ref{thm:Bk_compact_form}}
\begin{proof}
We prove by induction that the iterates produced by
\begin{equation}
\widetilde{\mathbf{B}}_{k+1}
=\mathbf{V}_k^{\top}\mathbf{B}_k\mathbf{V}_k+\rho_k\,\mathbf{y}_k\mathbf{y}_k^{\top},
\qquad
\mathbf{B}_{k+1}=u_k\widetilde{\mathbf{B}}_{k+1}+w_k\mathbf{I},
\label{eq:B_update_app}
\end{equation}
admit the claimed compact representation. Throughout, $\rho_k=(\mathbf{y}_k^{\top}\mathbf{s}_k)^{-1}$ and
$\mathbf{V}_k=\mathbf{I}-\rho_k\mathbf{s}_k\mathbf{y}_k^{\top}$.

\paragraph{Base case.}
Initialize $\mathbf{B}_0=\gamma_0\mathbf{I}$, i.e., $\mathbf{U}_0=[\,]$ and $\boldsymbol{\Gamma}_0=[\,]$.
For $k=0$, using $\mathbf{V}_0=\mathbf{I}-\rho_0\mathbf{s}_0\mathbf{y}_0^{\top}$ and expanding
\(
\mathbf{V}_0^{\top}\mathbf{B}_0\mathbf{V}_0
=\gamma_0\mathbf{V}_0^{\top}\mathbf{V}_0
\),
we obtain
\begin{align}
\widetilde{\mathbf{B}}_1
&=\gamma_0\mathbf{I}
-\gamma_0\rho_0(\mathbf{y}_0\mathbf{s}_0^{\top}+\mathbf{s}_0\mathbf{y}_0^{\top})
+\gamma_0\rho_0^2(\mathbf{s}_0^{\top}\mathbf{s}_0)\mathbf{y}_0\mathbf{y}_0^{\top}
+\rho_0\mathbf{y}_0\mathbf{y}_0^{\top} \nonumber\\
&=\gamma_0\mathbf{I}
+\begin{bmatrix}\mathbf{s}_0 & \mathbf{y}_0\end{bmatrix}
\begin{bmatrix}
0 & -\gamma_0\rho_0\\
-\gamma_0\rho_0 & \rho_0+\rho_0^2\gamma_0(\mathbf{s}_0^{\top}\mathbf{s}_0)
\end{bmatrix}
\begin{bmatrix}\mathbf{s}_0^{\top}\\ \mathbf{y}_0^{\top}\end{bmatrix}.
\label{eq:base_case_Btilde}
\end{align}
Thus $\widetilde{\mathbf{B}}_1=\gamma_0\mathbf{I}+\mathbf{U}_1\widetilde{\boldsymbol{\Gamma}}_1\mathbf{U}_1^{\top}$ holds with
$\mathbf{U}_1=[\mathbf{s}_0,\mathbf{y}_0]$ and $\widetilde{\boldsymbol{\Gamma}}_1$ given by the $2\times 2$ matrix in
\eqref{eq:base_case_Btilde}. Applying \eqref{eq:B_update_app} yields
\(
\mathbf{B}_1=(u_0\gamma_0+w_0)\mathbf{I}+\mathbf{U}_1(u_0\widetilde{\boldsymbol{\Gamma}}_1)\mathbf{U}_1^{\top}
\),
so the claim holds for $k=1$ with $\gamma_1=u_0\gamma_0+w_0$ and $\boldsymbol{\Gamma}_1=u_0\widetilde{\boldsymbol{\Gamma}}_1$.
(If one uses a different column ordering for $\mathbf{U}_1$, the corresponding $\boldsymbol{\Gamma}_1$ is obtained by the same permutation.)

\paragraph{Inductive step.}
Assume that for some $k\ge 1$,
\begin{equation}
\mathbf{B}_k=\gamma_k\mathbf{I}+\mathbf{U}_k\boldsymbol{\Gamma}_k\mathbf{U}_k^{\top},
\label{eq:ind_hyp}
\end{equation}
where $\mathbf{U}_k\in\mathbb{R}^{d\times m}$ collects past secant vectors and $\boldsymbol{\Gamma}_k\in\mathbb{R}^{m\times m}$.
Define
\begin{equation}
\mathbf{p}_k \triangleq \boldsymbol{\Gamma}_k\mathbf{U}_k^{\top}\mathbf{s}_k,
\qquad
\tau_k \triangleq \mathbf{s}_k^{\top}\mathbf{U}_k\boldsymbol{\Gamma}_k\mathbf{U}_k^{\top}\mathbf{s}_k,
\qquad
\delta_k \triangleq \tau_k+\gamma_k\,\mathbf{s}_k^{\top}\mathbf{s}_k.
\label{eq:pk_tauk_deltak}
\end{equation}
Then
\begin{equation}
\mathbf{B}_k\mathbf{s}_k=\gamma_k\mathbf{s}_k+\mathbf{U}_k\mathbf{p}_k,
\qquad
\mathbf{s}_k^{\top}\mathbf{B}_k\mathbf{s}_k=\delta_k.
\label{eq:Bs_and_quad}
\end{equation}

Using $\mathbf{V}_k=\mathbf{I}-\rho_k\mathbf{s}_k\mathbf{y}_k^{\top}$, we expand
\begin{align}
\mathbf{V}_k^{\top}\mathbf{B}_k\mathbf{V}_k
&=(\mathbf{I}-\rho_k\mathbf{y}_k\mathbf{s}_k^{\top})\mathbf{B}_k(\mathbf{I}-\rho_k\mathbf{s}_k\mathbf{y}_k^{\top}) \nonumber\\
&=\mathbf{B}_k
-\rho_k\,\mathbf{B}_k\mathbf{s}_k\mathbf{y}_k^{\top}
-\rho_k\,\mathbf{y}_k\mathbf{s}_k^{\top}\mathbf{B}_k
+\rho_k^2\,\mathbf{y}_k(\mathbf{s}_k^{\top}\mathbf{B}_k\mathbf{s}_k)\mathbf{y}_k^{\top}.
\label{eq:expand_VBV}
\end{align}
Substituting \eqref{eq:Bs_and_quad} into \eqref{eq:expand_VBV} gives
\begin{align}
\mathbf{V}_k^{\top}\mathbf{B}_k\mathbf{V}_k
&=\mathbf{B}_k
-\rho_k(\gamma_k\mathbf{s}_k+\mathbf{U}_k\mathbf{p}_k)\mathbf{y}_k^{\top}
-\rho_k\mathbf{y}_k(\gamma_k\mathbf{s}_k^{\top}+\mathbf{p}_k^{\top}\mathbf{U}_k^{\top})
+\rho_k^2\delta_k\,\mathbf{y}_k\mathbf{y}_k^{\top}.
\label{eq:VBV_sub}
\end{align}
Adding the rank-one term $\rho_k\mathbf{y}_k\mathbf{y}_k^{\top}$ yields
\begin{align}
\widetilde{\mathbf{B}}_{k+1}
&=\mathbf{V}_k^{\top}\mathbf{B}_k\mathbf{V}_k+\rho_k\mathbf{y}_k\mathbf{y}_k^{\top} \nonumber\\
&=\mathbf{B}_k
-\rho_k\mathbf{U}_k\mathbf{p}_k\mathbf{y}_k^{\top}
-\rho_k\mathbf{y}_k\mathbf{p}_k^{\top}\mathbf{U}_k^{\top}
-\gamma_k\rho_k(\mathbf{s}_k\mathbf{y}_k^{\top}+\mathbf{y}_k\mathbf{s}_k^{\top})
+\rho_k\big(1+\rho_k\delta_k\big)\mathbf{y}_k\mathbf{y}_k^{\top}.
\label{eq:Btilde_collect}
\end{align}
Now substitute the induction hypothesis \eqref{eq:ind_hyp} into \eqref{eq:Btilde_collect} and group terms.
Let
\begin{equation}
\mathbf{U}_{k+1}\triangleq
\begin{bmatrix}\mathbf{U}_k & \mathbf{s}_k & \mathbf{y}_k\end{bmatrix}.
\end{equation}
A direct multiplication shows that
\begin{equation}
\widetilde{\mathbf{B}}_{k+1}
=
\gamma_k\mathbf{I}
+\mathbf{U}_{k+1}\,\widetilde{\boldsymbol{\Gamma}}_{k+1}\,\mathbf{U}_{k+1}^{\top},
\label{eq:Btilde_compact}
\end{equation}
where
\begin{equation}
\widetilde{\boldsymbol{\Gamma}}_{k+1}
=
\begin{bmatrix}
\boldsymbol{\Gamma}_{k} & \mathbf{0} & -\rho_{k}\mathbf{p}_{k}\\
\mathbf{0}^{\top} & 0 & -\gamma_{k}\rho_{k} \\
-\rho_{k}\mathbf{p}_{k}^{\top} & -\gamma_{k}\rho_{k}
& \rho_k\big(1+\rho_k\delta_k\big)
\end{bmatrix},
\qquad
\delta_k=\tau_k+\gamma_k\mathbf{s}_k^{\top}\mathbf{s}_k.
\label{eq:Gammatilde_formula}
\end{equation}
Indeed, the top-left block reproduces $\mathbf{U}_k\boldsymbol{\Gamma}_k\mathbf{U}_k^{\top}$, the off-diagonal blocks
$-\rho_k\mathbf{p}_k$ reproduce the cross terms
$-\rho_k\mathbf{U}_k\mathbf{p}_k\mathbf{y}_k^{\top}-\rho_k\mathbf{y}_k\mathbf{p}_k^{\top}\mathbf{U}_k^{\top}$,
the entries $-\gamma_k\rho_k$ reproduce $-\gamma_k\rho_k(\mathbf{s}_k\mathbf{y}_k^{\top}+\mathbf{y}_k\mathbf{s}_k^{\top})$,
and the bottom-right entry gives the coefficient of $\mathbf{y}_k\mathbf{y}_k^{\top}$ in \eqref{eq:Btilde_collect}.

Finally, applying the affine update in \eqref{eq:B_update_app} to \eqref{eq:Btilde_compact} gives
\begin{align}
\mathbf{B}_{k+1}
&=u_k\widetilde{\mathbf{B}}_{k+1}+w_k\mathbf{I}
=(u_k\gamma_k+w_k)\mathbf{I}+\mathbf{U}_{k+1}\,(u_k\widetilde{\boldsymbol{\Gamma}}_{k+1})\,\mathbf{U}_{k+1}^{\top}.
\end{align}
Therefore the representation holds at $k+1$ with
\begin{equation}
\gamma_{k+1}=u_k\gamma_k+w_k,
\qquad
\boldsymbol{\Gamma}_{k+1}=u_k\,\widetilde{\boldsymbol{\Gamma}}_{k+1},
\end{equation}
and substituting \eqref{eq:Gammatilde_formula} yields exactly the stated recursion.
\end{proof}

\subsubsection{Proof of Theorem~\ref{thm:semi_numerical_sqrt}}
\begin{proof}
Let $\mathbf{B}=\gamma\mathbf{I}+\mathbf{U}\boldsymbol{\Gamma}\mathbf{U}^{\top}$ with $\gamma>0$.
Take the reduced QR factorization of $\mathbf{U}\in\mathbb{R}^{d\times m}$:
\begin{equation}
\mathbf{U}=\mathbf{Q}\mathbf{R},\qquad 
\mathbf{Q}\in\mathbb{R}^{d\times m},\ \mathbf{Q}^{\top}\mathbf{Q}=\mathbf{I}_m,\qquad 
\mathbf{R}\in\mathbb{R}^{m\times m}\ \text{upper triangular}.
\end{equation}
Extend $\mathbf{Q}$ to an orthonormal basis of $\mathbb{R}^d$ by choosing
$\mathbf{Q}_{\perp}\in\mathbb{R}^{d\times(d-m)}$ such that
\begin{equation}
\mathbf{P}\triangleq[\mathbf{Q},\mathbf{Q}_{\perp}]\ \text{is orthogonal},\qquad
\mathbf{P}^{\top}\mathbf{P}=\mathbf{I}_d,
\qquad
\mathbf{Q}\mathbf{Q}^{\top}+\mathbf{Q}_{\perp}\mathbf{Q}_{\perp}^{\top}=\mathbf{I}_d.
\label{eq:P_orth_app}
\end{equation}
Using $\mathbf{U}=\mathbf{Q}\mathbf{R}$, we rewrite
\begin{equation}
\mathbf{B}
=\gamma\mathbf{I}+\mathbf{Q}\mathbf{R}\boldsymbol{\Gamma}\mathbf{R}^{\top}\mathbf{Q}^{\top}.
\end{equation}
Conjugating by the orthogonal matrix $\mathbf{P}$ yields a block diagonal form:
\begin{align}
\mathbf{P}^{\top}\mathbf{B}\mathbf{P}
&=
\begin{bmatrix}
\mathbf{Q}^{\top}\\ \mathbf{Q}_{\perp}^{\top}
\end{bmatrix}
\left(\gamma\mathbf{I}+\mathbf{Q}\mathbf{R}\boldsymbol{\Gamma}\mathbf{R}^{\top}\mathbf{Q}^{\top}\right)
\begin{bmatrix}
\mathbf{Q} & \mathbf{Q}_{\perp}
\end{bmatrix} \nonumber\\
&=
\begin{bmatrix}
\gamma\mathbf{I}_m + \mathbf{R}\boldsymbol{\Gamma}\mathbf{R}^{\top} & \mathbf{0}\\
\mathbf{0} & \gamma\mathbf{I}_{d-m}
\end{bmatrix}.
\label{eq:block_diag_app}
\end{align}
Define the $m\times m$ matrix (consistent with Theorem~\ref{thm:semi_numerical_sqrt})
\begin{equation}
\mathbf{C}\triangleq \gamma\mathbf{I}_m+\mathbf{R}\boldsymbol{\Gamma}\mathbf{R}^{\top}.
\label{eq:C_def_app}
\end{equation}
Assuming $\mathbf{B}\succ \mathbf{0}$, we have $\mathbf{P}^{\top}\mathbf{B}\mathbf{P}\succ \mathbf{0}$ since $\mathbf{P}$ is orthogonal.
By \eqref{eq:block_diag_app}, this implies $\mathbf{C}\succ \mathbf{0}$, so $\mathbf{C}$ admits a (lower) Cholesky factorization
\begin{equation}
\mathbf{C}=\mathbf{L}_{\mathbf{C}}\mathbf{L}_{\mathbf{C}}^{\top}.
\end{equation}

We now construct $\mathbf{L}$ such that $\mathbf{B}=\mathbf{L}\mathbf{L}^{\top}$.
Consider the matrix
\begin{equation}
\widetilde{\mathbf{L}}
\triangleq
\begin{bmatrix}
\mathbf{L}_{\mathbf{C}} & \mathbf{0}\\
\mathbf{0} & \sqrt{\gamma}\mathbf{I}_{d-m}
\end{bmatrix}
\in\mathbb{R}^{d\times d}.
\label{eq:Ltilde_app}
\end{equation}
Then by direct multiplication,
\begin{equation}
\widetilde{\mathbf{L}}\widetilde{\mathbf{L}}^{\top}
=
\begin{bmatrix}
\mathbf{L}_{\mathbf{C}}\mathbf{L}_{\mathbf{C}}^{\top} & \mathbf{0}\\
\mathbf{0} & \gamma\mathbf{I}_{d-m}
\end{bmatrix}
=
\begin{bmatrix}
\mathbf{C} & \mathbf{0}\\
\mathbf{0} & \gamma\mathbf{I}_{d-m}
\end{bmatrix}.
\end{equation}
Combining with \eqref{eq:block_diag_app} and the fact that $\mathbf{P}$ is orthogonal, we obtain
\begin{equation}
\mathbf{B}
=\mathbf{P}
\begin{bmatrix}
\mathbf{C} & \mathbf{0}\\
\mathbf{0} & \gamma\mathbf{I}_{d-m}
\end{bmatrix}
\mathbf{P}^{\top}
=\big(\mathbf{P}\widetilde{\mathbf{L}}\mathbf{P}^{\top}\big)\big(\mathbf{P}\widetilde{\mathbf{L}}\mathbf{P}^{\top}\big)^{\top}.
\end{equation}
Therefore, one valid factor is
\begin{equation}
\mathbf{L}_\star=\mathbf{P}\widetilde{\mathbf{L}}\mathbf{P}^{\top}
=\mathbf{Q}\mathbf{L}_{\mathbf{C}}\mathbf{Q}^{\top}+\sqrt{\gamma}\,\mathbf{Q}_{\perp}\mathbf{Q}_{\perp}^{\top}.
\label{eq:Lstar_app}
\end{equation}

To eliminate the explicit dependence on $\mathbf{Q}_{\perp}$, we use the projector identity from \eqref{eq:P_orth_app}:
\begin{equation}
\mathbf{Q}_{\perp}\mathbf{Q}_{\perp}^{\top}=\mathbf{I}-\mathbf{Q}\mathbf{Q}^{\top}.
\end{equation}
Substituting into \eqref{eq:Lstar_app} gives
\begin{equation}
\mathbf{L}_\star
=
\mathbf{Q}\mathbf{L}_{\mathbf{C}}\mathbf{Q}^{\top}
+\sqrt{\gamma}\,(\mathbf{I}-\mathbf{Q}\mathbf{Q}^{\top})
=
\mathbf{Q}\big(\mathbf{L}_{\mathbf{C}}-\sqrt{\gamma}\mathbf{I}_m\big)\mathbf{Q}^{\top}
+\sqrt{\gamma}\mathbf{I}.
\end{equation}
Finally, since $\mathbf{B}=\mathbf{L}_\star\mathbf{L}_\star^{\top}$ by construction, taking $\mathbf{L}\equiv \mathbf{L}_\star$ proves the claim.
\end{proof}

\clearpage
\section{Supplementary analysis}
\label{sec:si_supp_analysis}

\subsection{Relationship between $\mathcal{J}_{\mathrm{SKL}}^{(\lambda)}$ and the MSE Loss}
\label{subsec:sKL-mse}

This section clarifies why the symmetric-KL objective used in Eq.~\eqref{eq:si_minibatch_statement_skl} is related to, but not equivalent to, ordinary mean-squared regression. The key distinction is what the trained predictor is asked to control. MSE controls pointwise cost values. In this work, the cost predictor is used downstream after exponentiation, where it reshapes a reference design distribution into a cost-biased distribution. The relevant question is therefore not only whether $J_\theta$ is numerically close to the simulated cost $J$, but whether it induces the same probability mass over low-cost designs.

For a reference distribution $p(\mathbf{x})$ and a fixed guidance scale $\lambda>0$, let
\begin{equation}
    p^{(J,\lambda)}(\mathbf{x})
    =
    p(\mathbf{x})w_J^\lambda(\mathbf{x}),
    \qquad
    w_J^\lambda(\mathbf{x})
    =
    \frac{\exp[-\lambda J(\mathbf{x})]}
    {\mathbb{E}_{p}[\exp[-\lambda J(\mathbf{x})]]},
    \label{eq:skl_true_weight_appendix}
\end{equation}
and define $p^{(J_\theta,\lambda)}$ and $w_\theta^\lambda$ analogously using the predicted cost $J_\theta$. The MSE objective is
\begin{equation}
    \mathcal{J}_{\mathrm{MSE}}^{(\lambda)}(\theta)
    =
    \mathbb{E}_{p}
    \left[
    \left(J_\theta(\mathbf{x})-J(\mathbf{x})\right)^2
    \right].
    \label{eq:mse_pop_appendix}
\end{equation}
This objective is appropriate when the downstream task is cost-value regression. It treats all samples according to their probability under $p$ and penalizes absolute offsets in the cost scale.

The SKL objective instead compares the two induced distributions:
\begin{equation}
    \mathcal{J}_{\mathrm{SKL}}^{(\lambda)}(\theta)
    =
    D_{\mathrm{KL}}\left(p^{(J,\lambda)}\,\|\,p^{(J_\theta,\lambda)}\right)
    +
    D_{\mathrm{KL}}\left(p^{(J_\theta,\lambda)}\,\|\,p^{(J,\lambda)}\right).
    \label{eq:skl_distribution_appendix}
\end{equation}
By Theorem~\ref{thm:si_skl_identity}, this can be written as
\begin{equation}
    \mathcal{J}_{\mathrm{SKL}}^{(\lambda)}(\theta)
    =
    \lambda\,\mathbb{E}_{p}
    \left[
    \left(w_J^\lambda(\mathbf{x})-w_\theta^\lambda(\mathbf{x})\right)
    \left(J_\theta(\mathbf{x})-J(\mathbf{x})\right)
    \right].
    \label{eq:skl_pop_appendix}
\end{equation}
Equation~\eqref{eq:skl_pop_appendix} gives the main intuition. The cost error matters most where it changes the normalized exponential weights that define the cost-biased distribution. Consequently, SKL is invariant to an additive cost offset: if $J_\theta(\mathbf{x})=J(\mathbf{x})+c$, then $p^{(J_\theta,\lambda)}=p^{(J,\lambda)}$ and $\mathcal{J}_{\mathrm{SKL}}^{(\lambda)}=0$, whereas MSE still penalizes the constant $c$. This shift invariance is natural for inverse design because optimization and sampling depend on relative cost-induced probabilities, not on the arbitrary zero of the cost scale.

\paragraph{Two useful theoretical relations.}
The following theorem formalizes the two regimes in which the relationship between SKL and MSE is most transparent. Locally, SKL behaves like a weighted and centred squared error. At large guidance scale, it is governed by whether the predicted and true costs identify the same low-cost region.

\begin{theorem}[Local and large-guidance behaviour of SKL]
\label{thm:skl_local_large_behavior}
Let $\delta_\theta(\mathbf{x})=J_\theta(\mathbf{x})-J(\mathbf{x})$ and
$\bar{\delta}_\theta=\mathbb{E}_{p^{(J,\lambda)}}[\delta_\theta(\mathbf{x})]$.
Write $\eta_\theta(\mathbf{x})=\delta_\theta(\mathbf{x})-\bar{\delta}_\theta$.

\textbf{(i) Local prediction-error regime.}
If $\lambda\eta_\theta$ is uniformly small and the third and fourth central moments of $\delta_\theta$ under $p^{(J,\lambda)}$ are finite, then
\begin{equation}
    \mathcal{J}_{\mathrm{SKL}}^{(\lambda)}(\theta)
    =
    \lambda^2\,
    \mathbb{V}_{p^{(J,\lambda)}}
    \left[
    \delta_\theta(\mathbf{x})
    \right]
    +
    O\!\left(
    \lambda^3\,
    \mathbb{E}_{p^{(J,\lambda)}}
    \left[
    \left|\delta_\theta(\mathbf{x})-\bar{\delta}_\theta\right|^3
    \right]
    \right).
    \label{eq:skl_local_mse_relation}
\end{equation}

\textbf{(ii) Large-guidance regime.}
Suppose $J$ has a unique global minimizer $\mathbf{x}^{*}=\arg\min_{\mathbf{x}}J(\mathbf{x})$ and $J_\theta$ has a unique global minimizer
$\mathbf{x}^{*}_{\theta}=\arg\min_{\mathbf{x}}J_\theta(\mathbf{x})$. Under standard regularity conditions for Laplace asymptotics,
\begin{equation}
    \mathcal{J}_{\mathrm{SKL}}^{(\lambda)}(\theta)
    =
    \lambda\Big(
    J_\theta(\mathbf{x}^{*})-J_\theta(\mathbf{x}^{*}_{\theta})
    +
    J(\mathbf{x}^{*}_{\theta})-J(\mathbf{x}^{*})
    \Big)
    +
    o(\lambda).
    \label{eq:skl_large_lambda}
\end{equation}
\end{theorem}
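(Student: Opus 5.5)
Both parts start from the representation in Theorem~\ref{thm:si_skl_identity}, rewritten as a difference of two expectations of the same function,
\[
\mathcal{J}_{\mathrm{SKL}}^{(\lambda)}(\theta)=\lambda\big(\mathbb{E}_{p^{(J,\lambda)}}[\delta_\theta]-\mathbb{E}_{p^{(J_\theta,\lambda)}}[\delta_\theta]\big).
\]
The key observation is that $p^{(J_\theta,\lambda)}$ is itself an exponential tilt of $p^{(J,\lambda)}$. Indeed, $p^{(J_\theta,\lambda)}(\mathbf{x})\propto p^{(J,\lambda)}(\mathbf{x})\exp[-\lambda\delta_\theta(\mathbf{x})]$. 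Because of shift invariance, we may replace $\delta_\theta$ by the centred error $\eta_\theta$ throughout.

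For part (i), set $K(s)=\log\mathbb{E}_{p^{(J,\lambda)}}[e^{s\eta_\theta}]$, the cumulant generating function of $\eta_\theta$ under $p^{(J,\lambda)}$. The tilted mean is $\mathbb{E}_{p^{(J_\theta,\lambda)}}[\eta_\theta]=K'(-\lambda)$, while $\mathbb{E}_{p^{(J,\lambda)}}[\eta_\theta]=K'(0)=0$. This gives
\[
\mathcal{J}_{\mathrm{SKL}}^{(\lambda)}=-\lambda K'(-\lambda).
\]
Taylor expanding $K'$ around $0$ gives $K'(-\lambda)=-\lambda K''(0)+\tfrac12\lambda^2K'''(\xi)$, where $K''(0)=\mathbb{V}_{p^{(J,\lambda)}}[\delta_\theta]$. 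Multiplying by $-\lambda$ produces the leading term $\lambda^2\mathbb{V}$ and a remainder of order $\lambda^3 K'''(\xi)$.

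The delicate step is bounding $K'''(\xi)$, the third central moment under an intermediate tilt, by the untilted third absolute central moment. Uniform smallness of $\lambda\eta_\theta$ means the tilt density ratio $e^{-\xi\eta_\theta}/\mathbb{E}[e^{-\xi\eta_\theta}]$ is bounded above and below by constants near $1$. The tilted third central moment is therefore comparable to $\mathbb{E}|\eta_\theta|^3$, up to cross terms involving the tilted mean shift, which is itself $O(\lambda\mathbb{V})$. I plan to absorb those cross terms using the finite fourth-moment assumption together with Lyapunov's inequality. This remainder control is the main obstacle of part (i).

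For part (ii), Laplace's method applied to the unnormalised tilts shows that, as $\lambda\to\infty$, $p^{(J,\lambda)}$ concentrates at $\mathbf{x}^*$ and $p^{(J_\theta,\lambda)}$ concentrates at $\mathbf{x}^*_\theta$. Under the regularity assumptions (continuity of $\delta_\theta$, positivity of $p$ near the minimizers, nondegenerate Hessians), this yields
\[
\mathbb{E}_{p^{(J,\lambda)}}[\delta_\theta]\to\delta_\theta(\mathbf{x}^*),\qquad
\mathbb{E}_{p^{(J_\theta,\lambda)}}[\delta_\theta]\to\delta_\theta(\mathbf{x}^*_\theta).
\]
The $O(\lambda^{-1})$ Gaussian corrections and the prefactor ratios enter only at order $o(1)$ inside the parentheses, hence at order $o(\lambda)$ overall. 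Substituting the limits gives
\[
\mathcal{J}_{\mathrm{SKL}}^{(\lambda)}=\lambda\big(\delta_\theta(\mathbf{x}^*)-\delta_\theta(\mathbf{x}^*_\theta)\big)+o(\lambda),
\]
and expanding $\delta_\theta=J_\theta-J$ at the two points rearranges exactly into the stated combination. The one point to verify is the uniform integrability needed to pass from concentration of measure to convergence of expectations; for this I would assume $\delta_\theta$ grows subexponentially, so that the tails are negligible.
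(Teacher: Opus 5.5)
Your proposal is correct. Part (ii) and the setup for part (i) coincide with the paper's proof. Both start from $\mathcal{J}_{\mathrm{SKL}}^{(\lambda)}=\lambda\bigl(\mathbb{E}_{P}[\delta_\theta]-\mathbb{E}_{Q}[\delta_\theta]\bigr)$ with $P=p^{(J,\lambda)}$, $Q=p^{(J_\theta,\lambda)}$. Both note that $Q$ is the $e^{-\lambda\eta_\theta}$-tilt of $P$. In part (ii), both let each tilt concentrate at its own minimizer by Laplace's method.

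The genuine difference is how part (i) is expanded. The paper expands the normalizer $\mathbb{E}_P[e^{-\lambda\eta_\theta}]$ and then the likelihood ratio $dQ/dP$ as a power series to second order, with an informal pointwise $O(\lambda^3|\eta_\theta|^3)$ remainder. The normalizer carries an $O(\lambda^4\mathbb{E}_P[\eta_\theta^4])$ term, which is why the fourth moment appears in the hypotheses. Integrating against $\delta_\theta$ gives $\mathbb{E}_Q[\delta_\theta]=\bar\delta_\theta-\lambda\mathbb{V}_P[\delta_\theta]+O(\lambda^2\mathbb{E}_P|\eta_\theta|^3)$.

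You instead put everything into the cumulant generating function and use the exact Lagrange remainder $-\tfrac12\lambda^3K'''(\xi)$. All the analysis then goes into bounding one tilted third central moment. That bound is easier than you expect. Write $P_\xi$ for the $e^{\xi\eta_\theta}$-tilt of $P$, with mean $m_\xi$. If $|\lambda\eta_\theta|\le\epsilon$, then $dP_\xi/dP\in[e^{-2\epsilon},e^{2\epsilon}]$ for every $\xi\in[-\lambda,0]$. Minkowski's inequality, together with $|m_\xi|\le\|\eta_\theta\|_{L^3(P_\xi)}$, then gives $|K'''(\xi)|\le 8\,\mathbb{E}_{P_\xi}|\eta_\theta|^3\le 8e^{2\epsilon}\,\mathbb{E}_{P}|\eta_\theta|^3$. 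You do not need the cross-term bookkeeping, Lyapunov's inequality, or the fourth-moment hypothesis you planned to use. Boundedness of $\eta_\theta$ also makes $K$ smooth, so the Taylor step is legitimate.

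What each route buys: yours gives a fully rigorous remainder with an explicit constant. The paper's is more hands-on and displays the tilted-mean expansion explicitly. However, its $O(\cdot)$ bookkeeping is looser: it absorbs $\lambda^3\mathbb{E}_P[\eta_\theta^4]$ into $\lambda^2\mathbb{E}_P|\eta_\theta|^3$ only implicitly via uniform smallness, and it treats terms such as $\lambda^3|\eta_\theta|\,\mathbb{E}_P[\eta_\theta^2]$ as pointwise $O(\lambda^3|\eta_\theta|^3)$.
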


\begin{proof}
For part (i), write
\[
    P_\lambda=p^{(J,\lambda)},\qquad
    Q_{\theta,\lambda}=p^{(J_\theta,\lambda)},\qquad
    \eta_\theta(\mathbf{x})=\delta_\theta(\mathbf{x})-\bar{\delta}_\theta .
\]
Because $J_\theta=J+\delta_\theta$, the predicted cost-biased distribution can be written as a tilted version of the true one:
\begin{equation}
    \frac{dQ_{\theta,\lambda}}{dP_\lambda}(\mathbf{x})
    =
    \frac{\exp[-\lambda\delta_\theta(\mathbf{x})]}
    {\mathbb{E}_{p^{(J,\lambda)}}[\exp[-\lambda\delta_\theta(\mathbf{x})]]}.
    \label{eq:predicted_distribution_error_tilt}
\end{equation}
The additive constant $\bar{\delta}_\theta$ cancels between the numerator and denominator, so
\begin{equation}
    \frac{dQ_{\theta,\lambda}}{dP_\lambda}(\mathbf{x})
    =
    \frac{\exp[-\lambda\eta_\theta(\mathbf{x})]}
    {\mathbb{E}_{P_\lambda}[\exp[-\lambda\eta_\theta(\mathbf{x})]]},
    \qquad
    \mathbb{E}_{P_\lambda}[\eta_\theta(\mathbf{x})]=0 .
    \label{eq:tilt_centered_error}
\end{equation}
Expanding the denominator gives
\begin{equation}
    \mathbb{E}_{P_\lambda}[\exp[-\lambda\eta_\theta]]
    =
    1
    +
    \frac{\lambda^2}{2}\mathbb{E}_{P_\lambda}[\eta_\theta^2]
    -
    \frac{\lambda^3}{6}\mathbb{E}_{P_\lambda}[\eta_\theta^3]
    +
    O\!\left(\lambda^4\mathbb{E}_{P_\lambda}[\eta_\theta^4]\right),
    \label{eq:tilt_denominator_expansion}
\end{equation}
where the linear term vanishes because $\eta_\theta$ is centred under $P_\lambda$. Hence the likelihood ratio in Eq.~\eqref{eq:tilt_centered_error} satisfies
\begin{equation}
    \frac{dQ_{\theta,\lambda}}{dP_\lambda}(\mathbf{x})
    =
    1-\lambda\left(\delta_\theta(\mathbf{x})-\bar{\delta}_\theta\right)
    +
    \frac{\lambda^2}{2}
    \left[
    \eta_\theta^2(\mathbf{x})
    -
    \mathbb{E}_{P_\lambda}[\eta_\theta^2]
    \right]
    +
    O\!\left(\lambda^3|\eta_\theta(\mathbf{x})|^3\right).
    \label{eq:tilt_ratio_expansion}
\end{equation}
Using Eq.~\eqref{eq:skl_pop_appendix} and the identities
\[
    \mathbb{E}_{p}\!\left[w_J^\lambda \delta_\theta\right]
    =
    \mathbb{E}_{P_\lambda}[\delta_\theta],
    \qquad
    \mathbb{E}_{p}\!\left[w_\theta^\lambda \delta_\theta\right]
    =
    \mathbb{E}_{Q_{\theta,\lambda}}[\delta_\theta],
\]
we obtain
\begin{equation}
    \mathcal{J}_{\mathrm{SKL}}^{(\lambda)}(\theta)
    =
    \lambda
    \left(
    \mathbb{E}_{P_\lambda}[\delta_\theta]
    -
    \mathbb{E}_{Q_{\theta,\lambda}}[\delta_\theta]
    \right).
    \label{eq:skl_expectation_difference}
\end{equation}
Since $\delta_\theta=\bar{\delta}_\theta+\eta_\theta$ and
$\mathbb{E}_{P_\lambda}[\eta_\theta]=0$, Eq.~\eqref{eq:tilt_ratio_expansion} gives
\begin{equation}
    \mathbb{E}_{Q_{\theta,\lambda}}[\delta_\theta]
    =
    \bar{\delta}_\theta
    -
    \lambda\,\mathbb{E}_{P_\lambda}[\eta_\theta^2]
    +
    O\!\left(
    \lambda^2
    \mathbb{E}_{P_\lambda}[|\eta_\theta|^3]
    \right).
\end{equation}
Substituting this expression into Eq.~\eqref{eq:skl_expectation_difference} gives
\[
    \mathcal{J}_{\mathrm{SKL}}^{(\lambda)}(\theta)
    =
    \lambda^2\mathbb{E}_{P_\lambda}[\eta_\theta^2]
    +
    O\!\left(
    \lambda^3
    \mathbb{E}_{P_\lambda}[|\eta_\theta|^3]
    \right),
\]
which is Eq.~\eqref{eq:skl_local_mse_relation} because
$\mathbb{E}_{P_\lambda}[\eta_\theta^2]=
\mathbb{V}_{p^{(J,\lambda)}}[\delta_\theta(\mathbf{x})]$.

For part (ii), start again from Eq.~\eqref{eq:skl_expectation_difference}. Under the stated uniqueness and regularity assumptions, Laplace's method gives
\begin{equation}
    \mathbb{E}_{p^{(J,\lambda)}}[\delta_\theta(\mathbf{x})]
    =
    \delta_\theta(\mathbf{x}^{*})
    +
    o(1),
    \qquad
    \mathbb{E}_{p^{(J_\theta,\lambda)}}[\delta_\theta(\mathbf{x})]
    =
    \delta_\theta(\mathbf{x}^{*}_{\theta})
    +
    o(1),
    \label{eq:laplace_delta_limits}
\end{equation}
because $p^{(J,\lambda)}$ concentrates at the minimizer of $J$ and
$p^{(J_\theta,\lambda)}$ concentrates at the minimizer of $J_\theta$ as $\lambda\to\infty$. Therefore
\begin{equation}
    \mathcal{J}_{\mathrm{SKL}}^{(\lambda)}(\theta)
    =
    \lambda
    \left[
    \delta_\theta(\mathbf{x}^{*})
    -
    \delta_\theta(\mathbf{x}^{*}_{\theta})
    \right]
    +
    o(\lambda).
\end{equation}
Expanding $\delta_\theta=J_\theta-J$ yields
\[
    \delta_\theta(\mathbf{x}^{*})
    -
    \delta_\theta(\mathbf{x}^{*}_{\theta})
    =
    J_\theta(\mathbf{x}^{*})-J_\theta(\mathbf{x}^{*}_{\theta})
    +
    J(\mathbf{x}^{*}_{\theta})-J(\mathbf{x}^{*}),
\]
which proves Eq.~\eqref{eq:skl_large_lambda}.
\end{proof}

Theorem~\ref{thm:skl_local_large_behavior} shows why SKL can look like MSE near a well-calibrated predictor but behave differently in inverse-design use. In the local regime, SKL removes additive cost offsets and weights errors by the true cost-biased distribution. In the large-guidance regime, it mainly tests whether the predictor preserves the relative ordering of low-cost regions.

\paragraph{Why MSE does not globally control SKL.}
The local relationship above does not imply that MSE is sufficient for distributional training. Applying Cauchy--Schwarz to Eq.~\eqref{eq:skl_pop_appendix} gives
\begin{equation}
    \mathcal{J}_{\mathrm{SKL}}^{(\lambda)}(\theta)
    \le
    \lambda
    \sqrt{\mathcal{J}_{\mathrm{MSE}}^{(\lambda)}(\theta)}
    \sqrt{
    \mathbb{E}_{p}
    \left[
    \left(w_J^\lambda(\mathbf{x})-w_\theta^\lambda(\mathbf{x})\right)^2
    \right]
    }.
    \label{eq:skl_mse_exp_bound}
\end{equation}
The second factor is exactly the quantity that MSE does not control: the error in the normalized exponential weights. A predictor may have small average squared error while still creating a spurious low-cost basin that attracts most of the cost-biased probability mass.

\begin{theorem}[Small MSE can give large distributional error]
\label{thm:skl_mse_bound_counterexample}
For any fixed $\lambda>0$, there exists a sequence of predictors with $\mathcal{J}_{\mathrm{MSE}}^{(\lambda)}\to 0$ but $\mathcal{J}_{\mathrm{SKL}}^{(\lambda)}\to\infty$.
\end{theorem}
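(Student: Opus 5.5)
We construct an explicit counterexample in which the predictor has a spurious low-cost basin on a set of vanishing $p$-mass, deep enough that it captures most of the predicted cost-biased mass. Fix $\lambda>0$, a reference distribution $p$ and a bounded true cost $J$ with $0\le J\le M$. Choose measurable sets $A_n$ with $p(A_n)=\varepsilon_n\to 0$; for concreteness take $p$ uniform on $[0,1]$ and $A_n=[0,\varepsilon_n]$. Define
\[
J_{\theta_n}(\mathbf{x})=J(\mathbf{x})-h_n\,\mathbf{1}_{A_n}(\mathbf{x}),
\]
so that $\delta_n=-h_n\mathbf{1}_{A_n}$ and $\mathcal{J}_{\mathrm{MSE}}^{(\lambda)}(\theta_n)=h_n^2\varepsilon_n$. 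We will pick $h_n\to\infty$ with $h_n^2\varepsilon_n\to 0$ but $\varepsilon_n e^{\lambda h_n}\to\infty$. For example, take $h_n=n$ and $\varepsilon_n=n^{-3}$: then $n^2\cdot n^{-3}\to 0$, while $n^{-3}e^{\lambda n}\to\infty$ for every fixed $\lambda>0$.

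To evaluate SKL we use Eq.~\eqref{eq:skl_expectation_difference}, which follows from Theorem~\ref{thm:si_skl_identity}:
\[
\mathcal{J}_{\mathrm{SKL}}=\lambda\bigl(\mathbb{E}_{P_\lambda}[\delta_n]-\mathbb{E}_{Q_n}[\delta_n]\bigr)
=\lambda h_n\bigl(Q_n(A_n)-P_\lambda(A_n)\bigr).
\]
This reduces the problem to controlling the two probabilities of $A_n$.

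For the true cost-biased law, boundedness of $J$ gives $P_\lambda(A_n)\le e^{\lambda M}\varepsilon_n\to 0$. For the predicted law,
\[
Q_n(A_n)=\frac{\int_{A_n}p\,e^{-\lambda J}e^{\lambda h_n}}{\int_{A_n}p\,e^{-\lambda J}e^{\lambda h_n}+\int_{A_n^c}p\,e^{-\lambda J}}.
\]
The numerator is at least $e^{-\lambda M}\varepsilon_n e^{\lambda h_n}\to\infty$, while the second denominator term is at most $1$. Hence $Q_n(A_n)\to 1$, and therefore $\mathcal{J}_{\mathrm{SKL}}\ge \lambda h_n(1-o(1))\to\infty$.

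The only delicate step is choosing the rates so that three conditions hold simultaneously: the MSE must vanish, the basin must dominate the normalizer, and the SKL prefactor $\lambda h_n$ must diverge. The exponential-versus-polynomial rate choice above handles all three at once.

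If one prefers smooth predictors, the indicator can be replaced by a continuous bump of height $h_n$ supported on $A_n$, with a slightly smaller plateau set of comparable mass; the same estimates go through. Finiteness of all normalizers is immediate because $J_{\theta_n}$ is bounded below for each $n$.
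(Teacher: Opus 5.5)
Your proof is correct and uses the same counterexample as the paper: uniform $p$ on $[0,1]$ and a spurious basin $-h\,\mathbf{1}_{[0,\varepsilon]}$ whose MSE $h^2\varepsilon$ vanishes while $\varepsilon e^{\lambda h}\to\infty$. Where you differ is in how you show the divergence blows up.

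The paper takes $J\equiv 0$ and tunes the depth as $a_\varepsilon=\frac{2}{\lambda}\log\frac{1}{\varepsilon}$, so that the predicted mass of $A_\varepsilon$ is explicit. It then lower-bounds only the term $D_{\mathrm{KL}}(p^{(J_\varepsilon,\lambda)}\,\|\,p^{(J,\lambda)})$, using the data-processing inequality on the partition $\{A_\varepsilon,A_\varepsilon^{c}\}$, which gives growth of order $\log(1/\varepsilon)$. The SKL diverges because the other KL term is nonnegative.

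You instead use the identity of Theorem~\ref{thm:si_skl_identity} in its expectation-difference form \eqref{eq:skl_expectation_difference}. This turns the SKL into the exact quantity $\lambda h_n\bigl(Q_n(A_n)-P_\lambda(A_n)\bigr)$ and reduces the proof to two elementary mass estimates. Your route buys several things:
\begin{itemize}
\item an exact formula with the exact growth rate $\lambda h_n$;
\item a transparent reading of the mechanism, since the SKL is $\lambda$ times the basin depth times the excess mass the basin captures;
\item validity for any bounded true cost and any $p$ admitting sets of arbitrarily small positive mass;
\item a $\lambda$-free choice of rates, so the single sequence $h_n=n$, $\varepsilon_n=n^{-3}$ works for every $\lambda>0$ at once.
\end{itemize}

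The paper's route does not rely on the identity. That identity presupposes both KL terms are finite, which in your case is automatic because each $\delta_n$ is bounded. The paper's route also shows that the predicted-to-true KL direction alone already diverges.
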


\begin{proof}
Consider $p$ uniform on $[0,1]$ and the true cost $J(x)=0$. Then
$p^{(J,\lambda)}=p$ for every $\lambda$. For $\varepsilon\in(0,1/2)$, let
$A_\varepsilon=[0,\varepsilon]$ and define
\begin{equation}
    J_\varepsilon(x)
    =
    -a_\varepsilon\,\mathbf{1}_{A_\varepsilon}(x),
    \qquad
    a_\varepsilon
    =
    \frac{2}{\lambda}\log\frac{1}{\varepsilon}.
\end{equation}
The MSE of this predictor is small because the error is supported only on a set of length $\varepsilon$:
\begin{equation}
    \mathcal{J}_{\mathrm{MSE}}^{(\lambda)}(\varepsilon)
    =
    \mathbb{E}_{p}\left[(J_\varepsilon(x)-J(x))^2\right]
    =
    a_\varepsilon^2\varepsilon
    =
    \frac{4}{\lambda^2}\varepsilon\log^2\frac{1}{\varepsilon}
    \to 0.
    \label{eq:counterexample_mse}
\end{equation}
The same localized error has a very different effect after exponentiation. The predicted cost-biased normalizer is
\begin{equation}
    Z_\varepsilon
    =
    \varepsilon e^{\lambda a_\varepsilon}+1-\varepsilon
    =
    \varepsilon^{-1}+1-\varepsilon.
    \label{eq:counterexample_normalizer}
\end{equation}
Thus the predicted normalized weight is piecewise constant:
\begin{equation}
    w_\varepsilon^\lambda
    =
    \frac{\exp[-\lambda J_\varepsilon(x)]}{Z_\varepsilon}
    =
    \begin{cases}
        \dfrac{\varepsilon^{-2}}{\varepsilon^{-1}+1-\varepsilon},
        & x\in A_\varepsilon,\\[8pt]
        \dfrac{1}{\varepsilon^{-1}+1-\varepsilon},
        & x\notin A_\varepsilon .
    \end{cases}
    \label{eq:counterexample_weights_piecewise}
\end{equation}
In particular, on $A_\varepsilon$,
\begin{equation}
    w_\varepsilon^\lambda
    =
    \frac{\varepsilon^{-2}}{\varepsilon^{-1}+1-\varepsilon}
    \sim
    \varepsilon^{-1},
\end{equation}
and the predicted cost-biased probability of $A_\varepsilon$ satisfies
\begin{equation}
    p^{(J_\varepsilon,\lambda)}(A_\varepsilon)
    =
    \varepsilon w_\varepsilon^\lambda
    =
    \frac{\varepsilon^{-1}}{\varepsilon^{-1}+1-\varepsilon}
    \to 1.
    \label{eq:counterexample_mass}
\end{equation}
Thus the predicted cost-biased distribution places asymptotically all of its probability on a set whose true probability is only $\varepsilon$.
To lower-bound the KL divergence, apply the data-processing inequality for KL to the binary partition
$\{A_\varepsilon,A_\varepsilon^{c}\}$. This gives
\begin{equation}
    D_{\mathrm{KL}}
    \left(
    p^{(J_\varepsilon,\lambda)}
    \,\|\,p^{(J,\lambda)}
    \right)
    \ge
    q_\varepsilon\log\frac{q_\varepsilon}{\varepsilon}
    +
    (1-q_\varepsilon)
    \log
    \frac{1-q_\varepsilon}{1-\varepsilon},
    \label{eq:counterexample_binary_kl}
\end{equation}
where
$q_\varepsilon=p^{(J_\varepsilon,\lambda)}(A_\varepsilon)$. By Eq.~\eqref{eq:counterexample_mass}, $q_\varepsilon\to 1$. Hence the first term in Eq.~\eqref{eq:counterexample_binary_kl} satisfies
\[
    q_\varepsilon\log\frac{q_\varepsilon}{\varepsilon}
    =
    q_\varepsilon\log q_\varepsilon
    +
    q_\varepsilon\log\frac{1}{\varepsilon}
    \to
    \infty,
\]
while the second term is bounded below and tends to zero because $1-q_\varepsilon\to0$ and $1-\varepsilon\to1$. Therefore
\[
    D_{\mathrm{KL}}
    \left(
    p^{(J_\varepsilon,\lambda)}
    \,\|\,p^{(J,\lambda)}
    \right)
    \to\infty .
\]
Since the symmetric KL is the sum of this KL term and its reverse direction, it also diverges. Taking any sequence $\varepsilon_i\downarrow0$ gives a sequence of predictors for which Eq.~\eqref{eq:counterexample_mse} tends to zero while
$\mathcal{J}_{\mathrm{SKL}}^{(\lambda)}\to\infty$. Smooth bump approximations of $\mathbf{1}_{A_\varepsilon}$ give the same conclusion with continuous predictors.
\end{proof}

Theorem~\ref{thm:skl_mse_bound_counterexample} shows the limitation of pointwise regression for inverse design. A narrow but strongly underestimated region can contribute little to MSE because it occupies little probability mass under $p$, but after exponentiation it can dominate $p^{(J_\theta,\lambda)}$ and therefore dominate guided optimization or generation.

Together, these results give a practical interpretation of the mini-batch loss in Eq.~\eqref{eq:si_minibatch_statement_skl}. The normalized exponential weights in the mini-batch are softmax probabilities over the simulated and predicted costs. MSE asks the predictor to match all costs as scalar values, whereas SKL asks it to match the probability redistribution that will actually be used by optimization and generation. This is why MSE is a natural baseline for regression accuracy, but SKL is the appropriate objective when the predictor is used to define a cost-biased inverse-design distribution.

\subsection{The choice of $\varphi_k$ and the rationale for preserving positive definiteness}
\label{subsec:si_damping_pd}

Our covariance construction relies on the factorization in Theorem~\ref{thm:semi_numerical_sqrt}, which requires $\mathbf{B}_k \succ 0$ and $\gamma_k>0$. This condition is not merely technical. In our method, $\mathbf{B}_k$ serves as a local proxy for the Jacobian-induced matrix that determines the Gaussian covariance; if $\mathbf{B}_k$ loses positive definiteness, then (i) the implied covariance may become indefinite, and (ii) the matrix square-root used in sampling becomes ill-conditioned or undefined, which can destabilize the entire guidance procedure. We therefore enforce $\mathbf{B}_k \succ 0$ throughout sampling.

The update in Theorem~\ref{thm:updataB} inherits the structure of quasi-Newton secant updates. As in DFP/BFGS-type methods, positive definiteness can be preserved if the curvature condition holds:
\begin{equation}
\mathbf{s}_k^{\top}\mathbf{y}_k>0.
\label{eq:curvature_cond}
\end{equation}
In classical optimization, one can enforce \eqref{eq:curvature_cond} by choosing step sizes that satisfy Wolfe conditions~\cite{nocedal2006numerical}. In our setting, however, $\mathbf{s}_k=\mathbf{x}_{k+1}-\mathbf{x}_k$ is produced by a fixed-time ODE integrator along a prescribed time grid, and the ``step size'' cannot be freely adjusted. Consequently, \eqref{eq:curvature_cond} may fail, especially when the velocity field varies rapidly over time or when numerical error accumulates.

To guarantee \eqref{eq:curvature_cond} and maintain $\mathbf{B}_k\succ 0$, we use a damped secant vector and replace $\mathbf{y}_k$ by
\begin{equation}
\widehat{\mathbf{y}}_{k}
=
\varphi_{k}\mathbf{y}_{k}+(1-\varphi_{k})\mathbf{B}_{k}\mathbf{s}_{k},
\qquad
\varphi_k\in[0,1].
\label{eq:yhat_def}
\end{equation}
This interpolation has two effects. First, it enforces a controlled curvature:
\begin{equation}
\mathbf{s}_k^{\top}\widehat{\mathbf{y}}_k
=
\varphi_k\,\mathbf{s}_k^{\top}\mathbf{y}_k+(1-\varphi_k)\,\mathbf{s}_k^{\top}\mathbf{B}_k\mathbf{s}_k.
\label{eq:curvature_yhat}
\end{equation}
Since $\mathbf{B}_k\succ 0$ implies $\mathbf{s}_k^{\top}\mathbf{B}_k\mathbf{s}_k>0$ for any nonzero $\mathbf{s}_k$, choosing $\varphi_k$ sufficiently small guarantees $\mathbf{s}_k^{\top}\widehat{\mathbf{y}}_k>0$ even when $\mathbf{s}_k^{\top}\mathbf{y}_k\le 0$. Second, it prevents overly aggressive updates: when $\varphi_k\to 0$, we have $\widehat{\mathbf{y}}_k=\mathbf{B}_k\mathbf{s}_k$, and the secant update becomes an identity update. Indeed, with $\widehat{\mathbf{y}}_k=\mathbf{B}_k\mathbf{s}_k$,
\begin{equation}
\mathbf{V}_k=\mathbf{I}-\rho_k\mathbf{s}_k\widehat{\mathbf{y}}_k^{\top}
=\mathbf{I}-\rho_k\mathbf{s}_k\mathbf{s}_k^{\top}\mathbf{B}_k,
\qquad
\rho_k=\frac{1}{\widehat{\mathbf{y}}_k^{\top}\mathbf{s}_k}=\frac{1}{\mathbf{s}_k^{\top}\mathbf{B}_k\mathbf{s}_k},
\end{equation}
and the standard symmetric secant update yields $\widetilde{\mathbf{B}}_{k+1}=\mathbf{B}_k$, so the DFP/BFGS correction is effectively skipped and only the affine time-rescaling remains:
\begin{equation}
\mathbf{B}_{k+1}=u_k\mathbf{B}_k+w_k\mathbf{I}.
\end{equation}
Thus, damping provides a continuous mechanism that interpolates between a full secant correction ($\varphi_k=1$) and a conservative no-correction step ($\varphi_k=0$).

We follow the standard damped-update design used in quasi-Newton methods~\cite{powell1978algorithms,byrd1994representations} and choose $\varphi_k$ to keep the effective curvature within a controlled band relative to $\mathbf{s}_k^{\top}\mathbf{B}_k\mathbf{s}_k$. Define
\begin{equation}
\tau_k \triangleq
\frac{\mathbf{s}_k^{\top}\mathbf{y}_k}{\mathbf{s}_k^{\top}\mathbf{B}_k\mathbf{s}_k}.
\label{eq:tau_def}
\end{equation}
We then set
\begin{equation}
\varphi_{k} =
\begin{cases}
\dfrac{\sigma_{2}}{1-\tau_{k}}, & \tau_{k}<1-\sigma_{2},\\[6pt]
\dfrac{\sigma_{3}}{\tau_{k}-1}, & \tau_{k}>1+\sigma_{3},\\[6pt]
1, & \text{otherwise},
\end{cases}
\qquad
\sigma_2,\sigma_3\in(0,1).
\label{eq:phi_piecewise}
\end{equation}
This choice enforces
\begin{equation}
(1-\sigma_{2})\,\mathbf{s}_{k}^{\top}\mathbf{B}_{k}\mathbf{s}_{k}
\le
\mathbf{s}_{k}^{\top}\widehat{\mathbf{y}}_{k}
\le
(1+\sigma_{3})\,\mathbf{s}_{k}^{\top}\mathbf{B}_{k}\mathbf{s}_{k},
\label{eq:curvature_band}
\end{equation}
which follows by substituting \eqref{eq:yhat_def} into \eqref{eq:curvature_yhat} and using \eqref{eq:tau_def}. In particular, the left inequality in \eqref{eq:curvature_band} guarantees $\mathbf{s}_k^{\top}\widehat{\mathbf{y}}_k>0$ whenever $\mathbf{s}_k\neq 0$, so the curvature condition holds for the damped pair $(\mathbf{s}_k,\widehat{\mathbf{y}}_k)$ even when the raw pair violates \eqref{eq:curvature_cond}. As a result, the symmetric secant update in Theorem~\ref{thm:updataB} preserves $\mathbf{B}_k\succ 0$ in the same sense as standard quasi-Newton updates.

\clearpage
\section{Algorithms and time-complexity analysis}
\label{sec:si_algorithms}

\subsection{visual-feature-preserving latent optimization}

\begin{algorithm}[H]
\caption{Time-annealed visual-feature-preserving latent optimization}
\label{alg:time_annealed_opt}
\begin{algorithmic}[1]
\Require Initial latent $\mathbf{x}_0$; cost $J(\mathbf{x},\lambda)$; guidance scale $\lambda>0$; step size $\eta$; iterations $K$; velocity field $\mathbf{v}_t(\mathbf{x})$; noise schedule $(\alpha_t,\sigma_t)$; time bounds $(t_{\min},t_{\max})$; annealing lower bound $t_{\min}^{(k)}$
\Ensure Optimized latent $\mathbf{x}_K$

\For{$k=0$ \textbf{to} $K-1$}
    \State Sample $t_k \sim \mathcal{U}\!\left(t_{\min}^{(k)},\,t_{\max}\right)$
    \State $\mathbf{x}_{t_k}^{k} \Leftarrow \textsc{AddNoise}(\mathbf{x}_k,t_k)$
    \State $\mathbf{g} \Leftarrow \nabla_{\mathbf{x}}J(\mathbf{x}_k,\lambda)
    -
    \dfrac{
    \mathbf{v}_{t_k}\!\left(\mathbf{x}_{t_k}^{k}\right)
    -
    \left(\dot{\alpha}_{t_k}/\alpha_{t_k}\right)\mathbf{x}_{t_k}^{k}
    }{
    s_{t_k}
    }$
    \State $\mathbf{x}_{k+1} \Leftarrow \mathbf{x}_{k}-\eta\,\mathbf{g}$
\EndFor

\State \Return $\mathbf{x}_K$
\end{algorithmic}
\end{algorithm}

\textbf{visual-feature-preserving latent optimization.}
Alg.~\ref{alg:time_annealed_opt} performs gradient descent in the Shape-VAE latent space while explicitly counteracting distributional drift. In the aerodynamic inverse-design setting, $\mathbf{x}$ is instantiated as the latent design variable $\mathbf{z}$ and $J(\mathbf{x},\lambda)$ is instantiated as the guided aerodynamic objective induced by $\lambda\hat{c}_\phi$.
At iteration $k$, we sample a noise level $t_k$ from an annealed interval $\big[t_{\min}^{(k)},t_{\max}\big]$ and perturb the current iterate via
$\mathbf{x}_{t_k}\leftarrow \textsc{AddNoise}(\mathbf{x}_k,t_k)$ using the same schedule as the generative model.
We then form an update direction
$\mathbf{g}=\nabla_{\mathbf{x}}J(\mathbf{x}_k,\lambda)\;-\;({\mathbf{v}_{t_k}(\mathbf{x}_{t_k}^{k})-(\dot{\alpha_{t_k}}/\alpha_{t_k})\mathbf{x}_{t_k}^{k}})/{s_{t_k}}$,
where the first term improves the objective and the second term acts as a prior correction induced by the flow field,
encouraging $\mathbf{x}_k$ to remain in a plausible region of the learned visual design distribution.
Finally, we update $\mathbf{x}_{k+1}\leftarrow \mathbf{x}_k-\eta\,\mathbf{g}$ and repeat for $K$ iterations.

\textbf{Time complexity.}
Let $d$ be the latent dimension.
Denote by $C_{\text{DiT}}^{\text{fwd}}$ the cost of one forward evaluation of the velocity field $\mathbf{v}_t(\cdot)$,
and by $C_{J}^{\text{bwd}}$ the cost of computing $\nabla_{\mathbf{x}}J(\mathbf{x},\lambda)$ via automatic differentiation
(including forward and backward passes of the cost predictor).
Each iteration requires: (i) one \textsc{AddNoise} and several vector operations, costing $\mathcal{O}(d)$;
(ii) one DiT forward, costing $\mathcal{O}(C_{\text{DiT}}^{\text{fwd}})$;
(iii) one cost-gradient evaluation, costing $\mathcal{O}(C_{J}^{\text{bwd}})$.
Therefore, the total complexity over $K$ iterations is
\[
\mathcal{O}\!\Big(K\big[C_{\text{DiT}}^{\text{fwd}}+C_{J}^{\text{bwd}}+d\big]\Big).
\]
The memory overhead (excluding model parameters) is $\mathcal{O}(d)$ for storing the current iterate and intermediate activations; in practice it is dominated by the activation memory of the cost predictor during backpropagation.

\subsection{SA-MC guided generation}
\label{subsec:si_samc_algorithm}

SA-MC is the sampling counterpart of the visual-feature-preserving optimizer above. Instead of returning one maximum-density latent point, it approximately draws guided samples from the cost-biased distribution in Eq.~\eqref{eq:si_cost_biased_latent}. The main sampler is given in Algorithm~\ref{alg:si_samc_main}. The four routines that follow are its subroutines: damping stabilizes secant pairs, \textsc{UpdateB} maintains the compact covariance proxy, \textsc{SemiNumericalSqrt} constructs the corresponding sampling map, and \textsc{SampleTiltedMean} estimates the Monte Carlo guidance correction.

\begin{algorithm}[H]
\caption{SA-MC guided generation}
\label{alg:si_samc_main}
\begin{algorithmic}[1]
\Require Visual input $r$; operating condition $o$; guidance scale $\lambda$; velocity field $v_t$; cost predictor $\hat{c}_\phi$; time grid $\{t_k\}_{k=0}^{K}$; Monte Carlo sample size $S$; memory length $M$; initial covariance scale $\gamma_0$.
\Ensure Guided latent design $\mathbf{z}_1$ and decoded geometry.
\State Initialize $\mathbf{z}_{t_0}$ from the source distribution and set the memory queue $\mathcal{M}_0\leftarrow[\,]$.
\For{$k=0$ \textbf{to} $K-1$}
    \State Evaluate $\mathbf{v}_k\leftarrow v_{t_k}(\mathbf{z}_{t_k}\mid r)$ and compute the clean prediction $\boldsymbol{\mu}_{t_k}$.
    \If{$k\ge 1$}
        \State Form the secant pair $(\mathbf{s}_{k-1},\mathbf{y}_{k-1})$ from consecutive sampler states and velocities.
        \State Apply damping to obtain $\widehat{\mathbf{y}}_{k-1}$ and update the memory queue $\mathcal{M}_k$.
        \State Reconstruct the compact matrix $(\gamma_k,\mathbf{U}_k,\boldsymbol{\Gamma}_k)$ with \textsc{UpdateB}.
    \EndIf
    \State Compute the implicit square-root map $\mathbf{L}_k$ with \textsc{SemiNumericalSqrt}.
    \State Draw $S$ clean-latent proposals $\mathbf{z}_{1}^{[s]}=\boldsymbol{\mu}_{t_k}+\mathbf{L}_k\boldsymbol{\epsilon}^{[s]}$.
    \State Evaluate $\hat{c}_\phi(\mathbf{z}_{1}^{[s]},o;\lambda)$ and compute normalized weights $\omega^{[s]}\propto\exp[-\lambda\hat{c}_\phi(\mathbf{z}_{1}^{[s]},o;\lambda)]$.
    \State Set $\mathbf{g}_{t_k}^{\mathrm{SA\mbox{-}MC}}=b_{t_k}\sum_{s=1}^{S}\omega^{[s]}(\mathbf{z}_{1}^{[s]}-\boldsymbol{\mu}_{t_k})$.
    \State Advance the guided flow with velocity $v_{t_k}(\mathbf{z}_{t_k}\mid r)+\mathbf{g}_{t_k}^{\mathrm{SA\mbox{-}MC}}$.
\EndFor
\State Decode the final latent variable to an SDF and mesh.
\State \Return Guided 3D design.
\end{algorithmic}
\end{algorithm}

\noindent\textbf{Time complexity.}
Let $d$ be the latent dimension, $M$ the memory length, $r\le 2M$ the compact rank, $S$ the Monte Carlo sample size, $C_{\mathrm{DiT}}^{\mathrm{fwd}}$ the cost of one velocity-field forward pass, and $C_J^{\mathrm{fwd}}$ the cost of one predictor forward pass. At each sampling step, SA-MC requires one velocity-field evaluation, compact covariance reconstruction, one compact square-root construction, and $S$ guided cost evaluations. The per-step cost is
\begin{equation}
    \mathcal{O}\!\left(
    C_{\mathrm{DiT}}^{\mathrm{fwd}}
    +dM^2+M^3
    +dr^2+r^3
    +S\left(dr+r^2+C_J^{\mathrm{fwd}}\right)
    \right).
    \label{eq:si_samc_complexity}
\end{equation}
The memory overhead is $\mathcal{O}(dr+r^2)$ for the compact covariance factors plus $\mathcal{O}(Sd)$ if all Monte Carlo perturbations are stored. A streaming implementation reduces the sample workspace to $\mathcal{O}(d)$.

\subsubsection{Damping (Damp.) subroutine}

\begin{algorithm}[!h]
\caption{\textsc{Damp}$(\mathbf{y},\mathbf{s},\gamma,\mathbf{U},\boldsymbol{\Gamma})$}
\label{alg:damp}
\begin{algorithmic}[1]
\Require Secant vectors $\mathbf{s},\mathbf{y}$; compact form $(\gamma,\mathbf{U},\boldsymbol{\Gamma})$ defining 
$\mathbf{B}=\gamma\mathbf{I}+\mathbf{U}\boldsymbol{\Gamma}\mathbf{U}^\top$; parameters $\sigma_2\in(0,1)$ and $\sigma_3>0$
\Ensure Damped vector $\widehat{\mathbf{y}}$ and coefficient $\varphi$

\State $\mathbf{B}\mathbf{s} \Leftarrow \textsc{ApplyB}(\mathbf{s};\gamma,\mathbf{U},\boldsymbol{\Gamma})$({$=\gamma\mathbf{s}+\mathbf{U}\boldsymbol{\Gamma}(\mathbf{U}^\top\mathbf{s})$})

\State $s^\top y \Leftarrow \mathbf{s}^\top\mathbf{y}$
\State $s^\top Bs \Leftarrow \mathbf{s}^\top(\mathbf{B}\mathbf{s})$
\State $\tau \Leftarrow (s^\top y)/(s^\top Bs)$
\State $\varphi \Leftarrow 1$

\If{$\tau < 1-\sigma_2$}
    \State $\varphi \Leftarrow \sigma_2/(1-\tau)$
\ElsIf{$\tau > 1+\sigma_3$}
    \State $\varphi \Leftarrow \sigma_3/(\tau-1)$
\EndIf

\State $\widehat{\mathbf{y}} \Leftarrow \varphi\,\mathbf{y}+(1-\varphi)\,\mathbf{B}\mathbf{s}$

\State \Return $(\widehat{\mathbf{y}},\varphi)$
\end{algorithmic}
\end{algorithm}

\noindent\textbf{Damping for positive definiteness.}
Alg.~\ref{alg:damp} enforces a curvature condition for the DFP-style secant update used in our covariance estimator.
Given $\mathbf{B}=\gamma\mathbf{I}+\mathbf{U}\boldsymbol{\Gamma}\mathbf{U}^\top$ and a new secant pair $(\mathbf{s},\mathbf{y})$, we form the ratio
$\tau=(\mathbf{s}^\top\mathbf{y})/(\mathbf{s}^\top\mathbf{B}\mathbf{s})$ and replace $\mathbf{y}$ by a convex combination
$\widehat{\mathbf{y}}=\varphi\,\mathbf{y}+(1-\varphi)\mathbf{B}\mathbf{s}$.
The piecewise choice of $\varphi$ guarantees
$(1-\sigma_2)\,\mathbf{s}^\top\mathbf{B}\mathbf{s}\le \mathbf{s}^\top\widehat{\mathbf{y}}\le (1+\sigma_3)\,\mathbf{s}^\top\mathbf{B}\mathbf{s}$,
so in particular $\mathbf{s}^\top\widehat{\mathbf{y}}>0$ whenever $\mathbf{B}\succ 0$.
This preserves the positive definiteness of the updated $\mathbf{B}$ and stabilizes the downstream matrix square-root used for Monte Carlo sampling.

\noindent\textbf{Time complexity.}
Let $d$ be the latent dimension and let $r$ denote the rank of the compact representation (in our memory-queue implementation, $r=2m$).
The dominant computation in Alg.~\ref{alg:damp} is applying the compact secant operator to a vector,
\[
\mathbf{B}\mathbf{s}=\gamma\,\mathbf{s}+\mathbf{U}\,\boldsymbol{\Gamma}\,(\mathbf{U}^\top\mathbf{s}).
\]
Computing $\mathbf{U}^\top\mathbf{s}$ costs $\mathcal{O}(dr)$, multiplying by $\boldsymbol{\Gamma}\in\mathbb{R}^{r\times r}$ costs $\mathcal{O}(r^2)$, and multiplying by $\mathbf{U}$ costs $\mathcal{O}(dr)$.
All remaining operations (inner products, scalar branches, and the convex combination) are $\mathcal{O}(d)$.
Therefore Alg.~\ref{alg:damp} runs in
\[
\mathcal{O}(dr+r^2)
\]
time per call.
Storing the compact factors $(\mathbf{U},\boldsymbol{\Gamma})$ requires $\mathcal{O}(dr+r^2)$ memory, and the algorithm uses an additional $\mathcal{O}(d)$ workspace.

\subsubsection{UpdateB subroutine}

\begin{algorithm}[!h]
\caption{\textsc{UpdateB}$(\mathcal{M},\widehat{\gamma})$}
\label{alg:updateB}
\begin{algorithmic}[1]
\Require Memory queue $\mathcal{M}=\{(\mathbf{s}_i,\widehat{\mathbf{y}}_i,u_i,w_i)\}_{i=0}^{\ell-1}$ ordered from oldest to newest, with $\ell\le m$; Jacobi-scale initialization $\widehat{\gamma}>0$
\Ensure Compact form $(\gamma,\mathbf{U},\boldsymbol{\Gamma})$ such that $\mathbf{B}=\gamma\mathbf{I}+\mathbf{U}\boldsymbol{\Gamma}\mathbf{U}^\top$

\State $\gamma \Leftarrow \widehat{\gamma}$
\State $\mathbf{U} \Leftarrow [\,]$
\State $\boldsymbol{\Gamma} \Leftarrow [\,]$

\For{$i=0$ \textbf{to} $\ell-1$}
    \State $\mathbf{s} \Leftarrow \mathbf{s}_i$
    \State $\widehat{\mathbf{y}} \Leftarrow \widehat{\mathbf{y}}_i$
    \State $u \Leftarrow u_i,\quad w \Leftarrow w_i$
    \State $\rho \Leftarrow 1/(\widehat{\mathbf{y}}^\top\mathbf{s})$

    \If{$\mathbf{U}=[\,]$}
        \State $\mathbf{U} \Leftarrow [\,\mathbf{s},\widehat{\mathbf{y}}\,]$ 
        \Comment{Append columns}
        \State $\boldsymbol{\Gamma} \Leftarrow
        \begin{bmatrix}
          0 & -\gamma\rho\\
          -\gamma\rho & \rho+\rho^2\gamma\,\mathbf{s}^\top\mathbf{s}
        \end{bmatrix}$
    \Else
        \State $\mathbf{q} \Leftarrow \mathbf{U}^\top\mathbf{s}$
        \State $\mathbf{p} \Leftarrow \boldsymbol{\Gamma}\mathbf{q}$
        \State $\tau \Leftarrow \mathbf{q}^\top\mathbf{p}$
        \State $\mathbf{U} \Leftarrow [\,\mathbf{U},\mathbf{s},\widehat{\mathbf{y}}\,]$
        \State $\boldsymbol{\Gamma} \Leftarrow
        \begin{bmatrix}
          \boldsymbol{\Gamma} & \mathbf{0} & -\rho\,\mathbf{p}\\
          \mathbf{0}^\top & 0 & -\gamma\rho\\
          -\rho\,\mathbf{p}^\top & -\gamma\rho & \rho+\rho^2(\tau+\gamma\,\mathbf{s}^\top\mathbf{s})
        \end{bmatrix}$
    \EndIf

    \State $\boldsymbol{\Gamma} \Leftarrow u\,\boldsymbol{\Gamma}$
    \State $\gamma \Leftarrow u\,\gamma+w$
\EndFor

\State \Return $(\gamma,\mathbf{U},\boldsymbol{\Gamma})$
\end{algorithmic}
\end{algorithm}

\noindent\textbf{Low-rank reconstruction of the secant matrix.}
Alg.~\ref{alg:updateB} rebuilds the compact representation of the local Jacobian-induced matrix
$\mathbf{B}$ from the memory queue $\mathcal{M}$.
Starting from a Jacobi-scale initialization $\gamma\leftarrow\widehat{\gamma}$, it sequentially applies the
DFP-form secant updates using the stored pairs $(\mathbf{s}_i,\widehat{\mathbf{y}}_i)$, while accounting for
the time-dependent affine transform $\mathbf{B}\leftarrow u_i\mathbf{B}+w_i\mathbf{I}$ induced by the schedule.
At each iteration, the algorithm maintains $\mathbf{B}$ in the form
$\mathbf{B}=\gamma\mathbf{I}+\mathbf{U}\boldsymbol{\Gamma}\mathbf{U}^\top$, where $\mathbf{U}$ stores the
history of secant vectors (interleaving $\mathbf{s}$ and $\widehat{\mathbf{y}}$) and $\boldsymbol{\Gamma}$
stores the corresponding small dense coefficients.
The scalars $\rho=(\widehat{\mathbf{y}}^\top\mathbf{s})^{-1}$ and $\tau=\mathbf{q}^\top\mathbf{p}$ with
$\mathbf{q}=\mathbf{U}^\top\mathbf{s}$, $\mathbf{p}=\boldsymbol{\Gamma}\mathbf{q}$ implement the closed-form
recursion in Theorem~\ref{thm:Bk_compact_form}, enabling updates without forming any $d\times d$ matrices.

\noindent\textbf{Time complexity.}
Let $d$ be the latent dimension and let $\ell\le m$ denote the number of stored secant pairs.
At iteration $i$, the compact rank is $r_i=2i$.
The dominant costs are forming $\mathbf{q}=\mathbf{U}^\top\mathbf{s}$ in $\mathcal{O}(d r_i)$,
computing $\mathbf{p}=\boldsymbol{\Gamma}\mathbf{q}$ in $\mathcal{O}(r_i^2)$, and evaluating
$\tau=\mathbf{q}^\top\mathbf{p}$ in $\mathcal{O}(r_i)$.
All remaining operations (updating $\gamma$, assembling the new blocks, and scaling by $u$) are lower order.
Summing over $i=0,\dots,\ell-1$ yields
\[
\sum_{i=0}^{\ell-1}\mathcal{O}(d r_i + r_i^2)
=\mathcal{O}\!\left(d\sum_{i=0}^{\ell-1}2i+\sum_{i=0}^{\ell-1}(2i)^2\right)
=\mathcal{O}(d\ell^2+\ell^3).
\]
Since $\ell\le m$ and $r=2m$, this can be equivalently expressed as $\mathcal{O}(d m^2+m^3)$ in the worst case.
The memory footprint is dominated by storing $\mathbf{U}\in\mathbb{R}^{d\times 2\ell}$ and
$\boldsymbol{\Gamma}\in\mathbb{R}^{2\ell\times 2\ell}$, i.e., $\mathcal{O}(d\ell+\ell^2)$.

\subsubsection{SemiNumericalSqrt subroutine}
\begin{algorithm}[!h]
\caption{\textsc{SemiNumericalSqrt}$(\gamma,\mathbf{U},\boldsymbol{\Gamma})$}
\label{alg:semi_sqrt}
\begin{algorithmic}[1]
\Require $\gamma>0$; $\mathbf{U}\in\mathbb{R}^{d\times r}$; symmetric matrix $\boldsymbol{\Gamma}\in\mathbb{R}^{r\times r}$
\Ensure A linear map $\mathbf{L}$ such that 
$\mathbf{B}=\gamma\mathbf{I}+\mathbf{U}\boldsymbol{\Gamma}\mathbf{U}^\top=\mathbf{L}\mathbf{L}^\top$

\If{$r=0$}
    \State \Return $\mathbf{L}=\sqrt{\gamma}\mathbf{I}$
\EndIf

\State Compute the reduced QR decomposition 
$\mathbf{U}=\mathbf{Q}\mathbf{R}$, where $\mathbf{Q}\in\mathbb{R}^{d\times r}$ is orthonormal and $\mathbf{R}\in\mathbb{R}^{r\times r}$ is upper triangular

\State Form the small matrix
$\mathbf{C} \Leftarrow \gamma\mathbf{I}_r+\mathbf{R}\boldsymbol{\Gamma}\mathbf{R}^\top$

\State Stabilize 
$\mathbf{C} \Leftarrow (\mathbf{C}+\mathbf{C}^\top)/2$
and optionally 
$\mathbf{C} \Leftarrow \mathbf{C}+\epsilon\mathbf{I}_r$

\State Compute the Cholesky factorization
$\mathbf{C}=\mathbf{L}_{\mathbf{C}}\mathbf{L}_{\mathbf{C}}^\top$,
where $\mathbf{L}_{\mathbf{C}}$ is lower triangular

\State Define $\mathbf{L}$ implicitly by
\[
\mathbf{L}
=
\sqrt{\gamma}\mathbf{I}
+
\mathbf{Q}
\left(
\mathbf{L}_{\mathbf{C}}-\sqrt{\gamma}\mathbf{I}_r
\right)
\mathbf{Q}^\top
\]

\State \Return $\mathbf{L}$, implemented via
\[
\textsc{ApplyL}(\mathbf{x})
=
\sqrt{\gamma}\mathbf{x}
+
\mathbf{Q}
\left(
\mathbf{L}_{\mathbf{C}}-\sqrt{\gamma}\mathbf{I}_r
\right)
\left(
\mathbf{Q}^\top\mathbf{x}
\right)
\]
\end{algorithmic}
\end{algorithm}

\noindent\textbf{Semi-numerical square root in compact form.}
Alg.~\ref{alg:semi_sqrt} computes a numerically stable factorization of the low-rank matrix
$\mathbf{B}=\gamma\mathbf{I}+\mathbf{U}\boldsymbol{\Gamma}\mathbf{U}^\top$ without forming any $d\times d$
dense matrices.
The key idea is to isolate the nontrivial action of $\mathbf{B}$ to the $r$-dimensional subspace spanned by
the columns of $\mathbf{U}$.
Using the reduced QR factorization $\mathbf{U}=\mathbf{Q}\mathbf{R}$, we obtain
\[
\mathbf{B}
=\gamma\mathbf{I}+\mathbf{Q}\big(\mathbf{R}\boldsymbol{\Gamma}\mathbf{R}^\top\big)\mathbf{Q}^\top,
\]
so the factorization reduces to taking a Cholesky decomposition of the small matrix
$\mathbf{C}=\gamma\mathbf{I}_r+\mathbf{R}\boldsymbol{\Gamma}\mathbf{R}^\top$.
The resulting square-root map is represented as a rank-$r$ correction to $\sqrt{\gamma}\mathbf{I}$:
\[
\mathbf{L}=\sqrt{\gamma}\mathbf{I}+\mathbf{Q}\big(\mathbf{L}_{\mathbf{C}}-\sqrt{\gamma}\mathbf{I}_r\big)\mathbf{Q}^\top,
\qquad \mathbf{C}=\mathbf{L}_{\mathbf{C}}\mathbf{L}_{\mathbf{C}}^\top,
\]
which can be applied to vectors using only matrix--vector products with $\mathbf{Q}$ and $\mathbf{L}_{\mathbf{C}}$.
The symmetrization and optional diagonal jitter $\epsilon\mathbf{I}_r$ ensure numerical stability when $\mathbf{C}$
is close to singular due to finite-precision errors.

\noindent\textbf{Time complexity.}
Let $d$ be the latent dimension and let $r$ denote the compact rank (in our memory-queue implementation, $r=2\ell\le 2m$).
Computing the reduced QR factorization of $\mathbf{U}\in\mathbb{R}^{d\times r}$ costs $\mathcal{O}(dr^2)$.
Forming $\mathbf{C}=\gamma\mathbf{I}_r+\mathbf{R}\boldsymbol{\Gamma}\mathbf{R}^\top$ costs $\mathcal{O}(r^3)$
(e.g., one $r\times r$ multiply to form $\mathbf{R}\boldsymbol{\Gamma}$ and one to post-multiply by $\mathbf{R}^\top$),
and the Cholesky factorization of $\mathbf{C}\in\mathbb{R}^{r\times r}$ also costs $\mathcal{O}(r^3)$.
Therefore, the total preprocessing cost is
\[
\mathcal{O}(dr^2+r^3).
\]
In practice, $\mathbf{L}$ is never materialized as a dense $d\times d$ matrix.
To apply $\mathbf{L}$ to a vector $\mathbf{x}\in\mathbb{R}^d$, we compute
$\mathbf{q}=\mathbf{Q}^\top\mathbf{x}$, $\mathbf{u}=(\mathbf{L}_{\mathbf{C}}-\sqrt{\gamma}\mathbf{I}_r)\mathbf{q}$,
and then $\mathbf{Q}\mathbf{u}$, which costs $\mathcal{O}(dr+r^2)$ per application.
The memory footprint is $\mathcal{O}(dr+r^2)$ to store $\mathbf{Q}$ and the small factors
(e.g., $\mathbf{R}$, $\boldsymbol{\Gamma}$, $\mathbf{C}$, and $\mathbf{L}_{\mathbf{C}}$).

\subsubsection{SampleTiltedMean subroutine}
\begin{algorithm}[!h]
\caption{\textsc{SampleTiltedMean}}
\label{alg:sample_tilted_mean}
\begin{algorithmic}[1]
\Require One-step prediction $\mathbf{x}_1^{\mathrm{pred}}$; factor $\mathbf{L}$; time $t$; guide scale $\lambda$; cost $J(\cdot,\lambda)$; Monte Carlo size $S$; small constant $\varepsilon>0$; schedule $b_t$
\Ensure Tilted-mean estimate $\mathbf{g}$

\State Sample $\{\boldsymbol{\epsilon}^{(i)}\}_{i=1}^{S}\sim \mathcal{N}(\mathbf{0},\mathbf{I})$
\State $\sigma(t) \Leftarrow \dfrac{1-t+\varepsilon}{\sqrt{t+\varepsilon}}$

\For{$i=1$ \textbf{to} $S$}
    \State $\boldsymbol{\xi}^{(i)} \Leftarrow \mathbf{L}\boldsymbol{\epsilon}^{(i)}$
    \State $\mathbf{x}_1^{(i)} \Leftarrow \mathbf{x}_1^{\mathrm{pred}}+\boldsymbol{\xi}^{(i)}$
    \State $\ell^{(i)} \Leftarrow -J(\mathbf{x}_1^{(i)},\lambda)$
\EndFor

\State $\ell_{\max} \Leftarrow \max_{1\le i\le S}\ell^{(i)}$
\State $\omega^{(i)} \Leftarrow \exp\!\left(\ell^{(i)}-\ell_{\max}\right),\quad i=1,\ldots,S$
\State $Z \Leftarrow \sum_{i=1}^{S}\omega^{(i)}$
\State $\mathbf{g} \Leftarrow b_t\sum_{i=1}^{S}\dfrac{\omega^{(i)}}{Z}\,\boldsymbol{\xi}^{(i)}$

\State \Return $\mathbf{g}$
\end{algorithmic}
\end{algorithm}

\noindent\textbf{Tilted-mean estimator.}
Alg.~\ref{alg:sample_tilted_mean} approximates the \emph{tilted mean correction} that appears in our guidance term.
Given the one-step prediction $\mathbf{x}_1^{\mathrm{pred}}$ and a factor $\mathbf{L}$ satisfying
$\mathbf{B}=\mathbf{L}\mathbf{L}^\top$, we draw Gaussian perturbations
$\boldsymbol{\xi}^{(i)}=\mathbf{L}\boldsymbol{\epsilon}^{(i)}$ and evaluate the cost
$J(\mathbf{x}_1^{(i)},\lambda)$ at the perturbed samples $\mathbf{x}_1^{(i)}=\mathbf{x}_1^{\mathrm{pred}}+\boldsymbol{\xi}^{(i)}$.
The weights $\omega^{(i)}\propto \exp(-J(\mathbf{x}_1^{(i)},\lambda))$ form an empirical approximation of the
exponentially tilted distribution, and the returned vector
\[
\mathbf{g}\approx \mathbb{E}\!\left[\boldsymbol{\xi}\ \middle|\ \text{tilt by } \exp(-J)\right]
\]
is the weighted average of the perturbations.
We compute the weights in a numerically stable manner by subtracting $\ell_{\max}$ before exponentiation.

\noindent\textbf{Time complexity.}
Let $d$ be the latent dimension and let $S$ denote the Monte Carlo sample size.
We implement $\mathbf{L}$ via the implicit routine \textsc{ApplyL} (Alg.~\ref{alg:semi_sqrt}), so each sample requires
one application of $\mathbf{L}$ to a vector and one forward evaluation of the cost predictor.
If $\mathbf{L}$ is stored in compact form with rank $r$ (in our memory-queue setting, typically $r\le 2m$), then
\textsc{ApplyL} costs $\mathcal{O}(dr+r^2)$, and the total per-step complexity is
\[
\mathcal{O}\!\big(S\,(dr+r^2)\big)\;+\;\mathcal{O}\!\big(S\,C_{J}^{\mathrm{fwd}}\big),
\]
where $C_{J}^{\mathrm{fwd}}$ denotes the cost of a single forward pass of the cost predictor.
The remaining operations (computing $\ell_{\max}$, forming the normalized weights, and the weighted sum) contribute
$\mathcal{O}(S)$ scalar work and $\mathcal{O}(Sd)$ vector accumulations, which are dominated by the two terms above.
The memory footprint is $\mathcal{O}(Sd)$ if all samples (and their intermediate vectors) are stored; in practice, this can be reduced to
$\mathcal{O}(d)$ by streaming the computation (accumulating $\ell_{\max}$ and the weighted sum on the fly), at the cost of an additional pass over the $S$ samples.

\clearpage
\section{Experimental settings}
\label{sec:si_experiment_settings}

\subsection{2D Experimental Setting}
\label{sec:2d-setting}
We study a synthetic 2D problem on the bounded square domain
$\mathcal{X}=[-3.5,\,3.5]\times[-3.5,\,3.5]$.
All densities and costs are represented on a uniform Cartesian grid; we use a $256\times256$ grid to construct and store the underlying fields, while KL metrics for model selection are evaluated on a separate $N_{\text{grid}}\times N_{\text{grid}}$ grid (default $N_{\text{grid}}=250$) for a controllable accuracy--speed trade-off.

\paragraph{Data distribution.}
The data distribution $p_0(\mathbf{x})$ is defined by a smooth Gaussian random field (GRF) on the $256\times256$ grid.
Concretely, we build a grid potential and convert it into a truncated density over $\mathcal{X}$ by exponentiation and numerical normalization:
mass outside $\mathcal{X}$ is set to zero and the normalization constant is computed by summing the resulting discrete probabilities on the grid.

\paragraph{Sampling.}
Sampling is consistent with the truncated-and-renormalized $p_0$.
We first evaluate the unnormalized log-density on grid nodes, convert it into a discrete probability mass function (PMF), and draw grid indices via multinomial sampling.
To reduce lattice artifacts, we add a small uniform jitter within each selected grid cell and clamp samples back to $\mathcal{X}$.

\paragraph{Cost landscape.}
The cost function $C(\mathbf{x})$ is generated independently as a smooth GRF on the same $256\times256$ grid.
The field is standardized (zero mean and fixed standard deviation) and queried continuously via bilinear interpolation with border padding.
Optionally, we use a peak-shift variant $C(\mathbf{x})$.

\paragraph{Tilted target distribution and KL evaluation.}
Given $p$ and $C$, we define the tilted target distribution
$q(\mathbf{x}) \propto p_0(\mathbf{x})\exp\!\big(-\alpha\,C(\mathbf{x})\big)$
($\alpha$ denoted as \texttt{scale}).
For evaluation, we discretize both $p_0$ and $C$ on an $N_{\text{grid}}\times N_{\text{grid}}$ grid over $\mathcal{X}$, form a grid PMF for $q$, and compute divergences between the ground-truth $q$ and the model-induced $q$ obtained by replacing $C$ with the predicted cost $\hat{C}$:
$\mathrm{KL}(q\|\hat{q})$, $\mathrm{KL}(\hat{q}\|q)$, and their symmetric version
$\mathrm{SKL}=\tfrac{1}{2}\big(\mathrm{KL}(q\|\hat{q})+\mathrm{KL}(\hat{q}\|q)\big)$.
We track these quantities every training step and select checkpoints by the minimum $\mathrm{SKL}$.

\paragraph{Reproducibility.}
All random seeds are fixed, and the generated GRF fields (for both $p$ and $C$) can be serialized and reused, ensuring identical experimental settings across runs.

\subsection{Vehicle aerodynamic optimization and generation setting}
\label{subsec:vehicle_setting}

\paragraph{Dataset.}
We adopt DrivAerNet++~\cite{elrefaie2024drivaernet++}, a large-scale vehicle aerodynamics dataset based on the parametric DrivAer model.
DrivAerNet++ provides thousands of watertight vehicle surfaces together with CFD-derived aerodynamic quantities obtained with OpenFOAM simulations.
For each shape, the dataset includes integrated force coefficients such as the drag and lift coefficients $(C_d, C_l)$.

\begin{table}[!h]
\centering
\caption{Summary of DrivAerNet++ and the aerodynamic quantities used in our vehicle experiments.}
\small
\setlength{\tabcolsep}{5.5pt}
\begin{tabular}{l l}
\toprule
\textbf{Item} & \textbf{DrivAerNet++ setting used in our experiments} \\
\midrule
Base geometry & DrivAer parametric vehicle model \\
Scale & $\sim$8{,}000 vehicle shapes with CFD annotations \\
CFD solver & OpenFOAM \\
Outputs used in this paper & Integrated drag coefficient $C_d$ \\
Target in this paper & Minimize $C_d$ (lower is better) \\
Usage in this paper & Train cost predictor $J_\theta$ \\
\bottomrule
\end{tabular}
\vspace{2pt}
\label{tab:drivaernetpp}
\end{table}

\paragraph{Target.}
Our optimization target is the drag coefficient \(C_d\) reported by CFD.
We use the standard definition
\begin{equation}
C_d \;\triangleq\; \frac{D}{q_\infty A_f},
\qquad
q_\infty \;\triangleq\; \tfrac{1}{2}\rho_\infty U_\infty^2,
\end{equation}
where \(D\) is the streamwise aerodynamic drag force acting on the vehicle, \(A_f\) is the frontal reference area, \(U_\infty\) is the freestream speed, and \(\rho_\infty\) is the freestream air density.
Accordingly, we instantiate the cost as \(J(\mathbf{x}) = C_d(\mathbf{x})\).

\subsection{Aircraft aerodynamic optimization and generation setting}
\label{subsec:aircraft_setting}

\paragraph{Dataset.}
Following \cite{sung2025blendednet}, we adopt \textsc{BlendedNet}, a high-fidelity aerodynamic dataset for blended-wing-body (BWB) aircraft.
It contains \(999\) unique BWB geometries, each simulated under multiple flight conditions, resulting in \(8{,}830\) successfully converged CFD cases.
For each geometry--condition pair, the dataset provides integrated force coefficients, including the drag and lift coefficients \((C_d, C_l)\), computed using FUN3D with steady RANS.

\begin{table}[!h]
\centering
\caption{Summary of BlendedNet and the aerodynamic quantities used in our aircraft experiments.}
\small
\setlength{\tabcolsep}{3.0pt}
\begin{tabular}{l l}
\toprule
\textbf{Item} & \textbf{BlendedNet setting used in our experiments} \\
\midrule
Base geometry & Parametric blended-wing-body (BWB) aircraft family \\
Scale & 999 geometries; 8{,}830 converged CFD cases (multiple conditions per geometry) \\
Conditioning variables & Flight-condition parameters (e.g., Mach, AoA, altitude / Reynolds length) \\
CFD solver & FUN3D (steady RANS) \\
Outputs in this paper & Integrated $C_d$, $C_l$ \\
Target in this paper & Minimize $C_d/C_l$ (lower is better) \\
Usage in this paper & Train cost predictor $J_\theta$ \\
\bottomrule
\end{tabular}
\vspace{2pt}
\label{tab:blendednet}
\end{table}

\paragraph{Target.}
For each geometry under a specified flight condition, we use the standard aerodynamic definitions
\begin{equation}
C_d \;\triangleq\; \frac{D}{q_\infty S_{\mathrm{ref}}},
\qquad
C_l \;\triangleq\; \frac{L}{q_\infty S_{\mathrm{ref}}},
\qquad
q_\infty \;\triangleq\; \tfrac{1}{2}\rho_\infty U_\infty^2,
\end{equation}
where \(D\) and \(L\) denote the drag and lift forces, \(S_{\mathrm{ref}}\) is the reference area, and \(\rho_\infty\) and \(U_\infty\) are the freestream density and speed, respectively.
We evaluate aerodynamic performance using the drag-to-lift ratio \(C_d/C_l\) (equivalently maximizing \(L/D\)), and instantiate the cost as
\begin{equation}
J(\mathbf{x}) \;=\; \frac{C_d(\mathbf{x})}{C_l(\mathbf{x})},
\end{equation}
computed under the corresponding flight condition of each sample.

\clearpage
\section{Evaluation protocols}
\label{sec:si_evaluation_protocols}
\label{sec:evalution}
\subsection{Vehicle OpenFOAM setting}
\label{sec:cfd-setting}

\begin{figure*}[!h]
  \centering
  \includegraphics[width=0.78\textwidth]{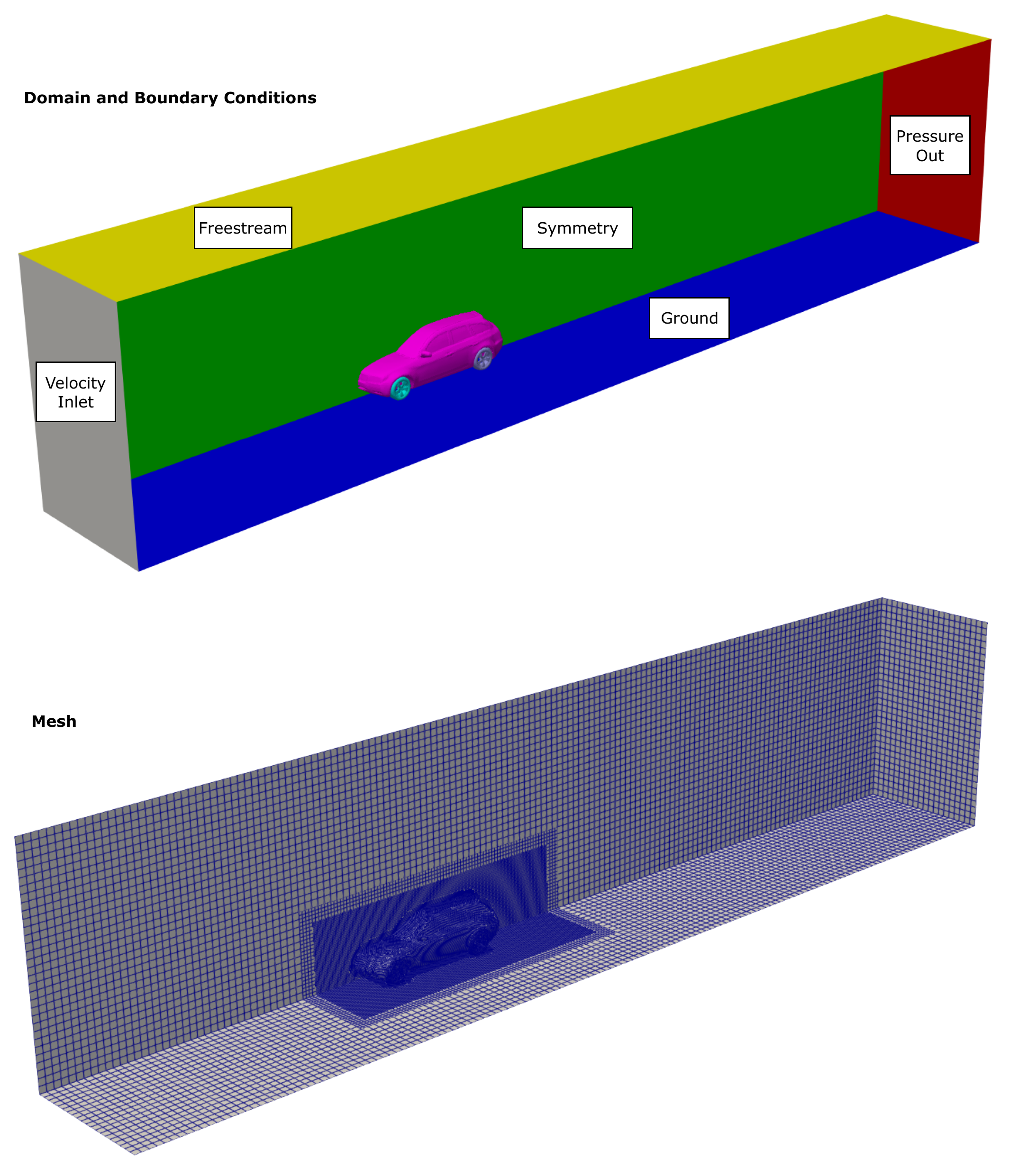}
  \caption{\textbf{CFD setup.}
  \textbf{Top:} computational domain and boundary conditions.
  \textbf{Bottom:} hex-dominant mesh with local refinement around the vehicle and near-wake.
  Boundary labels correspond to OpenFOAM patches: \emph{Velocity Inlet} (\texttt{ffminx}),
  \emph{Pressure Out} (\texttt{ffmaxx}),
  \emph{Symmetry} (\texttt{ffminy}),
  \emph{Freestream / far-field slip} (\texttt{ffmaxy}, \texttt{ffmaxz}),
  and \emph{Moving Ground} (\texttt{ffminz}).}
  \label{fig:driver_mesh}
\end{figure*}

\paragraph{Domain and Boundary Conditions.}
We solve a steady incompressible RANS problem using \texttt{simpleFoam}.
The inflow speed is set to $U_\infty=30~\mathrm{m/s}$.
At the inlet (\texttt{ffminx}), we impose a uniform velocity $\mathbf{U}=(U_\infty,0,0)$.
At the outlet (\texttt{ffmaxx}), we use an \texttt{inletOutlet} condition for $\mathbf{U}$ to robustly handle possible backflow.
A symmetry plane is applied on \texttt{ffminy}, while the remaining far-field boundaries (\texttt{ffmaxy}, \texttt{ffmaxz}) use slip conditions to approximate a freestream.
To mimic a moving-belt setup, the ground patch (\texttt{ffminz}) is prescribed with $\mathbf{U}=(U_\infty,0,0)$.
The vehicle body (\texttt{body2}) uses a no-slip wall with $\mathbf{U}=\mathbf{0}$.
The wheels (\texttt{ruotaant}, \texttt{ruotapost}) are modeled via \texttt{rotatingWallVelocity} with angular speed $\omega=-94~\mathrm{rad/s}$ about the lateral axis, using the wheel centers specified in the case files.
Aerodynamic coefficients are computed using \texttt{forceCoeffs} on the body, each wheel, and their union.

\begin{table}[H]
  \centering
  \caption{Boundary conditions for the velocity field $\mathbf{U}$.}
  \label{tab:cfd_bc_u}
  \setlength{\tabcolsep}{6pt}
  \begin{tabular}{lll}
    \toprule
    Patch & Physical meaning & $\mathbf{U}$ boundary condition \\
    \midrule
    \texttt{ffminx} & Velocity inlet & \texttt{fixedValue} $(U_\infty,0,0)$ \\
    \texttt{ffmaxx} & Pressure outlet & \texttt{inletOutlet} (inletValue $(U_\infty,0,0)$) \\
    \texttt{ffminy} & Symmetry plane & \texttt{symmetry} \\
    \texttt{ffmaxy} & Far-field & \texttt{slip} \\
    \texttt{ffminz} & Moving ground & \texttt{fixedValue} $(U_\infty,0,0)$ \\
    \texttt{ffmaxz} & Far-field & \texttt{slip} \\
    \texttt{body2} & Car body & \texttt{fixedValue} $(0,0,0)$ \\
    \texttt{ruotaant/ruotapost} & Wheels & \texttt{rotatingWallVelocity} ($\omega=-94~\mathrm{rad/s}$) \\
    \bottomrule
  \end{tabular}
\end{table}

\paragraph{Meshing.}
As shown in Fig.~\ref{fig:driver_mesh}, we employ a hex-dominant background mesh and apply a locally refined region enclosing the vehicle and its near-wake to better resolve separation and vortical structures.
Cell sizes transition smoothly from the refined block to the far-field boundaries to control computational cost while maintaining stability.
For diagnostics and post-processing, we additionally output standard quantities including $y^+$, $Q$-criterion, and wall shear stress.

\begin{figure*}[!h]
  \centering
  \includegraphics[width=0.78\textwidth]{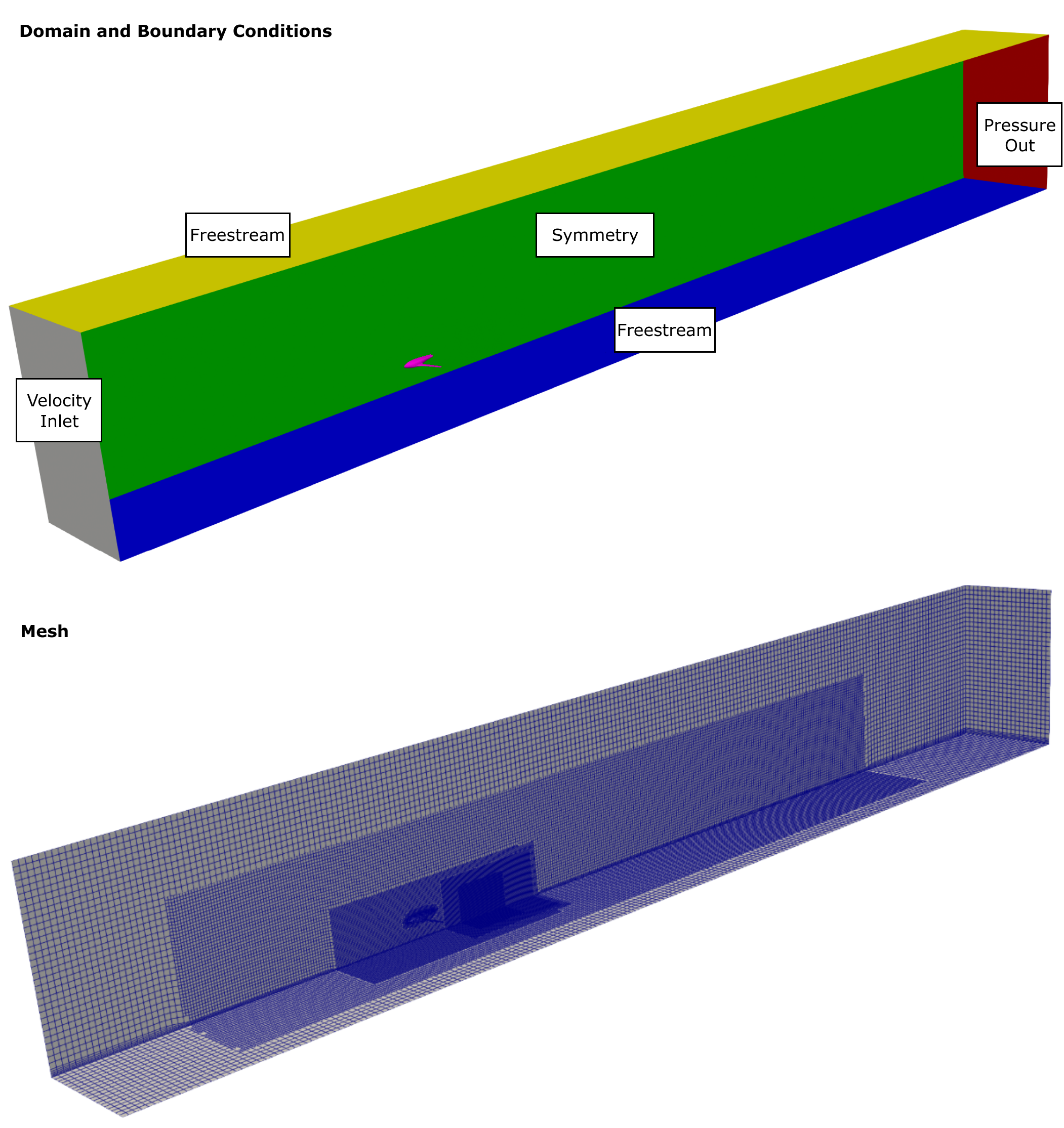}
  \caption{\textbf{CFD setup for the Blender car.}
  \textbf{Top:} computational domain and boundary conditions: \emph{Velocity Inlet} (\texttt{ffminx}),
  \emph{Pressure Out} (\texttt{ffmaxx}),
  \emph{Symmetry} (\texttt{ffminy}),
  and \emph{Freestream} slip walls (\texttt{ffmaxy}, \texttt{ffminz}, \texttt{ffmaxz}).
  \textbf{Bottom:} hex-dominant mesh with local refinement around the vehicle and near-wake.}
  \label{fig:blender_mesh}
\end{figure*}

\subsection{Blender-car OpenFOAM setting}
\label{sec:blender-cfd-setting}

\paragraph{Domain and Boundary Conditions.}
We run steady incompressible RANS simulations using \texttt{simpleFoam}.
The freestream speed is set to $|\mathbf{U}_\infty|=50~\mathrm{m/s}$.
To model an inclined inflow, we prescribe a uniform inlet velocity
$\mathbf{U}_\infty=(49.2404,\,0,\,8.6824)~\mathrm{m/s}$, corresponding to a $10^\circ$ incidence in the $x$--$z$ plane.
At the inlet (\texttt{ffminx}), $\mathbf{U}$ is enforced by a fixed-value condition.
At the outlet (\texttt{ffmaxx}), we use \texttt{inletOutlet} for $\mathbf{U}$ to improve robustness under possible backflow.
A symmetry plane is imposed on \texttt{ffminy}, and the remaining far-field boundaries (\texttt{ffmaxy}, \texttt{ffminz}, \texttt{ffmaxz}) use slip conditions to approximate a freestream.
The car surface (\texttt{body2}) is treated as a no-slip wall with $\mathbf{U}=\mathbf{0}$.
We compute aerodynamic coefficients using \texttt{forceCoeffs} on \texttt{body2}, with drag along $(1,0,0)$ and lift along $(0,0,1)$.

\begin{table}[H]
  \centering
  \caption{Boundary conditions for the velocity field $\mathbf{U}$ in the Blender car setup.}
  \label{tab:blender_bc_u}
  \setlength{\tabcolsep}{6pt}
  \begin{tabular}{lll}
    \toprule
    Patch & Physical meaning & $\mathbf{U}$ boundary condition \\
    \midrule
    \texttt{ffminx} & Velocity inlet & \texttt{fixedValue} $(49.2404,0,8.6824)$ \\
    \texttt{ffmaxx} & Pressure outlet & \texttt{inletOutlet} (inletValue $(49.2404,0,8.6824)$) \\
    \texttt{ffminy} & Symmetry plane & \texttt{symmetry} \\
    \texttt{ffmaxy} & Freestream boundary & \texttt{slip} \\
    \texttt{ffminz} & Freestream boundary & \texttt{slip} \\
    \texttt{ffmaxz} & Freestream boundary & \texttt{slip} \\
    \texttt{body2}  & blender body & \texttt{fixedValue} $(0,0,0)$ \\
    \bottomrule
  \end{tabular}
\end{table}

\paragraph{Meshing.}
As shown in Fig.~\ref{fig:blender_mesh}, we employ a hex-dominant background mesh and add a locally refined region that encloses the vehicle and its near-wake.
The mesh resolution is increased around the body to better capture separation and wake development, while coarser cells are used in the far field to control cost.
For analysis and visualization, we enable additional function objects, including $y^+$, $Q$-criterion, and wall shear stress, and compute time-averaged statistics after an initial transient.

\subsection{3D-printed car and miniature wind tunnel testing}
\label{sec:wind}

\begin{figure*}[!h]
  \centering
  \includegraphics[width=1\textwidth]{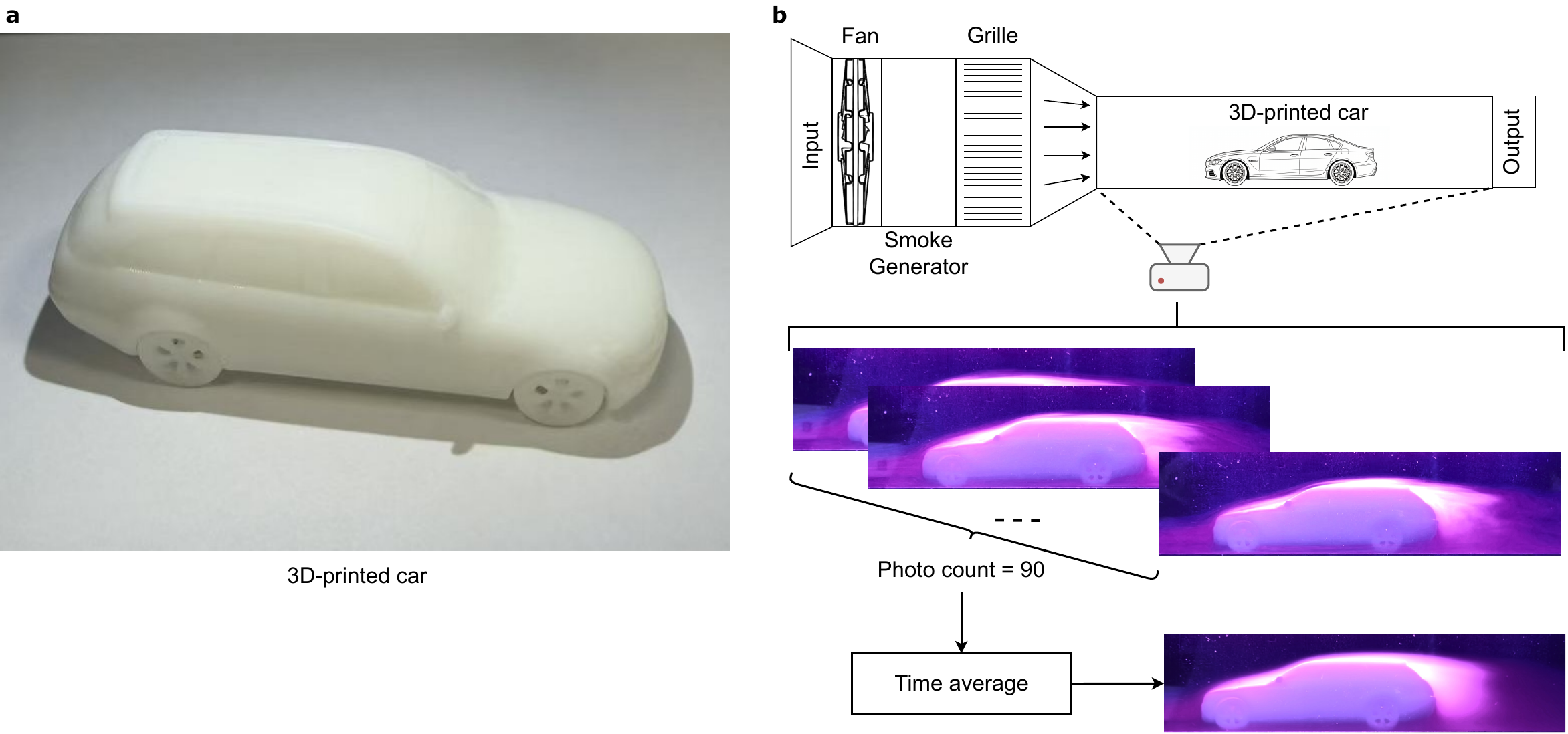}
  \caption{\textbf{Physical evaluation with a miniature wind tunnel.}
  (\textbf{a}) An example 3D-printed car used for testing.
  (\textbf{b}) Experimental pipeline: the fan-driven flow is seeded by a smoke generator and conditioned by a grille/flow straightener before entering the test section containing the 3D-printed car.
  For each design, we capture a sequence of smoke-visualization images from a fixed camera view (\emph{photo count} $=90$) and report both representative snapshots and their time-averaged image for comparison.}
  \label{fig:3d_print}
\end{figure*}

We complement CFD-based evaluation with physical tests in a miniature wind tunnel using 3D-printed vehicles.
For each design (before/after optimization, and generations with/without guidance), we fabricate a rigid scaled model and place it at a fixed location in the test section.
As illustrated in Fig.~\ref{fig:3d_print}b, the incoming flow is produced by a fan, seeded by a smoke generator, and then passes through a grille to reduce large-scale non-uniformity before interacting with the vehicle.
We record smoke-visualization images from a fixed viewpoint under consistent lighting.
To suppress instantaneous fluctuations and measurement noise, we acquire 90 frames per design and compute a time-averaged image, which serves as our primary qualitative indicator of the mean wake structure.

\clearpage
\section{Additional analysis of experimental results}
\label{sec:si_experimental_analysis}

\subsection{Why do guided methods with an MSE-trained predictor exhibit a consistent drift?}
\label{whyMSE}

\begin{figure*}[!h]
  \centering
  \includegraphics[width=0.86\textwidth]{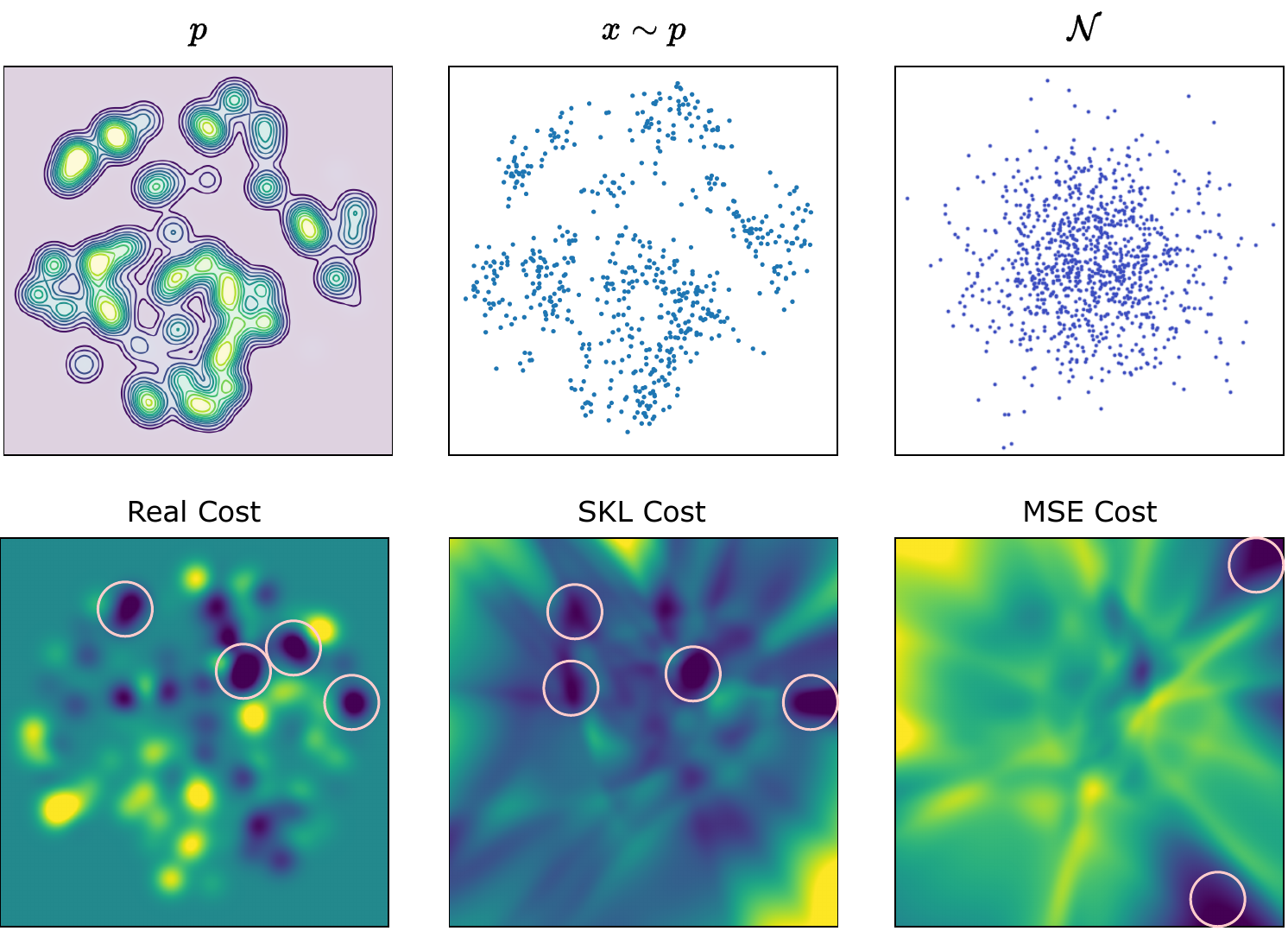}
  \caption{\textbf{Out-of-support generalization explains the drift of MSE-based guidance.}
  \textbf{Top:} the data density $p$ (contours), samples $x\sim p$, and the base noise distribution $\mathcal{N}$ used to initialize generation.
  \textbf{Bottom:} cost landscapes over the \emph{entire} domain: ground-truth (\emph{Real Cost}), SKL-trained predictor (\emph{SKL Cost}), and MSE-trained predictor (\emph{MSE Cost}).
  All cost maps are normalized for visualization, and circles mark the lowest few minima (global low-cost basins) in each map.}
  \label{fig:why_mse}
\end{figure*}

When training the cost predictor, supervision is only available on samples drawn from the data distribution $p$ (top row), i.e., the predictor is learned on the support of $p$.
Consequently, the learned predictor is under-constrained outside the data support and may extrapolate arbitrarily in low-density regions.
This distinction becomes crucial for guided generation: sampling is initialized from noise $\mathcal{N}$ and thus inevitably traverses regions far outside the support of $p$.
Therefore, effective guidance requires the predictor to match the \emph{global} low-cost basins of the true cost landscape over the full domain.

The second row visualizes this effect by plotting the cost fields over the entire space and highlighting the lowest few minima after normalization.
The SKL-trained predictor better preserves the locations of these global minima compared with the ground-truth cost, while the MSE-trained predictor produces mismatched or spurious minima in out-of-support regions.
As a result, guidance computed from the MSE predictor tends to pull trajectories consistently toward incorrect low-cost basins, manifesting as a systematic drift during generation.

\clearpage
\section{Additional reinforcement-learning experiments}
\label{sec:si_offline_rl}

\subsection{Experiment setting}
\label{sec:si_offline_rl_setting}
We follow the offline planning setup in \cite{feng2025guidance} to benchmark training-free guidance on offline reinforcement learning tasks.
Concretely, we use the D4RL locomotion suite (HalfCheetah, Hopper, Walker2d) under standard dataset regimes (Medium, Medium-Expert, and Medium-Replay), where demonstrations are non-expert mixtures or replay buffers and the goal is to recover high-return behavior from offline data. \cite{fu2020d4rl, janner2022planning}.

\paragraph{Planner formulation.}
Our evaluation adopts the Diffuser-style generative planning pipeline: a generative model proposes a length-$H$ state--action sequence, and a critic estimates the discounted return-to-go of the proposed plan; guidance then biases sampling toward higher predicted returns, i.e., sampling from an energy-weighted distribution proportional to $p(\tau)\exp(R(\tau))/Z$ \cite{levine2018reinforcement}.

\paragraph{Generative model and critic.}
The base planner is a conditional flow-matching model (CFM) or mini-batch OT-CFM with an affine path $\alpha_t=t,\beta_t=1-t$, implemented with an 8-layer Transformer backbone of hidden size 256.
It is trained for $10^{5}$ steps with batch size 32, learning rate $2\times10^{-4}$, and cosine-annealing scheduling.
The critic shares the same backbone and uses the last token as the value head; it is trained for $10^{4}$ steps with batch size 64 and learning rate $2\times10^{-4}$.
The discount factor is fixed to 0.99. \textbf{In experiments, we use the pretrained model checkpoints released by} \cite{feng2025guidance}.

\paragraph{Planning protocol.}
We plan with horizon $H=20$ and stride 1, and intentionally disable commonly used tricks to isolate the effect of guidance.
For a fair comparison, all guidance methods share the same pretrained base generative model during evaluation. \cite{feng2025guidance}.

\paragraph{Metric and hyperparameter tuning.}
We report D4RL normalized scores, where 100 corresponds to the expert policy return. \cite{fu2020d4rl} \cite{feng2025guidance}.
Hyperparameters are tuned per method following \cite{feng2025guidance}: covariance-gradient guidances sweep multiple $\lambda_t$ schedules (e.g., constant/decay families) and global scales; $g_{\text{MC}}$ sweeps the reward scaling and uses up to 128 Monte Carlo samples with a small numerical stabilizer; $g_{\text{sim-MC}}$ sweeps reward scaling and an assumed conditional standard deviation, and additionally tunes a lightweight schedule/scale on the estimated guidance. \cite{feng2025guidance}.

\paragraph{Baselines and Hyperparameter Tuning.}
We adopt the same four baselines as in the main text, including DPS~\cite{chung2022diffusion}, LGD-MC~\cite{song2023loss}, SIM-MC~\cite{feng2025guidance}, and our method SA-MC. Following \cite{feng2025guidance}, we perform a small hyperparameter sweep over: (i) the flow-matching variant (\texttt{cfm} or \texttt{ot\_cfm}); (ii) the number of Monte Carlo samples ($16, 64, 128$); (iii) the guidance scale ($0.1, 1$); and (iv) the guidance schedule (original or linear decay). Here \emph{original} refers to the default schedule used in each method. For DPS and LGD-MC, the schedule is ${b_t\sigma^2}/{\alpha_t}$, which equals $(1-t)/t$ in practice. For SIM-MC and SA-MC, the schedule is $b_t$, which equals $1/(1-t)$ in practice. The \emph{linear decay} schedule is set to $1-t$.

\begin{table}[!h]
\centering
\small
\setlength{\tabcolsep}{4pt}
\caption{\textbf{Offline RL results on D4RL locomotion.} 
We report the best normalized score (mean $\pm$ std.) for each method under our hyperparameter sweep described in Supplementary Section~\ref{sec:si_offline_rl_setting}. 
The last row reports the average over all nine tasks. 
Bold indicates the best result and underline indicates the second best.}
\label{tab:all}
\begin{tabular}{llccccc}
\toprule
 &  & \multicolumn{5}{c}{Overall} \\
\cmidrule(lr){3-7}
 &  & w/o $g$ & DPS & LGD-MC & SIM-MC & SA-MC \\
\midrule

\multirow{3}{*}{Medium-Expert} & HalfCheetah & 62.3 $\pm$ 17.9 & 41.8 $\pm$ 17.0 & \underline{66.5  $\pm$ 0.22} & 60.6 $\pm$ 16.5& \textbf{71.0 $\pm$ 14.6}\\
 & Hopper   & 82.8 $\pm$ 22.3 & 99.1 $\pm$ 13.3 & 100.4 $\pm$ 18.5 &  \underline{101.7 $\pm$ 18.1} & \textbf{102.1 $\pm$ 17.9} \\
 & Walker2d & 84.6 $\pm$ 12.6 & \textbf{107.5 $\pm$ 0.5} & \underline{103.2 $\pm$ 4.7} & 96.1 $\pm$ 12.7 & 97.7 $\pm$ 10.8 \\
\midrule
 
\multirow{3}{*}{Medium} & HalfCheetah & 42.5 $\pm$ 1.0 & \textbf{42.8 $\pm$ 1.1}& 42.6 $\pm$ 1.0 & \underline{42.7 $\pm$ 1.2} &42.4 $\pm$ 0.9\\
 & Hopper   &  76.2 $\pm$ 10.7 & \textbf{90.5 $\pm$ 14.0} &  74.6 $\pm$ 11.2 &  77.2 $\pm$ 15.9 & \underline{80.7 $\pm$ 11.4} \\
 & Walker2d & 68.8 $\pm$ 10.8 &  \textbf{79.7 $\pm$ 7.5} & 76.1 $\pm$ 10.7 &77.4 $\pm$ 2.8 & \underline{78.3 $\pm$ 4.1} \\
\midrule

\multirow{3}{*}{Medium-Replay} & HalfCheetah & 34.9 $\pm$ 3.0 & 31.6 $\pm$ 14.6 & \textbf{38.3 $\pm$ 1.5} & {36.0 $\pm$ 2.2}  & \underline{36.2 $\pm$ 1.7} \\
 & Hopper   & 44.1 $\pm$ 6.7 & \textbf{66.4 $\pm$ 6.0} & 55.1 $\pm$ 11.1 & 53.6 $\pm$ 15.6 &  \underline{57.6 $\pm$ 8.6} \\
 & Walker2d & 50.8 $\pm$ 28.3 & \underline{67.1 $\pm$ 24.3} & \textbf{70.4 $\pm$ 5.4} &  56.0 $\pm$ 18.7 & 62.1 $\pm$ 20.0 \\
\midrule

\multicolumn{2}{c}{Average} & 60.8 & 69.6 & \underline{69.7} & 66.8 & \textbf{69.8} \\
\bottomrule
\end{tabular}
\end{table}

\begin{table}[!h]
\centering
\small
\setlength{\tabcolsep}{4pt}
\caption{\textbf{OT-CFM with original schedule.}
D4RL normalized scores (mean $\pm$ std.) for each dataset and environment.
``w/o $g$'' denotes no guidance.
``--'' indicates runs that produced invalid scores under the corresponding setting.}
\label{tab:1}
\begin{tabular}{llccccc}
\toprule
 &  & \multicolumn{5}{c}{OT-CFM, Original Schedule} \\
\cmidrule(lr){3-7}
 &  & w/o $g$ & DPS & LGD-MC & SIM-MC & SA-MC \\
\midrule

\multirow{3}{*}{Medium-Expert} & HalfCheetah & 62.3 $\pm$ 17.9 & - & - & 60.6 $\pm$ 16.5& 71.0 $\pm$ 14.6\\
 & Hopper   & 82.8 $\pm$ 22.3 & - & 79.5 $\pm$ 19.4 &  101.0 $\pm$ 17.9 & 100.7 $\pm$ 18.7 \\
 & Walker2d & 84.6 $\pm$ 12.6 & - & - & 86.0 $\pm$ 20.7 & 87.0 $\pm$ 16.1 \\
\midrule

\multirow{3}{*}{Medium} & HalfCheetah & 41.3 $\pm$ 0.5 & 41.6 $\pm$ 0.1& 41.8 $\pm$ 0.6 & 41.4 $\pm$ 0.9 & 41.9 $\pm$ 0.3 \\
 & Hopper   &  54.4 $\pm$ 8.9 & - &  55.3 $\pm$ 4.5 &  53.0 $\pm$ 6.5 & 58.1 $\pm$ 6.0 \\
 & Walker2d & 68.8 $\pm$ 10.8 & - & - &66.6$\pm$ 15.7 & 77.0 $\pm$ 9.0 \\
\midrule

\multirow{3}{*}{Medium-Replay} & HalfCheetah & 22.4 $\pm$ 5.0 & - & 23.5 $\pm$ 6.4 & 24.4 $\pm$ 6.2 & 24.5 $\pm$ 8.0 \\
 & Hopper   & 44.1 $\pm$ 6.7 & - & - & 47.7 $\pm$ 3.4 &  55.3 $\pm$ 14.3 \\
 & Walker2d & 31.0 $\pm$ 2.0 & - & - &  39.3 $\pm$ 20.6 & 40.3 $\pm$ 20.1 \\
\bottomrule
\end{tabular}
\end{table}

\begin{table}[!h]
\centering
\small
\setlength{\tabcolsep}{4pt}
\caption{\textbf{CFM with original schedule.}
D4RL normalized scores (mean $\pm$ std.) for each dataset and environment.
``w/o $g$'' denotes no guidance.
``--'' indicates runs that produced invalid scores under the corresponding setting.}
\label{tab:2}
\begin{tabular}{llccccc}
\toprule
 &  & \multicolumn{5}{c}{CFM, Original Schedule} \\
\cmidrule(lr){3-7}
 &  & w/o $g$ & DPS & LGD-MC & SIM-MC & SA-MC \\
\midrule

\multirow{3}{*}{Medium-Expert} & HalfCheetah & 44.9 $\pm$ 3.6 & - & - & 57.5 $\pm$ 20.2& 51.6 $\pm$ 18.9\\
 & Hopper   & 61.1 $\pm$ 13.1 & - & 91.7 $\pm$ 17.4 &  91.1 $\pm$ 23.4 & 91.9 $\pm$ 24.7 \\
 & Walker2d & 67.5 $\pm$ 29.5 & - & - & 90.2 $\pm$ 23.9 & 94.1 $\pm$ 15.5 \\
\midrule

\multirow{3}{*}{Medium} & HalfCheetah & 42.5 $\pm$ 1.0 & 42.6 $\pm$ 0.4& 41.8 $\pm$ 0.7 & 41.9 $\pm$ 0.7 & 42.4 $\pm$ 0.9 \\
 & Hopper   &  76.2 $\pm$ 10.7 & - &  71.6 $\pm$ 10.6 &  72.4 $\pm$ 3.9 &  80.7 $\pm$ 11.4 \\
 & Walker2d & 68.7 $\pm$ 23.3 & - & 68.6 $\pm$ 14.5 & 71.9 $\pm$ 6.8 & 78.3 $\pm$ 4.1 \\
\midrule

\multirow{3}{*}{Medium-Replay} & HalfCheetah & 34.9 $\pm$ 3.0 & - & 33.1 $\pm$ 5.5 & 30.2 $\pm$ 13.6 & 34.9 $\pm$ 4.4 \\
 & Hopper   & 39.4 $\pm$ 2.6 & - & 48.0 $\pm$ 13.9 & 53.6 $\pm$ 15.6 &  52.6 $\pm$ 13.7 \\
 & Walker2d & 50.8 $\pm$ 28.3 & - & 35.6 $\pm$ 8.8 & 56.0 $\pm$ 18.7 & 57.8 $\pm$ 19.9 \\
\bottomrule
\end{tabular}
\end{table}

\begin{table}[!h]
\centering
\small
\setlength{\tabcolsep}{4pt}
\caption{\textbf{OT-CFM with linear-decay schedule.}
D4RL normalized scores (mean $\pm$ std.) for each dataset and environment.
``w/o $g$'' denotes no guidance.}
\label{tab:3}
\begin{tabular}{llccccc}
\toprule
 &  & \multicolumn{5}{c}{OT-CFM, Linear Decay Schedule} \\
\cmidrule(lr){3-7}
 &  & w/o $g$ & DPS & LGD-MC & SIM-MC & SA-MC \\
\midrule

\multirow{3}{*}{Medium-Expert} & HalfCheetah & 62.3 $\pm$ 17.9 & - & 64.5 $\pm$ 10.6 & 58.3 $\pm$ 18.5& 61.8 $\pm$ 16.0\\
 & Hopper   & 82.8 $\pm$ 22.3 & 85.8 $\pm$ 16.2 & 100.4 $\pm$ 18.5 &  101.7 $\pm$ 18.1 & 102.1 $\pm$ 17.9 \\
 & Walker2d & 84.6 $\pm$ 12.6 & 107.5 $\pm$ 0.5 & 93.1 $\pm$ 10.8 & 85.7 $\pm$ 20.2 & 86.1 $\pm$ 14.0 \\
\midrule

\multirow{3}{*}{Medium} & HalfCheetah & 41.3 $\pm$ 0.5 & 42.1 $\pm$ 0.7& 41.8 $\pm$ 0.6 & 41.5 $\pm$ 0.7 & 41.8 $\pm$ 1.2 \\
 & Hopper   &  54.4 $\pm$ 8.9 & 66.5 $\pm$ 1.7 &  68.3 $\pm$ 6.3 &  57.9 $\pm$ 11.6 & 57.3 $\pm$ 6.2 \\
 & Walker2d & 68.8 $\pm$ 10.8 & 75.4 $\pm$ 9.5 & 76.1 $\pm$ 10.7 &69.2$\pm$ 12.6 & 67.9 $\pm$ 10.3 \\
\midrule

\multirow{3}{*}{Medium-Replay} & HalfCheetah & 22.4 $\pm$ 5.0 & 24.6 $\pm$ 9.2 & 28.1 $\pm$ 4.4 & 22.1 $\pm$ 3.2 & 25.5 $\pm$ 5.2 \\
 & Hopper   & 44.1 $\pm$ 6.7 & 66.4 $\pm$ 6.0 & 49.8 $\pm$ 10.3& 52.3 $\pm$ 6.2 &  57.6 $\pm$ 8.6 \\
 & Walker2d & 31.0 $\pm$ 2.0 & 67.1 $\pm$ 24.3 & 43.5 $\pm$ 16.6 &  40.1 $\pm$ 19.2 & 45.1 $\pm$ 15.5 \\
\bottomrule
\end{tabular}
\end{table}

\begin{table}[!h]
\centering
\small
\setlength{\tabcolsep}{4pt}
\caption{\textbf{CFM with linear-decay schedule.}
D4RL normalized scores (mean $\pm$ std.) for each dataset and environment.
``w/o $g$'' denotes no guidance.}
\label{tab:4}
\begin{tabular}{llccccc}
\toprule
 &  & \multicolumn{5}{c}{CFM, Linear Decay Schedule} \\
\cmidrule(lr){3-7}
 &  & w/o $g$ & DPS & LGD-MC & SIM-MC & SA-MC \\
\midrule

\multirow{3}{*}{Medium-Expert} & HalfCheetah & 44.9 $\pm$ 3.6 & 41.8 $\pm$ 17.0 & 66.5  $\pm$ 0.22& 51.9 $\pm$ 19.7& 62.4 $\pm$ 24.0\\
 & Hopper   & 61.1 $\pm$ 13.1 & 99.1 $\pm$ 13.3 & 98.8 $\pm$ 18.6 &  94.4 $\pm$ 21.7 & 91.8 $\pm$ 19.0 \\
 & Walker2d & 67.5 $\pm$ 29.5 & 103.2 $\pm$ 7.6 & 103.2 $\pm$ 4.7 & 96.1 $\pm$ 12.7 & 97.7 $\pm$ 10.8 \\
\midrule

\multirow{3}{*}{Medium} & HalfCheetah & 42.7 $\pm$ 1.5 & 42.8 $\pm$ 1.1& 42.6 $\pm$ 1.0 & 42.7 $\pm$ 1.2 & 42.2 $\pm$ 0.7 \\
 & Hopper   &  76.2 $\pm$ 10.7 & 90.5 $\pm$ 14.0 &  74.6 $\pm$ 11.2 &  77.2 $\pm$ 15.9 &  74.9 $\pm$ 4.0 \\
 & Walker2d & 68.7 $\pm$ 23.3 & 79.7 $\pm$ 7.5 & 72.5$\pm$ 12.6 & 77.4 $\pm$ 2.8 & 77.5 $\pm$ 4.4 \\
\midrule

\multirow{3}{*}{Medium-Replay} & HalfCheetah & 34.9 $\pm$ 3.0 & 31.6 $\pm$ 14.6 & 38.3 $\pm$ 1.5 & 36.0 $\pm$ 2.2 & 36.2 $\pm$ 1.7 \\
 & Hopper   & 39.4 $\pm$ 2.6 & 49.5 $\pm$ 11.8 & 55.1 $\pm$ 11.1 & 52.2 $\pm$ 7.6 &  55.7 $\pm$ 19.5 \\
 & Walker2d & 50.8 $\pm$ 28.3 & 66.2 $\pm$ 27.8& 70.4 $\pm$ 5.4 & 50.9 $\pm$ 21.4 & 62.1 $\pm$ 20.0 \\
\bottomrule
\end{tabular}
\end{table}

\begin{figure*}[t]
  \centering
  \includegraphics[width=0.55\textwidth]{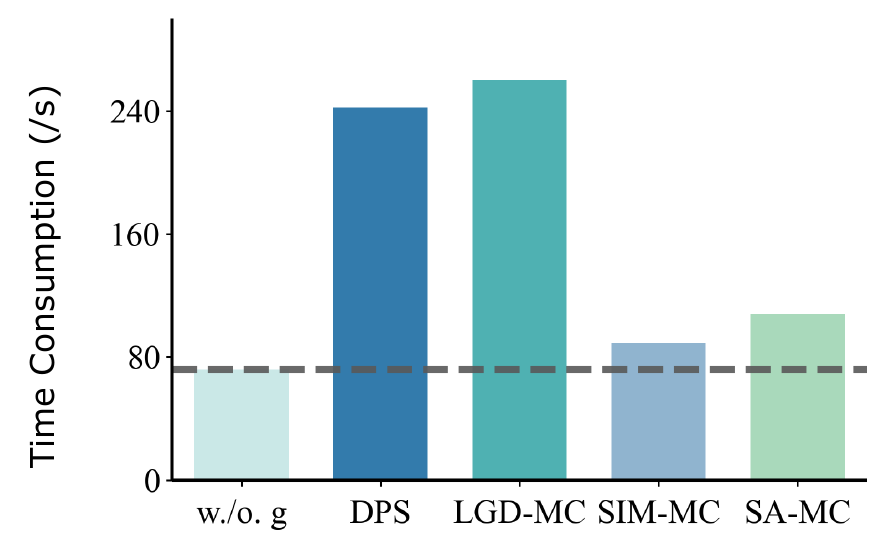}
  \caption{\textbf{Runtime overhead of guidance on D4RL locomotion.}
  Wall-clock time (seconds) to generate a 1{,}000-step trajectory.
  The dashed line denotes the unconditional baseline (w/o $g$). Lower is better.}
  \label{fig:offlin_RL_time}
\end{figure*}

\subsection{Results and analysis}

Following the experimental setting above, we sweep the hyperparameters listed in Supplementary Section~\ref{sec:si_offline_rl_setting} and report, for each method, the \emph{best} score achieved on each task under this sweep.
Table~\ref{tab:all} summarizes the resulting best-performance comparison across all configurations.
Overall, SA-MC achieves the best average normalized score among the compared methods.

Beyond the mean score, SA-MC exhibits stronger \emph{practical reliability}.
As shown by the detailed breakdown in Tables~\ref{tab:1}--\ref{tab:4}, DPS and LGD-MC can be sensitive to the choice of schedule and scale.
In particular, under the method-specific original schedules, DPS and LGD-MC may become numerically unstable for some settings and fail to return a valid evaluation score; such failed cases are marked by ``--'' in the tables.
Switching to the linear-decay schedule alleviates this issue in several tasks, but does not eliminate it entirely, indicating that these methods remain fragile under certain hyperparameter choices.
In contrast, SIM-MC and SA-MC yield valid results more consistently across the same sweep, and SA-MC maintains competitive or leading performance across datasets.

For clarity, Tables~\ref{tab:1} and \ref{tab:3} report results using OT-CFM under the original and linear-decay schedules, respectively.
Tables~\ref{tab:2} and \ref{tab:4} report the analogous results using CFM.
In all tables, ``w/o $g$'' denotes unconditional sampling without guidance, and reported values are mean $\pm$ standard deviation over evaluation runs.

In addition to accuracy and robustness, SA-MC is also more efficient in wall-clock time.
Figure~\ref{fig:offlin_RL_time} compares the time required to generate a 1{,}000-step trajectory on D4RL locomotion tasks.
Compared with the substantial overhead of DPS and LGD-MC relative to unconditional sampling, SA-MC incurs only a modest additional cost (similar to SIM-MC).
This efficiency becomes particularly important when scaling to large hyperparameter sweeps and extensive evaluations.

\end{document}